\theoremstyle{plain}
\newtheorem{thm}{\protect\theoremname}
\newtheorem{prop}[thm]{\protect\propositionname}
\providecommand{\factname}{Fact}
\providecommand{\propositionname}{Proposition}
\providecommand{\theoremname}{Theorem}
\begin{document}

%%%%%%%%%%%%%%%%%%%%%%%%%%%%%%%%%%%%%%%%%%%%%%
%%                                          %%
%% Enter the title of your article here     %%
%%                                          %%
%%%%%%%%%%%%%%%%%%%%%%%%%%%%%%%%%%%%%%%%%%%%%%
\title{Deep Learning as a Convex Paradigm of Computation: \\
Minimizing Circuit Size with ResNets}

\author{Arthur Jacot \\ Courant Institute, NYU \\ \texttt{arthur.jacot@nyu.edu}}

\maketitle
%\title{A sample article title with some additional note\thanksref{T1}}
%\runtitle{Deep Learning as a Convex Paradigm of Computation}
%\thankstext{T1}{A sample of additional note to the title.}

%%%%%%%%%%%%%%%%%%%%%%%%%%%%%%%%%%%%%%%%%%%%%%%
%% Only one address is permitted per author. %%
%% Only division, organization and e-mail is %%
%% included in the address.                  %%
%% Additional information such as            %%
%% identifying the corresponding author must %%
%% be included in in the Acknowledgments     %%
%% section if necessary.                     %%
%% ORCID can be inserted by command:         %%
%% \orcid{0000-0000-0000-0000}               %%
%%%%%%%%%%%%%%%%%%%%%%%%%%%%%%%%%%%%%%%%%%%%%%%
%%%%%%%%%%%%%%%%%%%%%%%%%%%%%%%%%%%%%%%%%%%%%%
%% Addresses                                %%
%%%%%%%%%%%%%%%%%%%%%%%%%%%%%%%%%%%%%%%%%%%%%%

\begin{abstract}
This paper argues that DNNs implement a computational Occam's razor
- finding the `simplest' algorithm that fits the data - and that this
could explain their incredible and wide-ranging success over more
traditional statistical methods. We start with the discovery that
the set of real-valued function $f$ that can be $\epsilon$-approximated
with a binary circuit of size at most $c\epsilon^{-\gamma}$ becomes
convex in the `Harder than Monte Carlo'
(HTMC) regime, when $\gamma>2$, allowing for the definition of a
HTMC norm on functions. In parallel one can define a complexity measure
on the parameters of a ResNets (a weighted $\ell_1$ norm of the
parameters), which induce a `ResNet norm' on functions. The HTMC and
ResNet norms can then be related by an almost matching sandwich bound.
Thus minimizing this ResNet norm is equivalent to finding a circuit
that fits the data with an almost minimal number of nodes (within
a power of 2 of being optimal). ResNets thus appear as
an alternative model for computation of real functions, better adapted
to the HTMC regime and its convexity.
\end{abstract}

\section{Introduction}

Technically speaking, no statistical model can be strictly better
than another, and each model can be optimal under certain assumptions.
However, it has been argued that there is a statistical model that
outmatches all others up to constants: finding the minimal
Kolmogorov complexity function, i.e. the program with the minimal
description length, that fits the data \citep{Solomonoff_1964_part1,Solomonoff_1964_part2}.
Such an ideal statistical model would be at least as good as any model
with short descriptions \citep{Hutter_04_uaibook} which basically
includes all models that have been described within a single scientific
paper. But this perfect statistical model appears unattainable because
we know that Kolmogorov complexity is undecidable in general, and
even less optimizable.

In this paper, we prove results that suggest that Residual Networks
(ResNets) training is implementing a weaker version of this ideal
model: greedily minimizing circuit complexity in the so-called Harder
than Monte Carlo (HTMC) regime. These simplifications can be summarized
as follows:
\begin{itemize}
\item Minimal Circuit Size Problem (MCSP): searching for an interpolating
circuit with minimal minimal number of operations is ``only'' NP.
Minimal circuit size interpolators share a lot of the statistical
power of minimal Kolmogorov complexity ones, in particular their ability
to learn hierarchical structures thanks to its compositionality. Roughly
speaking, the tasks where minimal circuit size interpolators are suboptimal
are those where the runtime of the interpolator is significantly larger
than its description length, or in other terms its effective number
of parameters.
\item Real-valued function computation: DNNs are real-valued, so it makes
sense to focus on the circuit complexity $C(f,\epsilon)$ of real-valued
functions which is the smallest circuit size required to approximate
$f$ within an $\epsilon$ error. A central quantity is the rate $\gamma>0$
of approximation (i.e. $C(f,\epsilon)=O(\epsilon^{-\gamma})$), which
also determines the rate of the generalization error, e.g. for the
MSE on $N$ datapoints, one has $R_{test}=O(N^{-\frac{2}{2+\gamma}})$.
\item Harder than Monte Carlo (HTMC) regime ($\gamma>2$): A surprising
discovery is that the set of functions that are approximable with
a rate $\gamma$ becomes convex when $\gamma>2$, which we call the
HTMC regime, thus allowing us to define a HTMC norm $\left\Vert f\right\Vert _{M^{\gamma}}$.
This paper focuses on the minimization of this norm. Empirical computations
of the test error rate of Large Language Models (LLMs) have identified
rates such as $N^{-0.095}$ \citep{kaplan2020scaling} which corresponds
to a rate $\gamma\approx18$ that is far into the HTMC regime.
\end{itemize}

\subsection{Related Works}

\textbf{Universal Prior:} The idea of searching for the interpolator
with minimal Kolmogorov complexity is actually 5 years older \citep{Solomonoff_1964_part1,solomonoff_1960_preliminary,Solomonoff_1964_part2}
than Kolmogorov complexity itself \citep{kolmogorov_1965_three_approaches_kolg_complexity}.
However, the uncomputability of this task has led this idea to be
considered more as a thought experiment than a concrete model \citep{donoho_02_Kolmogorov_sampler,Hutter_04_uaibook},
although some computable relaxations have been proposed \citep{veness_2011_MC_AIXI},
which can be interpreted as optimizing for a notion of complexity
that is somewhat closer to circuit complexity.

\textbf{Minimum Circuit Size Problem (MCSP):} The problem of minimizing
circuit size has been known to be NP for a long time, but it remains
unknown whether it is NP complete or not \citep{hitchcock_2015_NP_complete_MCSP,Ilango_2020_multidim_MCSP_NP_hard}.
The interest in the MCSP started in the 1950s in Russia as a typical
example of a problem that requires brute force search \citep{Trakhtenbrot_1984_history_perebor},
and later Leonid Levin actually wanted to prove NP-completeness of the
MCSP in his seminal work on P=NP \citep{Levin_webpage_hardness_of_search_MCSP}
but ultimately failed \citep{Levin_1973_P_NP}. To this day, ``no
known algorithm significantly improves over brute-force search''
\citep{Ilango_2023_MCSP_SAT}. The idea that there is a setting where
MCSP becomes a convex problem, and that neural networks can then be
used to (even greedily or approximately) solve it, is a stark departure
from previous approaches.

\textbf{Relating DNNs to Circuits:} The analogy between DNNs and circuits
is quite self-evident and has guided a lot of empirical research \citep{elhage2022superposition}.
More formally, the space of functions representable by DNNs with a
bounded width or number of non-zero parameters can be related to computational
spaces, e.g. the space of compositionally sparse functions \citep{poggio_2017_curse_of_dim_compositionality,Bauer_2019_compositional_learning_bounded_width,Schmidt-Hieber_2020_compositional_sparsity,poggio2024compositional,danhofer2025position}
which resemble computational graphs of Hölder functions where each
function has low input dimension. The ability of these spaces to represent essentially any algorithm have been used to guarantee
strong generalization bounds in other settings too \citep{chen_2023_diffusion_subspace_learning,oko2023diffusion}.
However, the number of non-zero parameters remains a discrete quantity
that is not clearly optimizable, so this line of work has generally
failed to capture the training dynamics.

\textbf{Parameter norm minimization:} Another source of inspiration
is the literature DNNs whose parameters have bounded $L_2$ norm, for example as a result of $L_2$ regularization. This type of analysis has been very successful
in simple models such as linear networks \citep{dai_2021_repres_cost_DLN,woodworth2019kernel},
or in shallow networks \citep{Barron_1993_UniversalAB,bach2017_F1_norm,weinan_2019_barron,Savarese_2019_repres_bounded_norm_shallow_ReLU_net_1D,Ongie_2020_repres_bounded_norm_shallow_ReLU_net},
where norms on functions have been identified that describe exactly
the space of functions that can be represented with a bounded $L_2$ norm of the parameters. In these simpler networks, the convexity of those norms in function space translates
into an absence of barriers between local and global minima in function
spaces \citep{achour2021,jacot-2021-DLN-Saddle}, and more generally
played an important role in the development of convergence proofs
\citep{Chizat2018}.

For deep fully-connected networks it is instead a notion of rank (on
non-linear functions) which captures the space of representable functions
\citep{jacot_2022_BN_rank,jacot_2023_bottleneck2}, at least at first
order. Alternatively, by switching to products of Froebenius norms of the weights matrices, one obtains a "Neural Hilbert ladder" \cite{chen_2024_neural_hilbert_ladder} and resulting in function spaces that are convex up to a constant that blows up exponentially in depth (see Section A.3 in \cite{chen_2024_neural_hilbert_ladder}). This suggests that taking
large depth limits induces compositionality at the cost of losing convexity,
as illustrated by the resulting norm vs rank notions, which did not
bode well for the possibility of proving convergence results. Thankfully,
this present paper shows that (approximate) sub-additivity under both addition and composition is actually possible, though it requires switching to another notion of parameter norm.

\textbf{Generalization Bounds:} Looking for other notions of parameter
complexity that avoid the pathologies of the $L_{2}$-norm, I have
taken inspiration from the diverse literature on generalization bounds
for DNNs, based on different notions of parameter which naturally
avoid the issues of the $L_{2}$-norm.

A first line of work, generally based on Rademacher complexity, is
based on taking products rather than sums of Frobenius norms over
the layers \citep{chen_2024_neural_hilbert_ladder,neyshabur2015_path_norm}
or the related path norms \citep{barron2019_path_norm_complexity,weinan_2019_barron}.
These can exhibit different degrees of approximate convexity, but
they are generally sub-multiplicative rather than sub-additive under
composition, thus leading to loose bounds on ``deep'' computational
graphs.

Another line of work relies on covering number arguments to achieve
some form of compositional sub-additivity \citep{bartlett_2017_composition_generalization,wei_2019_generalization_without_Lipschitz,hsu_2021_generalization_distillation},
leading to almost tight rates on compositions of Sobolev functions
\citep{jacot2024_covering_generalization}. A key idea is that the
contribution of a connection to the parameter norm should be weighted
by how much this connection affects the outputs, which is one of the
key ideas that help obtain a much stronger approximate convexity in
this paper.

Still, in practice these generalization bounds tend to be very loose
\citep{jiang_2019_fantastic_generalization} and it is unclear what
makes them suboptimal. This paper started from the realization that
these covering number arguments can be translated into circuit size
upper bounds, and then providing an almost matching lower bound which
could be key in understanding how to improve these types of bounds
and compare them.

\section{Setup and Main Results}

The main result of this paper is a sandwich bound that relates the
so-called HTMC norm $\left\Vert f\right\Vert _{M^{\gamma}}$ and ResNet
(pseudo-)norm $\left\Vert f\right\Vert _{R^{\omega}}$, which measure
the computability of $f$ in with respect to two radically different
notions of computation: the first being the traditional notion of
``real-to-bin'' computation where a binary circuit operates on a
binary representation of the inputs, while the second proposes a new
notion of computability based on being approximable by a ResNet $f_{\theta}$
with a small parameter complexity $R(\theta)$. The first is a discrete
notion of computation while the second is a continuous one and our
main result shows that these two paradigm can be almost
unified, which implies that DNN optimization is almost equivalent to the MCSP.

\subsection{Real-to-bin computation} We consider binary circuits $C$
made up of binary $\text{AND},\text{OR}$ and $\text{NOT}$ nodes.
We write $\left|C\right|$ for the number of nodes, and we write
\[
\left|C\right|_{p}=\min_{n_{1},\dots,n_{L}}\left(\sum_{\ell=1}^{L}n_{\ell}^{p}\right)^{\frac{1}{p}}
\]
where the minimum is over all ways in which to organize the nodes
of the circuit into $L$ layers, each with $n_{1},\dots,n_{L}$ nodes
(so that $\left|C\right|=\sum n_{\ell}=\left|C\right|_{p=1}$). We
will be only interested in the case $p=\frac{2}{3}$, which can be
interpreted as favoring parallel circuit that can be organized in
only a few layers with many nodes per layer. We have the bounds $\left|C\right|\leq\left|C\right|_{\frac{2}{3}}\leq L^{\frac{1}{3}}\left|C\right|$,
for $L$ the optimal depth (since $(\sum n_{\ell})^{\frac{2}{3}}\leq\sum n_{\ell}^{\frac{2}{3}}$
and by concavity $\frac{1}{L}\sum n_{\ell}^{\frac{2}{3}}\leq\left(\frac{1}{L}\sum n_{\ell}\right)^{\frac{2}{3}}$).

A real-valued function $f:\mathbb{R}^{d_{in}}\to\mathbb{R}^{d_{out}}$
cannot be computed exactly, but it can be approximated by a circuit
which takes the binary representation of the inputs $x$ and returns
a binary representation of the outputs. Given a norm $\left\Vert \cdot\right\Vert $
on functions supported over a bounded hyper-rectangle (e.g. the $L_{2}(\pi)$
norm $\left\Vert \cdot\right\Vert _{\pi}$ over a distribution $\pi$
or the $L_{\infty}$ norm $\left\Vert \cdot\right\Vert _{\infty}$),
we define 
\[
C_{p}(f,\epsilon;\left\Vert \cdot\right\Vert )=\min_{C:\left\Vert C-f\right\Vert \leq\epsilon}\left|C\right|_{p}
\]
where the minimum is over all circuits that $\epsilon$-approximate
$f$. We will mainly focus on $C_{p=1}(f,\epsilon; L_2(\pi))$ which we will write as $C(f,\epsilon)$, but other cases will appear as well, such as $C_{p=\frac{2}{3}}(f,\epsilon;L_{\infty})$.

We can classify computable real-valued functions in terms of their
rate $\gamma$ and prefactor $c$ of computability: $C_{p}(f,\epsilon;\left\Vert \cdot\right\Vert )\leq c\epsilon^{-\gamma}$.
For any $\gamma$, we define 
\[
\left\Vert f\right\Vert _{M_{p}^{\gamma}(\left\Vert \cdot\right\Vert )}^{\gamma}=\max_{\epsilon}\epsilon^{\gamma}C_{p}(f,\epsilon;\left\Vert \cdot\right\Vert )
\]
which is the smallest value such that $C_{p}(f,\epsilon;\left\Vert \cdot\right\Vert )\leq\left\Vert f\right\Vert _{M_{p}^{\gamma}(\left\Vert \cdot\right\Vert )}^{\gamma}\epsilon^{-\gamma}$.
For simplicity, we will denote $M^{\gamma}$ for $M_{p=1}^{\gamma}(L_2(\pi))$. 

The homogeneity of $\left\Vert f\right\Vert _{M_{p}^{\gamma}(\left\Vert \cdot\right\Vert )}$
follows directly from its definition, however it is not subadditive
in general, thus failing to be a norm. Surprisingly, when $\gamma>2$,
which we call the the Harder than Monte Carlo (HTMC) regime, $\left\Vert f\right\Vert _{M^{\gamma}}$
becomes convex up to a constant:
\begin{thm}
For $\gamma>2$, there is a constant $c_{\gamma}$ such that for all
$m\geq1$, $\left\Vert \sum_{i=1}^{m}f_{i}\right\Vert _{M^{\gamma}}\leq c_{\gamma}\sum_{i=1}^{m}\left\Vert f_{i}\right\Vert _{M^{\gamma}}$.
\end{thm}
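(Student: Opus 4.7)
\textbf{Strategy: reduce to $L^{2}$-Pythagorean aggregation of errors.} Write $M_i := \|f_i\|_{H^{\gamma}}$. By the definition of the HTMC norm, for any $\epsilon_i > 0$ there is a deterministic circuit $C_i$ with $|C_i| \le M_i^{\gamma}\,\epsilon_i^{-\gamma}$ and $\|C_i - f_i\|_{L^{2}(\pi)} \le \epsilon_i$, so the natural candidate $\sum_i C_i$ for $F=\sum_i f_i$ has size $\sum_i M_i^{\gamma}\epsilon_i^{-\gamma}$. The triangle inequality gives $\|\sum_i C_i - F\|_{L^{2}(\pi)} \le \sum_i \epsilon_i$; Lagrangian minimization of $\sum_i M_i^{\gamma}\epsilon_i^{-\gamma}$ under the linear constraint $\sum_i \epsilon_i \le \epsilon$ yields a total size of order $\epsilon^{-\gamma}\,\|M\|_{\ell^{p}}^{\gamma}$ with $p = \gamma/(\gamma+1) < 1$, which in the worst case scales like $m^{1/\gamma}(\sum_i M_i)^{\gamma}\epsilon^{-\gamma}$ and is off by a factor $m^{1/\gamma}$. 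If instead the constraint were Pythagorean, $\sum_i \epsilon_i^{2} \le \epsilon^{2}$, the optimum would shift to $\epsilon_i \propto M_i^{\gamma/(\gamma+2)}$ and give size $\epsilon^{-\gamma}\,\|M\|_{\ell^{q}}^{\gamma}$ with $q = 2\gamma/(\gamma+2)$. The hypothesis $\gamma > 2$ is exactly equivalent to $q > 1$, and for $q \ge 1$ one has $\|M\|_{\ell^{q}} \le \|M\|_{\ell^{1}} = \sum_i M_i$, producing the desired bound $c_{\gamma}^{\gamma}(\sum_i M_i)^{\gamma}\epsilon^{-\gamma}$. The whole proof thus reduces to manufacturing Pythagorean aggregation of the errors $C_i - f_i$.

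\textbf{Realizing Pythagoras via independent randomization.} The plan is to build, for each $f_i$, a family of deterministic circuits $\tilde C_i(x,\omega_i)$ of common size $N_i \lesssim M_i^{\gamma}\epsilon_i^{-\gamma}$, parameterized by independent random bits $\omega_i$, with $\mathbb{E}_{\omega_i}[\tilde C_i(x)] = f_i(x)$ and $\int \mathrm{Var}_{\omega_i}(\tilde C_i(x))\,d\pi(x) \le \epsilon_i^{2}$. Independence across $i$ then makes the cross-covariances $\langle \tilde C_i - f_i,\, \tilde C_j - f_j\rangle_{L^{2}(\pi)}$ vanish in expectation, so
\[
\mathbb{E}_{\omega}\,\Bigl\|\sum_i \tilde C_i - F\Bigr\|_{L^{2}(\pi)}^{2} \;\le\; \sum_i \epsilon_i^{2} \;\le\; \epsilon^{2}.
\]
A Markov step then extracts a realization $\omega^{\star}$ yielding a deterministic circuit of size $\sum_i N_i$ with $L^{2}(\pi)$-error at most $\sqrt{2}\,\epsilon$, and combining this with the exponent computation of the previous paragraph proves the theorem (with $c_\gamma$ blowing up only as $\gamma\downarrow 2$, through the $(q-1)^{-1}$ in the Lagrangian constants).

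\textbf{Main obstacle: producing unbiased $\tilde C_i$ inside the circuit model.} The delicate step is the unbiased construction. Debiasing the telescoping decomposition $f_i = C_i^{(0)} + \sum_k(C_i^{(k+1)} - C_i^{(k)})$ via a randomly truncated multilevel / Russian-roulette Monte Carlo requires the level distribution $P(K=k)$ to decay slower than $2^{-2k}$ (for finite variance) and faster than $2^{-\gamma k}$ (for finite expected circuit size), and these two requirements are mutually incompatible exactly when $\gamma > 2$. The fix I expect must therefore work inside a fixed deterministic size budget: take a very fine deterministic base $C_i^{(B)}$ and inject variance by randomly rounding its output bits, so that $\mathbb{E}[\tilde C_i] = C_i^{(B)}$ and the variance is the prescribed $\epsilon_i^{2}$; the residual deterministic bias $\|C_i^{(B)} - f_i\|_{L^{2}(\pi)}$ then must either be made negligibly small at a cost the same circuit budget can absorb, or be shown to aggregate across $i$ in $L^{2}$ rather than in $L^{1}$. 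Carrying out this bias-versus-variance bookkeeping tightly enough that the residual biases do not resurrect the spurious $m^{1/\gamma}$ factor is the hardest part of the proof.
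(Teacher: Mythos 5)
Your first two paragraphs correctly identify the mechanism that makes the theorem true -- errors must aggregate in $L^{2}$ (variances of independent randomizations add) rather than in $L^{1}$, and the exponent arithmetic showing that the resulting $\ell^{q}$ norm with $q=2\gamma/(\gamma+2)$ is dominated by $\ell^{1}$ precisely when $\gamma>2$ -- and this is indeed the same mechanism as the paper's multilevel Monte Carlo proof, including the probabilistic-method extraction of a single good realization. However, the proposal stalls exactly at the step you flag as "the hardest part," and the two routes you sketch for it do not close the gap. Insisting on exact unbiasedness $\mathbb{E}[\tilde C_i]=f_i$ is, as you yourself compute, impossible for $\gamma>2$ (no level distribution gives both finite variance and finite expected size), and the fallback of randomly rounding the output bits of a fine deterministic base circuit $C_i^{(B)}$ leaves open precisely the fatal issue: the $m$ residual deterministic biases $C_i^{(B)}-f_i$ still add in $L^{1}$, and driving each of them down to $o(\epsilon/m)$ costs circuit size that reinstates the spurious $m$-dependence. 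Your own back-of-envelope in paragraph one shows that any scheme in which every $f_i$ gets its own deterministic finest-level approximation pays $(\sum_j M_j)^{\gamma}\epsilon^{-\gamma}$ \emph{per summand}.

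The missing idea is to randomize not only within each $f_i$ but over \emph{which summand to refine}: after normalizing $\sum_j M_j=1$, sample at each level $k$ a batch of $n_k$ i.i.d.\ functions, each equal to $M_j^{-1}f_j$ with probability $M_j$, and form the truncated (biased!) MLMC estimator $\tilde f=\sum_{k\le k_{max}}n_k^{-1}\sum_i (A_{k,i}-B_{k,i})$, where $A_{k,i},B_{k,i}$ are $2^{-k}$- and $2^{-k+1}$-approximating circuits of the sampled normalized function. Because each normalized function has HTMC norm $1$, level $k$ costs $n_k 2^{\gamma k}$ regardless of $m$; the bias is that of a \emph{single} level-$k_{max}$ approximation, $\|\mathbb{E}\tilde f-f\|_\pi\le 2^{-k_{max}}$, rather than a sum of $m$ biases; and the variance is $\sum_k O(n_k^{-1}2^{-2k})$. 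Choosing $2^{-k_{max}}\approx\epsilon/\sqrt2$ and $n_k\propto \epsilon^{-(\gamma+2)/2}2^{-(\gamma+2)k/2}$ balances bias against variance and gives total size $O(\epsilon^{-\gamma})$ with a constant depending only on $\gamma$ (blowing up as $\gamma\downarrow 2$ through $(1-2^{-(\gamma-2)/2})^{-1}$, matching your $(q-1)^{-1}$ prediction). In short: accept a small controlled bias instead of demanding unbiasedness, and use importance sampling over the summands so that the bias does not scale with $m$; without that second ingredient the argument as proposed does not go through.
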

\begin{proof}[Sketch of proof] This follows quite directly from Multi-level
Monte Carlo (MLMC) \citep{giles2015_MLMC}. See Theorem \ref{thm:convexity_HTMC} in Section \ref{subsec:Basic-properties-HTMC} for the complete proof.
\end{proof}

This does not quite make $M^{\gamma}$ a norm, but there is a norm
$Conv(M^{\gamma})$ (obtained by replacing the unit ball of $M^{\gamma}$
by its convex hull) such that $c_{\gamma}^{-1}\left\Vert f\right\Vert _{M^{\gamma}}\leq\left\Vert f\right\Vert _{Conv(M^{\gamma})}\leq\left\Vert f\right\Vert _{M^{\gamma}}$,
thanks to the fact that the constant $c_{\gamma}$ does not depend
on $m$.

The convexity of the HTMC norm suggests that one could use convex optimization
methods to minimize circuit size, but it remains unclear how to
optimize over the infinite sequence of circuits $A_{1},A_{2},\dots$ that
underlies the definition of the HTMC norm. The issue is that the
usual notion of real-to-bin computation is fundamentally discrete,
and thus ill adapted to leverage the convex/continuous structure of
computation in the HTMC regime. This could be solved by switching
to a ``fully real'' notion of computation: computational graphs
with nodes that take real values and apply continuous operations rather
than binary ones, and with connections between nodes that take any
value instead of being $0$ or $1$ only. This pretty much describes
a neural network, and so we turn our attention to ResNets and how
they can be connected to the HTMC norm.

\subsection{ResNet computation} For an input $x\in\mathbb{R}^{d_{in}}$, we define the activations $\alpha_{0}(x),\dots,\alpha_{L}(x)\in\mathbb{R}^{d+1}$
of a Residual Network (ResNet) of depth $L$ by first fixing their last coordinate to $\alpha_{\ell,d+1}(x)=1$ (this will allow us to combine the usual weight matrix and bias vector into a single weight matrix) and then defining the first $d$ coordinates $\alpha_{\ell,1:d}(x)$ recursively:
\begin{align*}
\alpha_{0,1:d}(x) & =W_{in}x\\
\alpha_{\ell,1:d}(x) & =\alpha_{\ell-1,1:d}(x)+W_{\ell}\sigma\left(V_{\ell}\alpha_{\ell-1}(x)\right),
\end{align*}
where $\sigma(x)=\max{x,0}$ is the ReLU, and the matrices $W_\ell,V_\ell$ are of dimensions $d\times w$ and $w \times d+1$ respectively. We call $w$ the width of the network and $d$ the hidden dimension. Finally, the outputs are defined as $f_{\theta}(x)=W_{out}\alpha_{L}(x)$,
where $\theta$ is the vector of parameters, obtained by concatenating
the entries of $W_{in},W_{out}$ and all $W_{\ell},V_{\ell}$s.
We will focus on inputs contained within a hyper-rectangle with sides
$s_{1},\dots,s_{d_{in}}$ which we combine into a $d_{in}\times d_{in}$
diagonal matrix $S$.

Our strategy is to define a notion of parameter complexity $R(\theta)$
that describes how ``prunable'' the network $f_{\theta}$ in the sense
that one can remove all but $O(R(\theta)^{2}\epsilon^{-2})$ connections
of the network without changing the outputs by more than an $\epsilon$.
This in turns implies that the function $f_{\theta}$ can be approximated
by a circuit of size $O(R(\theta)^{2}\epsilon^{-2})$ (up to $\log\epsilon^{-1}$
terms) and therefore implies a bound of the type ``$\left\Vert f_{\theta}\right\Vert _{M^{\gamma=2}}\apprle R(\theta)$''.

Before we define $R(\theta)$, we give a simple heuristic argument
that leads to another $R_{lin}(\theta)$ that is both simpler and more
easily interpretable. Sadly, to obtain rigorous results we need to
use the more complex $R(\theta)$. The intuition goes as follows:
if we prune each parameter $\theta_{i}$ randomly, by multiplying
it with independent rescaled Poisson random variables $\tilde{\theta}_{i}=\theta_{i}\frac{P_{i}}{\lambda_{i}}$
where $P_{i}\sim Poisson(\lambda_{i})$, then by a simple Taylor approximation,
we obtain
\[
\mathbb{E}\left\Vert f_{\theta}-f_{\tilde{\theta}}\right\Vert _{\pi}^{2}\approx\sum\lambda_{i}^{-1}\theta_{i}^{2}\left\Vert \partial_{\theta_{i}}f_{\theta}\right\Vert _{\pi}^{2}
\]
with an expected number of non-zero parameters of $\mathbb{E}\left\Vert \tilde{\theta}\right\Vert _{0}=\sum1-e^{-\lambda_{i}}\leq\sum\lambda_{i}$.
With the optimal choice $\lambda_{i}$s, one obtains an error
$\epsilon$ with $\mathbb{E}\left\Vert \theta\right\Vert _{0}\leq R_{lin}(\theta)^{2}\epsilon^{-2}$
for 
\[R_{lin}(\theta)=\sum\left|\theta_{i}\right|\left\Vert \partial_{\theta_{i}}f_{\theta}\right\Vert _{\pi}. \]

But to control the other terms in the Taylor expansion, we will use
the following complexity measure:
\begin{align*}
R(\theta)^{\frac{2}{3}}= & \min_{D_{\ell}}\sum_{\ell=1}^{L}\left\Vert D_{\ell}\left|W_{\ell}\right|\left|V_{\ell}\right|C_{\ell-1}\right\Vert _{1}^{\frac{2}{3}}+\left\Vert D_{0}W_{in}S\right\Vert _{1}^{\frac{2}{3}}+\left\Vert W_{out}C_{L}\right\Vert _{1}^{\frac{2}{3}}
\end{align*}
where $C_{\ell}$ is diagonal with $C_{\ell,ii}=\left\Vert \alpha_{\ell,i}\right\Vert _{\infty}$,
and the minimum is over all diagonal $D_{\ell}$s such that $Lip((\alpha_{\ell}\to f_{\theta})\circ D_{\ell}^{-1})\leq1$
(i.e. the map from the $\ell$-th layer activations $\alpha_{\ell}$
to the outputs precomposed with $D_{\ell}^{-1}$ is $1$-Lipschitz),
and we apply the absolute value entrywise to the matrices $W_{\ell},V_{\ell}$.

One can verify that $2R(\theta)$ is an upper bound on the `ideal' parameter
complexity $R_{lin}(\theta)$. Note that both $R(\theta)$ and $R_{lin}(\theta)$
are weighted $\ell_{1}$ norms of the parameters, but since the weights
depend on the parameter themselves, $R(\theta)$ and $R'(\theta)$
are generally not convex in parameter space, hence why we call them
`complexities' and do not use a norm notation.

Convexity does reveal itself in the function space, mirroring the
HTMC convexity. We define the $\omega$-ResNet norm $\left\Vert f\right\Vert _{R^{\omega}}$
that captures functions $f$ that can be $\epsilon$-approximated
by a ResNet with a complexity measure $R(\theta)$ of order $\epsilon^{1-\omega}$:
\[
\left\Vert f\right\Vert _{R^{\omega}(\left\Vert \cdot\right\Vert )}^{\omega}=\max_{\epsilon}\epsilon^{\omega-1}\min_{\theta:\left\Vert f_{\theta}-f\right\Vert \leq\epsilon}R(\theta),
\]
which implies that for all $\epsilon$, there are parameters $\theta_{\epsilon}$
such that $\left\Vert f_{\theta_{\epsilon}}-f\right\Vert \leq\epsilon$
and $R(\theta_{\epsilon})\leq\left\Vert f\right\Vert _{R^{\omega}(\left\Vert \cdot\right\Vert )}^{\omega}\epsilon^{1-\omega}$.
The choice $\omega=1$ corresponds to functions that can be represented
exactly with a finite complexity measure $R(\theta)$, and we will sometimes use the notation $\|\cdot\|_R = \|\cdot\|_{R^{\omega=1}}$. One can easily check that $\left\Vert f\right\Vert _{R_{lin}^{\omega}}$ is convex because it is subadditive under putting two networks in parallel,
while $\left\Vert f\right\Vert _{R^{\omega}}$ only satisfies
\[\left\Vert f+g\right\Vert _{R^{\omega}}\leq\left(\left\Vert f\right\Vert _{R^\omega}^{\frac{2\omega}{1+2\omega}}+\left\Vert g\right\Vert _{R^\omega}^{\frac{2\omega}{1+2\omega}}\right)^{\frac{1+2\omega}{2\omega}}.\]

One of the goal of this paper is to prove that the circuit size required to approximate a ResNet can be bounded in terms of the parameter complexity measure $R(\theta)$ independently from the size of the network. For this reason, we will generally assume that the depth $L$, width $w$, and hidden dimension $d$ are arbitrarily large, and obtain results that are independent of these quantities (as long as they are large enough). For example, in the definition of the $R^\omega$ norm, the minimization is over all possible network sizes and parameters.

\subsection{Sandwich bound} We are now able to state our main result:
\begin{thm}\label{thm:main_sandwich}
For all $\omega>1$ and $\delta>0$, one has (up to constants that
are polynomial in the input and output dimensions):
\[
\left\Vert f\right\Vert _{M^{\gamma=2\omega+\delta}(L_2(\pi))}+\left\Vert f\right\Vert _{C^{\alpha=\frac{1}{\omega}}}\lesssim\left\Vert f\right\Vert _{R^{\omega}(L_\infty)}\lesssim\left\Vert f\right\Vert _{M_{p=\frac{2}{3}}^{\gamma=\omega-\delta}(L_{\infty})}+\left\Vert f\right\Vert _{C^{\alpha=\frac{1}{\omega-\delta}}},
\]
where $\left\Vert \cdot\right\Vert _{C^{\alpha}}$ is the Hölder (semi-)norm
for $\alpha\in(0,1]$, and where $\lesssim$ denotes an inequality up to a constant that depends $d_{in},d_{out},\gamma,\alpha,\delta$.
\end{thm}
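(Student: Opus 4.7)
\emph{Strategy.} The sandwich is really two independent statements, each bridging a different notion of computation. For the lower bound on $\|f\|_{R^\omega}$ (i.e. $\|f\|_{H^{2\omega+\delta}}+\|f\|_{C^{1/\omega}}\lesssim\|f\|_{R^\omega}$) the plan is to convert a ResNet into a binary circuit via pruning and arithmetic gadgets; for the upper bound the plan is to simulate a parallel binary circuit with a ReLU ResNet, using Hölder continuity to absorb the unavoidable discontinuity of bit-extraction. The factor of $2$ between the rates $2\omega$ and $\omega$ is the fingerprint of the quadratic pruning formula $\mathbb E\|f_\theta-f_{\tilde\theta}\|_\pi^2\lesssim\sum\lambda_i^{-1}\theta_i^2\|\partial_{\theta_i}f_\theta\|_\pi^2$ that appears in the $R_{lin}$ heuristic of the paper.

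\emph{From $R^\omega$ to $H^{2\omega+\delta}$ and $C^{1/\omega}$.} Fix $\epsilon>0$ and choose $\theta_\epsilon$ with $\|f_{\theta_\epsilon}-f\|_\pi\le\epsilon$ and $R(\theta_\epsilon)\lesssim\|f\|_{R^\omega}^\omega\epsilon^{1-\omega}$. First, use the Poisson pruning heuristic (but with the $C_{\ell-1}$ and $D_\ell$ weights inside $R(\theta)$, which control all higher-order Taylor terms and not only the linear one) to produce a sparse vector $\tilde\theta$ with $\tilde O\!\left(R(\theta_\epsilon)^2\epsilon^{-2}\right)$ nonzeros and $\|f_{\tilde\theta}-f_{\theta_\epsilon}\|_\pi\le\epsilon$. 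Second, round each surviving weight and each ReLU gate to $O(\log1/\epsilon)$ bits and implement the resulting computation with standard binary adders/multipliers. Combining these two steps yields a circuit of size $\lesssim\|f\|_{R^\omega}^{2\omega}\epsilon^{-2\omega}\cdot\mathrm{polylog}(1/\epsilon)$ that $O(\epsilon)$-approximates $f$, and absorbing the polylog into $\epsilon^{-\delta}$ gives $\|f\|_{H^{2\omega+\delta}}\lesssim\|f\|_{R^\omega}$. For the Hölder part, the telescoping structure of $R(\theta)$ (with the $D_\ell$ playing the role of tail Lipschitz constants) yields $\mathrm{Lip}(f_{\theta_\epsilon})\lesssim R(\theta_\epsilon)$, and the routine bound
\[
|f(x)-f(y)|\le 2\epsilon+\mathrm{Lip}(f_{\theta_\epsilon})|x-y|\lesssim\epsilon+\|f\|_{R^\omega}^\omega\epsilon^{1-\omega}|x-y|,
\]
optimized in $\epsilon$, gives Hölder exponent $\alpha=1/\omega$.

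\emph{From $H^{\omega-\delta}_{2/3}(L_\infty)$ and $C^{1/(\omega-\delta)}$ to $R^\omega$.} Fix $\epsilon$ and let $C_\epsilon$ be a layered circuit with sizes $n_1,\dots,n_L$ satisfying $|C_\epsilon|_{2/3}\lesssim\|f\|_{H^{\omega-\delta}_{2/3}(L_\infty)}^{\omega-\delta}\epsilon^{-(\omega-\delta)}$ and $\|C_\epsilon-f\|_\infty\le\epsilon/2$. Build a ResNet $f_\theta$ in three stages: an input encoder producing $O(\log1/\epsilon)$ bits per coordinate, whose unavoidable discontinuity creates an input-side error of order $\epsilon^{1/(\omega-\delta)}$ that the Hölder hypothesis turns into an $O(\epsilon)$ output error; one residual block per circuit layer, of width $\asymp n_\ell$, whose fixed-size ReLU gadgets realize each AND/OR/NOT gate; and a small decoder converting the output bits back to a real value. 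The key remaining inequality to verify is $R(\theta)^{2/3}\lesssim\sum_\ell n_\ell^{2/3}=|C_\epsilon|_{2/3}^{2/3}$, which combined with the bound on $|C_\epsilon|_{2/3}$ yields the desired $R(\theta)\lesssim\epsilon^{1-\omega+\delta}(\|f\|_{H^{\omega-\delta}_{2/3}(L_\infty)}+\|f\|_{C^{1/(\omega-\delta)}})$ and hence the upper bound on $\|f\|_{R^\omega}$.

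\emph{Main obstacle.} The most delicate step is this last complexity accounting, and in particular the simultaneous choice of the diagonal matrices $D_\ell$ inside $R(\theta)$. One must pick them so that $\|D_\ell|W_\ell||V_\ell|C_{\ell-1}\|_1$ stays proportional to $n_\ell$ while the Lipschitz constraint $\mathrm{Lip}((\alpha_\ell\mapsto f_\theta)\circ D_\ell^{-1})\le 1$ remains satisfied — even though the tail of the simulation is only $1$-Lipschitz in binary coordinates but is potentially very sensitive in the ambient real metric near bit boundaries. Aligning this rescaling freedom with the parallel-width weighting encoded by $|C|_{2/3}$ is what forces the particular exponent $2/3$. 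A secondary technical burden is extending the pruning bound from $R_{lin}$ to the full $R(\theta)$ so that it controls every term of the Taylor expansion and not only the linear one; this is precisely what lets the other direction reach the quadratic pruning rate $\epsilon^{-2}$, without which the rate $2\omega$ on the HTMC side degenerates.
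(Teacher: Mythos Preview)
Your lower bound is sound. The paper presents it differently --- rather than explicit Poisson pruning followed by bit-rounding, it decomposes $f_\theta$ layerwise and neuronwise and invokes the HTMC convexity (Theorem~\ref{thm:convexity_HTMC}) and composition bound (Proposition~\ref{prop:composition_HTMC}) abstractly --- but then remarks that the same result follows from the pruning route you sketch. Your H\"older argument is verbatim Proposition~\ref{prop:properties_ResNet_norm}(2).

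The upper bound, however, has two genuine gaps, and the obstacle you flag (the choice of $D_\ell$) is not the binding one.

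\emph{The encoder does not compose correctly with the circuit.} A ReLU encoder is continuous, so near a bit boundary it outputs values strictly between $0$ and $1$. Feeding such fuzzy bits into min/max gadgets does not return anything close to either neighboring circuit value: e.g.\ two inputs simultaneously transitioning $(0\!\to\!1)$ and $(1\!\to\!0)$ both sit at $\tfrac12$, giving $\min(\tfrac12,\tfrac12)=\tfrac12$, whereas both endpoint AND values are $0$. H\"older continuity of $f$ only controls $|f(x)-f(v)|$, not $|f_\theta(x)-C(v)|$, so your claim that ``the H\"older hypothesis turns the input-side error into an $O(\epsilon)$ output error'' fails. The paper's fix is the Tetrakis encoder (Propositions~\ref{prop:ResNet_repr_circuit}--\ref{prop:TK_representation}): it computes the continuous piecewise-linear map $x\mapsto(p_i\,Bin(v_i))_{i=0}^{d_{in}}$ to \emph{weighted} vertex representations, then exploits the positive homogeneity of min/max so that the circuit block returns $p_i\,C(Bin(v_i))$ exactly; summing gives the interpolant $\sum_ip_iC(v_i)$, for which H\"older continuity does control $|f(x)-\sum_ip_if(v_i)|$.

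\emph{A single-level construction gives the wrong rate.} Even with a correct encoder, resolving the input to precision $2^{-M}$ with $M\asymp(\omega-\delta)\log_2(1/\epsilon)$ forces $R(\theta)\gtrsim 2^M\asymp\epsilon^{-(\omega-\delta)}$: the Tetrakis function $TK_M[C]$ is piecewise linear on simplices of diameter $2^{-M}$ and hence $\asymp 2^M$-Lipschitz, and $R(\theta)\geq Lip(f_\theta)$ by the argument in Proposition~\ref{prop:properties_ResNet_norm}(2). (This is the $d_{in}\sqrt{M}\,2^M$ term in Proposition~\ref{prop:TK_representation}, which your accounting $R(\theta)^{2/3}\lesssim\sum_\ell n_\ell^{2/3}$ omits entirely.) Then $\epsilon^{\omega-1}R(\theta)\gtrsim\epsilon^{\delta-1}\to\infty$ for every $\delta<1$, so $\|f\|_{R^\omega}$ is not bounded. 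The paper circumvents this via the telescopic decomposition of Proposition~\ref{prop:TK_decomposition}, writing $f\approx\sum_k 2^{-k+3}TK_{M(k)}[C_k]$: by homogeneity of $R$ the prefactor $2^{-k}$ scales down the level-$k$ encoder cost to $2^{-k}2^{M(k)}\asymp 2^{(\omega-\delta-1)k}$, and summing over $k$ recovers $R\lesssim\epsilon^{1-\omega+\delta}$ as required.
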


\begin{proof}[Structure of the proof] The lower bound is a pruning bound, where the majority of the weights
of the ResNet are removed to obtain a small circuit, and the upper
bound is a construction bound, where an infinite sequence of circuits
that approximate a function $f$ are combined into a single ResNet. The LHS follows from Theorem \ref{thm:LHS_sandwich} from Section \ref{sec:pruning_bound} and point 2 of Proposition \ref{prop:properties_ResNet_norm} from Section \ref{subsec:Basic-properties-ResNet}. The RHS follows from Theorem \ref{thm:RHS_sandwich} in  Section \ref{subsec:construction_bound}.
\end{proof}

The convexity of the HTMC regime hints at the potential to use of convex
optimization to solve the MCSP, but it is unclear how one could differentiate
the HTMC norm because it is defined in terms of a discrete structure:
infinite sequences of binary circuits. In contrast the linearized
complexity measure $R_{lin}(\theta)$ is both differentiable and convex
in function space, which could be leveraged to prove global convergence
of gradient descent (in the infinite width limit). The $R(\theta)$
norm is less practically differentiable because of its reliance on
Lipschitz constants and $L_{\infty}$-norms off the activations, it
is also not exactly convex (though it could easily be ``convexified''
by taking ensembles of ResNets). A future goal is to find a differentiable, convex
complexity measure that admits a similar sandwich bound, and use it to prove global convergence of gradient descent to an approximate
solution of the MCSP, opening the door to a whole new family of convex
algorithms for this fundamental problem.

The quality of this approximate solution directly depends on the tightness of the sandwich bound. The $\delta$ reflects the presence of logarithmic terms,
some of which could be removed, and the $p=\frac{2}{3}$ and $L_{\infty}$
norm on the RHS are likely artifacts of the current proof and/or of
the definition of $R(\theta)$. There are however two distinctions that might be representative of something more fundamental:

\emph{Hölder continuity:} The presence of the Hölder-continuity norm
on both sides implies that the $R$-norm captures a mix of computability
and regularity. This is interesting, because especially in high-dimension,
these two notions are essentially ``orthogonal'': there are computable
functions, such as the heavy-side, that are $\gamma$-computable for
all $\gamma>0$ but not Hölder, and while all Hölder functions are
computable, the worst-case and typical rate suffers from the curse
of dimensionality because the number of equal frequency Fourier modes is exponential
in $d_{in}$ (computing only the lowest Fourier frequencies,
one obtains $\left\Vert f\right\Vert _{M^{\gamma=\frac{d_{in}}{\alpha}}}\lesssim\left\Vert f\right\Vert _{C^{\alpha}}$). The HTMC unit ball is also much ``smaller'' than the Hölder unit ball in high dimension, as illustrated by the fact that uniform generalization bound over these balls are of order $N^{-\frac{2}{2+\gamma}}$ and $N^{-\frac{2\alpha}{2\alpha+d_{in}}}$ respectively for the Mean squared Error (MSE), where only the second one suffers from the curse of dimensionality.

\emph{HTMC rate gap:} Up to $\delta$ terms, the HTMC rate of the
LFS is $\gamma=2\omega$ while the RHS is $\gamma=\omega$, which
implies that if one translates a circuit into a ResNet and back into
a circuit, the number nodes can be squared in the worst case. This
also implies that if one minimizes $R(\theta)$ as a proxy for the
circuit size, one can only guarantee recovery of a circuit whose size
is at most the square of the optimal one. A possible explanation for
this gap is that while the Hölder continuity guarantees regularity
of the map from input to output, the complexity measure $R(\theta)$
also controls the regularity of the intermediate steps in the middle
of the network. It is possible that this worst-case gap only applies
to functions whose optimal circuit approximation has intermediate
representations that are in some sense highly irregular (e.g. the
approximating circuits for different $\epsilon$ are completely different,
so that the intermediate binary values cannot be interpreted as converging
to an intermediate real-valued representation as $\epsilon\searrow0$). In fact if we restrict ourselves
to compositions $f=g_{L}\circ\dots\circ g_{1}$ of Sobolev and Lipschitz
functions $g_{1},\dots,g_{L}$ (or more general computational graphs
of Sobolev functions, similar to notion the compositional sparsity
\citep{poggio_2017_curse_of_dim_compositionality}), then we can guarantee
a rate of $\gamma=\max_{\ell}\frac{d_{\ell}+3}{k_{\ell}}$, where
$k_{\ell}$ is the differentiability of $g_{\ell}$ and $d_{\ell}$
its input dimension, which is much closer to the optimal $\gamma^{*}=\max_{\ell}\frac{d_{\ell}}{k_{\ell}}$
than $2\gamma^{*}$ (this follows from Theorem 5 in \citep{jacot2024_covering_generalization},
together with the fact that the complexity of this paper upper bounds
the one of this paper up to constants in the intermediate dimensions).

\section{Harder than Monte Carlo (HTMC) Regime} \label{sec:HTMC} 
This section delves deeper into the HTMC regime and related HTMC norm. We first prove a few basic properties in Section \ref{subsec:Basic-properties-HTMC}, in particular the convexity of the HTMC norm. We then prove a probably approximately correct (PAC) generalization bound based on the HTMC norm in Section \ref{subsec:Generalization_bounds}. Finally in Section \ref{subsec:Tetrakis_functions} we introduce a family of functions, the Tetrakis functions, which we interpret as being an approximation for the vertices or extrema points of the HTMC unit ball. These basic properties and Tetrakis functions which will play a central role in Section \ref{subsec:ResNets} where we will prove Theorem \ref{thm:main_sandwich}.

\subsection{Basic Properties}\label{subsec:Basic-properties-HTMC}

It follows directly from the definition of the HTMC norm that it is
homogeneous and bounds the underlying norm $\left\Vert \cdot\right\Vert $:
\begin{prop}
We have:
\begin{enumerate}
\item $\left\Vert \lambda f\right\Vert _{M_{p}^{\gamma}(\left\Vert \cdot\right\Vert )}=\lambda\left\Vert f\right\Vert _{M_{p}^{\gamma}(\left\Vert \cdot\right\Vert )}$,
\item $\left\Vert f\right\Vert \leq\left\Vert f\right\Vert _{M_{p}^{\gamma}(\left\Vert \cdot\right\Vert )}$.
\end{enumerate}
\end{prop}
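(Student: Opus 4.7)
The plan is to derive both claims directly from the defining equation
\[
\|f\|_{H_p^\gamma(\|\cdot\|)}^\gamma = \max_\epsilon \epsilon^\gamma C_p(f, \epsilon; \|\cdot\|),
\]
using a change of variable for part 1 and testing the definition against the trivial (empty) circuit for part 2. Both reductions are one-line calculations once the relevant scaling and baseline identities on $C_p$ are in place.

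For the homogeneity, the main step is to establish
\[
C_p(\lambda f, \epsilon; \|\cdot\|) = C_p(f, \epsilon/\lambda; \|\cdot\|)
\]
for every $\lambda > 0$. The argument is that any circuit $C$ with $\|C - f\| \leq \epsilon/\lambda$ can be reinterpreted as a circuit approximating $\lambda f$ with error at most $\epsilon$ by absorbing the scalar $\lambda$ into the real-valued decoding of the output bits; the correspondence is a bijection between $\epsilon/\lambda$-approximators of $f$ and $\epsilon$-approximators of $\lambda f$, and it preserves $|C|_p$. Substituting $\epsilon' = \epsilon/\lambda$ in the max then yields
\[
\|\lambda f\|_{H_p^\gamma}^\gamma = \max_\epsilon \epsilon^\gamma C_p(\lambda f, \epsilon) = \max_{\epsilon'} (\lambda \epsilon')^\gamma C_p(f, \epsilon') = \lambda^\gamma \|f\|_{H_p^\gamma}^\gamma,
\]
and taking $\gamma$-th roots gives the claim. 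The $\lambda=0$ case reduces to the empty-circuit observation used below.

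For part 2, the empty circuit $C_0$ (no gates, constant-zero output) has $|C_0|_p = 0$ and $\|C_0 - f\| = \|f\|$, while any circuit achieving strictly smaller approximation error must contain at least one node. Hence $C_p(f, \epsilon; \|\cdot\|) \geq 1$ whenever $\epsilon < \|f\|$, so
\[
\|f\|_{H_p^\gamma}^\gamma \geq \sup_{\epsilon < \|f\|} \epsilon^\gamma \cdot 1 = \|f\|^\gamma,
\]
and taking $\gamma$-th roots closes the proof. The only delicate point is the scaling identity in part 1: since a circuit's output is a sequence of bits, multiplying by an arbitrary $\lambda$ is not a priori free in gate count. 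My approach is to rely on the convention—implicit in the paper's real-to-bin setup—that the decoding of the output bits into a real value is a fixed linear map which can absorb a scalar factor without adding nodes; once this is made explicit, the bijection is clean and the change-of-variable argument is immediate.
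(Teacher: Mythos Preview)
Your proof is correct and matches the paper's approach: part (1) uses the same scaling identity $C_p(\lambda f,\lambda\epsilon)=C_p(f,\epsilon)$ justified by absorbing $\lambda$ into the output decoding, followed by the same change of variable; part (2) uses the same key fact that the only zero-node circuit is the zero function, phrased contrapositively (you show $\epsilon<\|f\|$ forces $C_p\geq 1$, while the paper shows $\epsilon>\|f\|_{H_p^\gamma}$ forces $C_p=0$ and hence $\|f\|\leq\epsilon$).
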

\begin{proof}
(1) If the algorithm $A$ approximates $f$ within an $\epsilon$, then $\lambda A$ approximates $\lambda f$ within an $\lambda \epsilon$ error and therefore
\[ 
C_p(\lambda f, \lambda \epsilon,\|\cdot\|) = C_p(f,\epsilon,\|\cdot\|).
\]
In our setting, $A$ and $\lambda A$ have the same size because we are only changing the decoding step, i.e. how the final binary representation gets mapped to a real value, which is not counted in the size.

We then have
\begin{align*}
\|\lambda f\|^\gamma_{M^\gamma_p(\|\cdot\|)} &= \min_\epsilon \epsilon^\gamma  C_p(\lambda f, \epsilon;\|\cdot\|)\\ &= \min_\epsilon (\lambda \epsilon)^\gamma  C_p(\lambda f, \lambda \epsilon;\|\cdot\|) \\ &= \lambda^\gamma  \min_\epsilon C_p(f, \epsilon;\|\cdot\|)  \\ &=\lambda^\gamma \|f\|^\gamma_{M^\gamma_p(\|\cdot\|)}.
\end{align*}

(2) For any $\epsilon>\|f\|_{M^\gamma_p(\|\cdot\|)}$, there is a circuit that $\epsilon$-approximates $f$ with a circuit size of zero, because $C(f,\epsilon)\leq\|f\|^\gamma_{M^\gamma_p(\|\cdot\|)} \epsilon^{-\gamma}<1$. Since the only circuit with zero nodes is the constant $0$ function, we obtain that $ \|f\| = \|f-0\|\leq \epsilon$. Since $\|f\| \leq \epsilon$ for all $\epsilon > \|f\|_{M^\gamma_p(\|\cdot\|)}$, we have $\|f\| \leq \|f\|_{M^\gamma_p(\|\cdot\|)}$.
\end{proof}

However, subadditivity fails in general, because if we approximate
$f+g$ by the sum of the approximations of $f$ and $g$, we obtain
(disregarding the cost of the final summation which is of order $\log\epsilon$)
for all $p\geq1$
\[
C_{p}(f+g,\epsilon;\left\Vert \cdot\right\Vert )\leq\min_{\epsilon=\epsilon_{1}+\epsilon_{2}}C_{p}(f,\epsilon_{1};\left\Vert \cdot\right\Vert )+C_{p}(g,\epsilon_{2};\left\Vert \cdot\right\Vert ).
\]
Optimizing over $\epsilon_{1}$ and $\epsilon_{2}$, we only obtain
a weaker notion of subadditivity: $\left\Vert f+g\right\Vert _{M_{p}^{\gamma}(\left\Vert \cdot\right\Vert )}^{\frac{\gamma}{\gamma+1}}\leq\left\Vert f\right\Vert _{M_{p}^{\gamma}(\left\Vert \cdot\right\Vert )}^{\frac{\gamma}{\gamma+1}}+\left\Vert g\right\Vert _{M_{p}^{\gamma}(\left\Vert \cdot\right\Vert )}^{\frac{\gamma}{\gamma+1}}$,
and the case $p<1$ is even worse.

But we can do better than this naive bound, by leveraging the fact
that we have access to multiple approximations of $f$ across any
accuracy $\epsilon$, and by using a Multi Level Monte Carlo (MLMC)
we can prove the convexity of $M^{\gamma}$ up to a universal constant:
\begin{thm}
\label{thm:convexity_HTMC}For all $\gamma>2$, there is a constant
$c_{\gamma}$ such that for all $m\geq1$, \[\left\Vert \sum_{i=1}^{m}f\right\Vert _{M^{\gamma}}\leq c_{\gamma}\sum_{i=1}^{m}\left\Vert f\right\Vert _{M^{\gamma}}.\]

\end{thm}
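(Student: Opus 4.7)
The plan is to prove the convexity bound by a Multi-level Monte Carlo (MLMC) telescoping construction: roughly speaking, combine the coarse (cheap, inaccurate) approximations of the $f_i$ deterministically, but subsample the fine (expensive) correction terms randomly with probabilities tuned so that variance and cost are balanced. The condition $\gamma>2$ enters precisely because it makes the per-level cost geometric series $\sum_k 2^{k(\gamma/2-1)}$ dominated by its top term, so MLMC actually saves over the naive construction that would cost $m^{1/\gamma}$ times too much.

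First, using the definition of $\|f_i\|_{H^\gamma}$, I would produce for each $i$ and each integer $k\ge 0$ a circuit $A_{i,k}$ with $\|A_{i,k}-f_i\|_\pi \le 2^{-k}$ and $|A_{i,k}|\le r_i^\gamma 2^{k\gamma}$, where $r_i := \|f_i\|_{H^\gamma}$. Setting $D_{i,k}=A_{i,k}-A_{i,k-1}$ gives $\|D_{i,k}\|_\pi \le 3\cdot 2^{-k}$ and $|D_{i,k}|\lesssim r_i^\gamma 2^{k\gamma}$, and $f_i = A_{i,0}+\sum_{k\ge 1} D_{i,k}$. Next, I choose a truncation level $K_i$ for each $i$ and inclusion probabilities $p_{i,k}\in(0,1]$ for $1\le k\le K_i$, and form the randomized estimator $\tilde f(x) = \sum_i A_{i,0}(x) + \sum_i\sum_{k=1}^{K_i}(X_{i,k}/p_{i,k})\,D_{i,k}(x)$ with independent $X_{i,k}\sim\mathrm{Bern}(p_{i,k})$. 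Then $\mathbb{E}[\tilde f]=\sum_i A_{i,K_i}$, so by Fubini the expected squared $L_2(\pi)$-error decomposes as a deterministic truncation term $\le (\sum_i 2^{-K_i})^2$ plus a variance term $\sum_{i,k}\frac{1-p_{i,k}}{p_{i,k}}\|D_{i,k}\|_\pi^2 \lesssim \sum_{i,k} 2^{-2k}/p_{i,k}$; meanwhile the expected circuit size is $\lesssim \sum_i r_i^\gamma + \sum_{i,k}p_{i,k}r_i^\gamma 2^{k\gamma}$ (up to $O(\log\epsilon^{-1})$ overhead for the final summations and rescalings).

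Then I would carry out the joint optimization. Splitting the deterministic error budget proportionally gives the truncation $2^{-K_i}\approx r_i\epsilon/\sum_j r_j$, so $K_i\approx\log_2(\sum_j r_j/(r_i\epsilon))$. The Lagrangian for the MLMC variance vs.\ cost problem yields $p_{i,k}\propto 2^{-k}/\sqrt{r_i^\gamma 2^{k\gamma}}$, which makes per-$(i,k)$ variance and cost contributions equal. Because $\gamma>2$ the series $\sum_{k=1}^{K_i} 2^{k(\gamma/2-1)}$ is dominated by its last term up to a factor depending only on $\gamma$, and plugging through (using $K_i$ chosen above) gives total variance $\lesssim_\gamma \epsilon^2$ together with expected size $\lesssim_\gamma (\sum_i r_i)^\gamma\epsilon^{-\gamma}$. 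A Markov-plus-union-bound argument then extracts a single deterministic realization of the $X_{i,k}$'s for which both $\|\tilde f-\sum_i f_i\|_\pi \le \epsilon$ and the realized circuit size is within a constant of its mean, yielding $C(\sum_i f_i,\epsilon)\lesssim_\gamma(\sum_i r_i)^\gamma \epsilon^{-\gamma}$ and hence the claim after taking $\gamma$th roots.

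The step I expect to be most delicate is not the optimization itself but handling the boundary cases: clipping $p_{i,k}$ at $1$ when the Lagrangian formula exceeds $1$ (which happens for the lowest-accuracy correction levels, and must be shown not to inflate the cost), and making sure that the overhead of summing, scaling by the real numbers $1/p_{i,k}$, and encoding the decoding map does not introduce a factor depending on $m$ or on $\min_i r_i$. The role of $\gamma>2$ is transparent in the final series manipulation, so I would isolate the dependence on $\gamma$ in a single constant $c_\gamma$ arising from the geometric sum $\sum_k 2^{k(\gamma/2-1)}$ and from Markov's inequality, confirming that $c_\gamma$ is indeed independent of $m$ and of the functions $f_i$.
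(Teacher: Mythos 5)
Your proposal is correct and lives in the same family as the paper's proof: both are Multi-level Monte Carlo constructions combined with the probabilistic method, and in both the hypothesis $\gamma>2$ enters through the same geometric series $\sum_k 2^{k(\gamma/2-1)}$ being dominated by its top term, which is what lets the cost close at $(\sum_i r_i)^\gamma\epsilon^{-\gamma}$. The randomization scheme is genuinely different, though. The paper normalizes $\sum_j\|f_j\|_{H^\gamma}=1$ and importance-samples \emph{which function} to approximate: at each level $k$ it draws a deterministic number $n_k$ of i.i.d.\ copies of $p_j^{-1}f_j$ (with probability $p_j=\|f_j\|_{H^\gamma}$) and forms consecutive-accuracy differences $A_{k,i}-B_{k,i}$ of these random functions, so the $m$ summands are mixed by the sampling and only one Lagrangian variable per level ($n_k$) needs tuning. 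You instead keep the $m$ functions separate, telescope each one deterministically into corrections $D_{i,k}$, and Bernoulli-thin each correction with probability $p_{i,k}$; this requires per-function truncation levels $K_i\approx\log_2(\sum_j r_j/(r_i\epsilon))$ and the clipping analysis at $p_{i,k}=1$, both of which you correctly identify as the delicate points and which do check out (the clipped deterministic levels cost $\lesssim_\gamma\sum_i r_i^{\gamma}2^{k_i^*\gamma}\lesssim_\gamma(\sum_i r_i)^\gamma\epsilon^{-\gamma}$ using $2\gamma/(\gamma+2)>1$ for $\gamma>2$). The paper's importance-sampling formulation buys a shorter computation with a single explicit choice of $n_k$ and $k_{\max}$ and no clipping case; your Bernoulli-thinning formulation buys unbiasedness per summand without renormalizing the functions and makes the Markov-plus-union extraction of a single good realization (controlling error and size simultaneously) more explicit than the paper does. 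Both treat the summation/rescaling overhead at the same level of rigor, so neither has an advantage there.
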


\begin{proof} The proof relies on the probabilistic method, we define a random approximator and bound its expected squared error, which implies the existence of an approximator.

Thanks to the homogeneity of the HTMC norm, we may assume that $\sum_{j=1}^m \| f_j \|_{M^\gamma }=1$ and write $p_j=\| f_j \|_{M^\gamma }$. For an integer $k$ we sample $n_{k}$ iid random functions $f_{k,1},\dots,f_{k,n_k}$, each equal to $p_j^{-1}f_j$ with probability $p_j$ so that $\mathbb{E}[f_{k,i}]=\sum_{j=1}^m f_j$ and $\|f_{k,i}\|_{M^\gamma}=1$ almost surely. We then consider the pairs of circuits $A_{k,i},B_{k,i}$
that are $2^{-k}$ and $2^{-k+1}$ approximators of $f_{k,i}$, so that $\left\Vert A_{k,i}-B_{k,i}\right\Vert _{\pi}\leq\left\Vert A_{k,i}-f_{k,i}\right\Vert _{\pi}+\left\Vert B_{k,i}-f_{k,i}\right\Vert _{\pi}\leq3\cdot2^{-k}$
and $A_{k,i}$, $B_{k,i}$ have circuit size at most $2^{\gamma k}$
and $2^{\gamma(k-1)}$ respectively. We then approximate
$f=\sum_{j=1}^m f_j$ using MLMC
\[
\tilde{f}=\sum_{k=0}^{k_{max}}\frac{1}{n_{k}}\sum_{i=1}^{n_{k}}A_{k,i}-B_{k,i}.
\]
Note that we may assume that $B_{0,i}=0$ since it approximates $f_{0,i}$ within $2^{-0+1}=2$ because $\|f\|_\pi\leq\|f\|_{M^\gamma}= 1$. We have
\begin{align*}
\left\Vert \mathbb{E}\tilde{f}-f\right\Vert _{\pi} & =\left\Vert \mathbb{E}A_{k_{max},i}-f\right\Vert _{\pi}\leq2^{-k_{max}}\\
\mathbb{E}\left\Vert \tilde{f}-\mathbb{E}\tilde{f}\right\Vert _{\pi}^{2} & \leq\sum_{k=0}^{k_{max}}\frac{1}{n_{k}^{2}}\sum_{i=1}^{n_{k}}\left\Vert A_{k,i}-B_{k,i}\right\Vert _{\pi}^{2}\leq\sum_{k=0}^{k_{max}}\frac{3}{n_{k}}2^{-2k}.
\end{align*}
And since the expected circuit size of $A_{k,i}$ and $B_{k,i}$ are
at most $2^{\gamma k}$ and $2^{\gamma(k-1)}$
respectively, the expected circuit size is at most 
\[
10\sum_{k=0}^{k_{max}}\left|A_{k,i}\right|+\left|B_{k,i}\right|\leq10(1+2^{-\gamma})\sum_{k=0}^{k_{max}}n_{k}2^{\gamma k}
\]
where the prefactor of $10$ allows us to also capture the cost of
adding up the circuits, because we need to perform at most one sum/substraction
per $A_{k,i}$/$B_{k,i}$ and the number of significant bits that
need to be summed is bounded by the circuit size of $A_{k,i}$ and
$B_{k,i}$. Since a 'full adder' requires two XOR (which themselves
require two AND and one OR), two AND and one OR, we need $c=1+2\cdot3+2+1=10$.
this is obviously very loose in general.

For any $\epsilon>0$, we choose $k_{max}=\left\lceil -\log_{2}\nicefrac{\epsilon}{\sqrt{2}}\right\rceil $
and $n_{k}=\left\lceil \frac{3}{1-2^{-\frac{\gamma-2}{2}}}(\nicefrac{\epsilon}{\sqrt{2}})^{-\frac{\gamma+2}{2}}2^{-\frac{\gamma+2}{2}k}\right\rceil $
to obtain $\mathbb{E}\left\Vert \tilde{f}-f\right\Vert _{\pi}^{2}\leq\epsilon^{2}$
at a computational cost of at most
\begin{align*}
 & 10(1+2^{-\gamma})\sum_{k=0}^{k_{max}}\frac{3}{1-2^{-\frac{\gamma-2}{2}}}\left(\frac{\epsilon}{\sqrt{2}}\right)^{-\frac{\gamma+2}{2}}2^{\frac{\gamma-2}{2}k}+2^{\gamma k}\\
\leq & 10(1+2^{-\gamma})\left[\frac{3\cdot2^{\frac{\gamma+2}{4}}}{1-2^{-\frac{\gamma-2}{2}}}\epsilon^{-\frac{\gamma+2}{2}}\sum_{k=-\infty}^{k_{max}}2^{\frac{\gamma-2}{2}k}+\sum_{k=-\infty}^{k_{max}}2^{\gamma k}\right]\\
\leq & 10(1+2^{-\gamma})\left[\frac{3\cdot2^{\frac{\gamma+2}{4}}}{1-2^{-\frac{\gamma-2}{2}}}\epsilon^{-\frac{\gamma+2}{2}}\frac{2^{\frac{\gamma-2}{2}k_{max}}}{1-2^{-\frac{\gamma-2}{2}}}+\frac{2^{\gamma k_{max}}}{1-2^{\gamma}}\right]\\
\leq & 10(1+2^{-\gamma})\left[\frac{3\cdot2^{\gamma-1}}{\left(1-2^{-\frac{\gamma-2}{2}}\right)^{2}}+\frac{2^{\frac{3}{2}\gamma}}{1-2^{-\gamma}}\right]\epsilon^{-\gamma}.
\end{align*}
\end{proof}
The behavior of the HTMC norm under composition is close but weaker
than subadditivity. For functions $f_{1},\dots,f_{L}$ and their $\epsilon_{\ell}$-approximating
circuits $A_{1},\dots,A_{L}$ we can approximate $f_{L:1}$ by $A_{L:1}$
and a telescopic sum argument gives us
\begin{equation}
C(f_{l}\circ\cdots\circ f_{1},\sum_{\ell}Lip(f_{L:\ell+1})\epsilon_{\ell})\leq\sum_{\ell=1}^{L}C(f_{\ell},\epsilon_{\ell};L_{2}(A_{\ell-1:1}\#\pi)),\label{eq:composition_telescopic_Cfe}
\end{equation}
where $A_{\ell-1:1}\#\pi$ is the pushforward of $\pi$ under $A_{\ell-1:1}$,
i.e. the distribution of $A_{\ell-1:1}(x)$ when $x\sim\pi$. Two
things stand out: first we need to control the Lipschitz constants
of $f_{L:\ell+1}$ to control how the approximation error in the intermediate
steps propagates to the outputs, and second the approximations are made
on the distribution $A_{\ell-1:1}\#\pi$ rather than $f_{\ell-1:1}\#\pi$.
To handle the second problem, one could leverage the fact that $A_{\ell-1:1}$
is an approximation of $f_{\ell-1:1}$ and therefore $f_{\ell-1:1}\#\pi$ and $A_{\ell-1:1}\#\pi$ are close in Wasserstein distance. We will use a simpler strategy in this paper: our bound on
$C(f_{\ell},\epsilon_{\ell};L_{2}(\pi_{\ell}))$ only depends
on the constants $c_{i}$ such that $\left|x_{i}\right|\leq c_{i}$
over the support of $\pi_{\ell}$, we therefore can simply project
the $i$-th output of $A_{\ell-1:1}$ to the range $[-c_{i},c_{i}]$.
This idea is formalized in the following statement:
\begin{prop}
\label{prop:composition_HTMC}Let $f_{1},\dots,f_{L}$ be functions
$f_{\ell}:\mathbb{R}^{d_{\ell-1}}\to\mathbb{R}^{d_{\ell}}$ such that
$\left|f_{\ell:1,i}(x)\right|\leq c_{\ell,i}$ over the support of
$\pi$, and assume $\left\Vert f_{\ell}\right\Vert _{M^{\gamma}(\pi_{\ell-1})}\leq r_{\ell}$
for all distributions $\pi_{\ell-1}$ supported in the hyper-rectangle
$[-c_{\ell-1,1},c_{\ell-1,1}]\times\cdots\times[-c_{\ell-1,d_{\ell}},c_{\ell-1,d_{\ell}}]$,
then
\[
\left\Vert f_{L:1}\right\Vert _{M^{\gamma}(\pi)}^{\frac{\gamma}{\gamma+1}}\leq c^{\frac{1}{\gamma+1}}\sum_{\ell=1}^{L}\left(Lip(f_{L:\ell+1})r_{\ell}\right)^{\frac{\gamma}{\gamma+1}}.
\]
\end{prop}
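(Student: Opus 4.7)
The plan is to approximate each $f_\ell$ by a circuit $A_\ell$ with its own error budget $\epsilon_\ell$, compose them with clipping in between, and then optimize the allocation of the $\epsilon_\ell$'s. Concretely, let $P_\ell:\mathbb{R}^{d_\ell}\to\mathbb{R}^{d_\ell}$ be the coordinate-wise clamp onto $[-c_{\ell,1},c_{\ell,1}]\times\cdots\times[-c_{\ell,d_\ell},c_{\ell,d_\ell}]$, and define the composite approximator recursively by $\tilde A_{0:1}=\mathrm{id}$ and $\tilde A_{\ell:1}=P_\ell\circ A_\ell\circ\tilde A_{\ell-1:1}$, where $A_\ell$ is a near-optimal circuit that $\epsilon_\ell$-approximates $f_\ell$ in $L_2(\pi_\ell)$ for $\pi_\ell=\tilde A_{\ell-1:1}\#\pi$. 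Because the preceding $P_{\ell-1}$ has already projected into the required hyper-rectangle, $\pi_\ell$ lies in the admissible family for the hypothesis on $f_\ell$, so that $|A_\ell|\leq r_\ell^{\gamma}\epsilon_\ell^{-\gamma}$. Additionally, $P_\ell$ is itself $1$-Lipschitz and acts as the identity on $f_\ell$'s image (since $|f_{\ell:1,i}|\leq c_{\ell,i}$), so clamping never worsens the $L_2$ distance to $f_\ell$.

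For the error propagation I would apply the telescopic argument behind \eqref{eq:composition_telescopic_Cfe}: writing $f_{L:1}-\tilde A_{L:1}$ as a sum of terms of the form $f_{L:\ell+1}\circ f_\ell\circ\tilde A_{\ell-1:1}-f_{L:\ell+1}\circ P_\ell\circ A_\ell\circ\tilde A_{\ell-1:1}$ and using the Lipschitz bounds on $f_{L:\ell+1}$, one gets $\|\tilde A_{L:1}-f_{L:1}\|_{L_2(\pi)}\leq\sum_{\ell=1}^L\mathrm{Lip}(f_{L:\ell+1})\,\epsilon_\ell$. Requiring the right-hand side to equal $\epsilon$ and bounding the total circuit size by $\sum_\ell r_\ell^{\gamma}\epsilon_\ell^{-\gamma}$ (the clipping layers and the wiring cost contribute only $O(d_\ell\log\epsilon^{-1})$ nodes each, which will be absorbed into the universal constant $c$) reduces the problem to a scalar optimization.

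The final step is a Lagrange-multiplier calculation: minimize $\sum_\ell r_\ell^{\gamma}\epsilon_\ell^{-\gamma}$ subject to $\sum_\ell L_\ell\epsilon_\ell=\epsilon$ where $L_\ell=\mathrm{Lip}(f_{L:\ell+1})$. Stationarity gives $\epsilon_\ell\propto r_\ell^{\gamma/(\gamma+1)}L_\ell^{-1/(\gamma+1)}$, which on back-substitution yields
\[
C(f_{L:1},\epsilon;L_2(\pi))\;\leq\;c\,\epsilon^{-\gamma}\Bigl[\sum_{\ell=1}^L(L_\ell r_\ell)^{\gamma/(\gamma+1)}\Bigr]^{\gamma+1}.
\]
Multiplying by $\epsilon^{\gamma}$, taking the supremum over $\epsilon$, and then raising to the power $1/(\gamma+1)$ gives precisely the claimed inequality.

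The main obstacle is the coupling between the circuits: the $\ell$-th circuit's accuracy is measured in $L_2(\pi_\ell)$, but $\pi_\ell$ is determined by the earlier approximations, which in turn need the hypothesis $\|f_\ell\|_{H^{\gamma}(\pi_\ell)}\leq r_\ell$ to even be defined. The clipping step is what breaks this potential circularity: regardless of how $\tilde A_{\ell-1:1}$ drifts away from $f_{\ell-1:1}$, the pushforward $\pi_\ell$ is guaranteed to be supported in the hyper-rectangle where the hypothesis is uniform, so the rate $r_\ell$ applies verbatim. Once this is in place, the rest is a bookkeeping computation plus the standard power-mean optimization.
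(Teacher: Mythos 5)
Your proposal is correct and follows essentially the same route as the paper's proof: inserting coordinate-wise clamping between the composed circuits so the pushforward distributions stay in the hyper-rectangles where the uniform hypothesis applies, telescoping the error with the Lipschitz constants, and optimizing the $\epsilon_\ell$ allocation via the standard power-mean computation. The only addition is that you spell out the Lagrange-multiplier step and the circularity-breaking role of the clamp more explicitly than the paper does, but the argument is the same.
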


\begin{proof}
Between each approximation $A_{\ell}$ of $f_{\ell}$, we add a projection
step, that clamps each coordinate to the range $[-c_{\ell,i},c_{\ell,i}]$.
This only reduces the approximation error (since the output $f_{\ell}$
is by assumption already inside this range), and increases the circuit
size by at most a constant prefactor since at each layer a finite number of bits of the representations are changed, and only those need to be projected. Combining Equation \ref{eq:composition_telescopic_Cfe} together
with our assumption on the HTMC norm of $f_{\ell}$, we obtain
\begin{align*}
C(f_{l}\circ\cdots\circ f_{1},\sum_{\ell}Lip(f_{L:\ell+1})\epsilon_{\ell}) & \leq c\sum_{\ell=1}^{L}r_{\ell}^{\gamma}\epsilon_{\ell}^{-\gamma}.
\end{align*}
Optimizing over the choice of $\epsilon_{\ell}$ that result in the
same total error $\epsilon$, we obtain 
\begin{align*}
C(f_{l}\circ\cdots\circ f_{1},\epsilon) & \leq c\left(\sum_{\ell=1}^{L}Lip(f_{L:\ell+1})^{\frac{\gamma}{\gamma+1}}r_{\ell}^{\frac{\gamma}{\gamma+1}}\right)^{\gamma+1}\epsilon^{-\gamma}
\end{align*}
which implies $\left\Vert f_{L:1}\right\Vert _{M^{\gamma}(\pi)}^{\frac{\gamma}{\gamma+1}}\leq c^{\frac{1}{\gamma+1}}\sum_{\ell=1}^{L}\left(Lip(f_{L:\ell+1})r_{\ell}\right)^{\frac{\gamma}{\gamma+1}}$
as needed.
\end{proof}

\subsection{Generalization bound \label{subsec:Generalization_bounds}}

One can rely on the HTMC norm to bound the gap between the risk
\[
R(f)=\mathbb{E}_{X\sim\pi}[\|f(x)-f^*(x)\|^2]=\|f-f^*\|^2_\pi,
\]and empirical risk 
\[\tilde{R}_N(f)=\frac{1}{N}\sum_{i=1}^N\|f(x-i)-f^*(x_i)\|^2]=\|f-f^*\|^2_{\tilde{\pi}_N},\]
for the `true function' $f^*$, the `true distribution' $\pi$, and the empirical measure $\tilde{\pi}_N$ over the dataset $x_1,\dots,x_N$.

The following is a uniform PAC (probably approximately correct) generalization bound, which proves that with a small probability the generalization error will be small over the whole set of HTMC functions:
\begin{thm}
Given a true function $f^*$ that is bounded $\|f^*\|_\infty \leq B$, then with probability $p$, we have that all functions $f$ with bounded $L_\infty$ norm $\|f\|_\infty \leq B$ and bounded $\gamma$-HTMC norm $\left\Vert f\right\Vert _{M_{p=1}^{\gamma}(L_2(\frac{\pi+\tilde{\pi}_{N}}{2}))}$
over the mixed distribution $\frac{\pi+\tilde{\pi}_{N}}{2}$ satisfy
\[
\sqrt{R(f)}-\sqrt{\tilde{R}_{N}(f)} \leq 4\left\Vert f\right\Vert _{M^{\gamma}}\left(\frac{B^2}{\left\Vert f\right\Vert^2_{M^{\gamma}}N}\right)^{\frac{1}{2+\gamma}}\left(1+\sqrt{4\log\left(2\frac{\left\Vert f\right\Vert _{M^{\gamma}}^{2}}{B^{2}}N\right)}\right)+3B\sqrt{\frac{\log\frac{1}{p}}{N}}.
\]
\end{thm}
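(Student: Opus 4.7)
The plan is to approximate $f$ by a binary circuit $A$ that is simultaneously $\epsilon$-close to $f$ in both $L_2(\pi)$ and $L_2(\tilde\pi_N)$ -- which is possible precisely because the hypothesis on $\|f\|_{H^\gamma}$ is stated with respect to the mixed distribution $(\pi+\tilde\pi_N)/2$ -- and then control the generalization gap of $A$ uniformly over all binary circuits of size at most $|A|$ via Bernstein's inequality plus a union bound. The crucial feature is that Bernstein, unlike Hoeffding, exploits the variance-mean relation for the squared loss and thereby delivers the fast-rate exponent $1/(2+\gamma)$ rather than $1/(4+\gamma)$.

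Concretely, I would fix $\epsilon:=\|f\|_{H^\gamma}(B^2/(\|f\|_{H^\gamma}^2 N))^{1/(2+\gamma)}$ and let $A$ be a circuit of size $n\leq\|f\|_{H^\gamma}^{\gamma}\epsilon^{-\gamma}$ with $\|A-f\|_{(\pi+\tilde\pi_N)/2}^2\leq\epsilon^2$, which the HTMC norm guarantees; it follows that $\|A-f\|_\pi,\|A-f\|_{\tilde\pi_N}\leq\sqrt 2\,\epsilon$. Clamping the outputs of $A$ coordinate-wise to $[-B,B]$ only reduces both distances (since $f^\ast$ already lies in that range) and enlarges $|A|$ by at most a constant factor, so I may assume $\|A\|_\infty\leq B$. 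The $L_2$ triangle inequality with base point $f^\ast$ then gives
\[
\sqrt{R(f)}-\sqrt{\tilde R_N(f)}\leq\|f-A\|_\pi+\|A-f\|_{\tilde\pi_N}+\bigl(\sqrt{R(A)}-\sqrt{\tilde R_N(A)}\bigr)\leq 2\sqrt 2\,\epsilon+\bigl(\sqrt{R(A)}-\sqrt{\tilde R_N(A)}\bigr),
\]
so it suffices to bound the second bracket uniformly over all circuits of size $\leq n$.

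For a fixed such circuit $A'$, Bernstein applied to the i.i.d. variables $Y_i=\|A'(x_i)-f^\ast(x_i)\|^2\in[0,4B^2]$, whose variance is at most $4B^2 R(A')$, gives $R(A')-\tilde R_N(A')\lesssim B\sqrt{R(A')\,L/N}+B^2 L/N$, where $L$ is the log budget coming from the union bound. Setting $r=\sqrt{R(A')}$ and $\tilde r=\sqrt{\tilde R_N(A')}$ and factoring $(r-\tilde r)(r+\tilde r)=R(A')-\tilde R_N(A')$, I divide by $r$ when $r\geq B\sqrt{L/N}$ and use the trivial $r-\tilde r\leq r$ otherwise; both cases give $\sqrt{R(A')}-\sqrt{\tilde R_N(A')}\lesssim B\sqrt{L/N}$. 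Union-bounding over the class $\mathcal{C}_n$ of circuits of size $\leq n$, whose cardinality satisfies $\log|\mathcal{C}_n|=O(n\log n)$ since each of the $\leq n$ gates is described by a constant number of type choices and $O(1)$ wiring choices into the $O(n)$ candidates, allows me to take $L=O(n\log n)+\log(1/p)$ with probability at least $1-p$.

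It then only remains to substitute the chosen $\epsilon$. A direct computation gives $n=(\|f\|_{H^\gamma}^2 N/B^2)^{\gamma/(2+\gamma)}$, $B\sqrt{n/N}=\epsilon$, and $\log n=\tfrac{\gamma}{2+\gamma}\log(\|f\|_{H^\gamma}^2 N/B^2)$, so $B\sqrt{n\log n/N}=\epsilon\sqrt{\tfrac{\gamma}{2+\gamma}\log(\|f\|_{H^\gamma}^2 N/B^2)}$, and splitting $\sqrt{L/N}$ into $\sqrt{n\log n/N}+\sqrt{\log(1/p)/N}$ produces exactly the two terms in the statement, with the explicit constants $4$ and $\sqrt{4\log(\cdot)}$ recovered by tracking numerical factors through Bernstein. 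The hard part will be the concentration step: if Hoeffding were used instead of Bernstein one would only obtain $R-\tilde R\lesssim B^2\sqrt{L/N}$, and the naive inequality $\sqrt R-\sqrt{\tilde R}\leq\sqrt{|R-\tilde R|}$ would cost an extra square root and degrade the final exponent from $1/(2+\gamma)$ to $1/(4+\gamma)$. It is the variance-mean bound $\mathrm{Var}(Y_i)\leq 4B^2\,\mathbb{E}[Y_i]$ for the nonnegative bounded squared loss, combined with the dichotomy on whether $\sqrt{R(A')}$ exceeds the ``noise floor'' $B\sqrt{L/N}$, that preserves the fast rate.
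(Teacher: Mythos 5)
Your proposal follows essentially the same route as the paper's proof: approximate $f$ by a circuit over the mixed distribution $\tfrac{\pi+\tilde\pi_N}{2}$, apply Bernstein's inequality with the variance--mean bound $\mathrm{Var}\lesssim B^2 R(A)$ for the squared loss, union-bound over the $e^{O(n\log n)}$ circuits of bounded size, and optimize $\epsilon$ at the same value $\epsilon=\|f\|_{H^\gamma}(B^2/(\|f\|_{H^\gamma}^2N))^{1/(2+\gamma)}$. The only cosmetic differences are that you convert the Bernstein bound on $R-\tilde R_N$ into a bound on $\sqrt{R}-\sqrt{\tilde R_N}$ by a case split on the noise floor rather than the quadratic formula, and you make the output-clamping step (which justifies assuming $\|A\|_\infty\leq B$) explicit where the paper leaves it implicit.
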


\begin{proof}
Let us first consider a generic circuit $A$ such that $\left\Vert A\right\Vert _{\infty}\leq B$.
Since the function $h(x)=\left\Vert f^{*}(x)-A(x)\right\Vert ^{2}$
is bounded by $2B^{2}$, we may apply Bernstein inequality to obtain
that with probability $1-p$
\[
R(A)-\tilde{R}_{N}(A)\leq\sqrt{\mathrm{Var}(h(x_{i}))\frac{-2\log p}{N}}+B^{2}\frac{-4\log p}{3N}\leq\sqrt{R(A)B^{2}\frac{-2\log p}{N}}+B^{2}\frac{-4\log p}{3N}
\]
where we used $\mathrm{Var}(h(x_{i}))\leq\mathbb{E}_{\pi}\left[h(x)^{2}\right]\leq2B^{2}\mathbb{E}_{\pi}\left[h(x)\right]=2B^{2}R(A)$.
Using the quadratic formula, we obtain
\begin{align*}
\sqrt{R(A)} & \leq\sqrt{B^{2}\frac{-\log p}{2N}}+\sqrt{B^{2}\frac{-\log p}{2N}+\tilde{R}_{N}(A)+B^{2}\frac{-4\log p}{3N}}\leq\sqrt{\tilde{R}_{N}(A)}+3B\sqrt{\frac{\log\frac{1}{p}}{N}},
\end{align*}
where we used $\frac{1}{\sqrt{2}}+\sqrt{\frac{3+8}{6}}\approx2.0611\leq3$.

Since there are at most $(3T^{2})^{T}$ circuits with $T$ nodes (three
types of nodes each connected to at most 2 other nodes, of which there
are $T$), we can take a union bound over all such circuits and obtain
that with probability $1-p$ we have simultaneously for all circuits of size $T$
\begin{align*}
\sqrt{R(A)}-\sqrt{\tilde{R}_{N}(A)} & \leq3B\sqrt{\frac{T\log 3T^2+\log\frac{1}{p}}{N}}\leq3B\sqrt{\frac{T\log 3T^2}{N}}+3B\sqrt{\frac{\log\frac{1}{p}}{N}}.
\end{align*}

Given a function $f$ with finite HTMC norm $\left\Vert f\right\Vert _{M^{\gamma}}$,
there is for all $\epsilon$ a circuit $A_{\epsilon}$ that $\epsilon$-approximates
it over the sample $S$ over the mixed distribution $\frac{\pi+\tilde{\pi}_{N}}{2}$,
where $\tilde{\pi}_{N}$ is the empirical measure over the training
data. We therefore have $\sqrt{R(f)}-\sqrt{\tilde{R}_{N}(f)}\leq\sqrt{R(A_{\epsilon})}-\sqrt{\tilde{R}_{N}(A_{\epsilon})}+4\epsilon$
so that with probability $1-p$:
\begin{align*}
\sqrt{R(f)}-\sqrt{\tilde{R}_{N}(f)} & \leq\min_{\epsilon}4\epsilon+3B\sqrt{\frac{\left\Vert f\right\Vert _{M^{\gamma}}^{\gamma}\epsilon^{-\gamma}\log(3\left\Vert f\right\Vert _{M^{\gamma}}^{2\gamma}\epsilon^{-2\gamma})}{N}}+3B\sqrt{\frac{\log\frac{1}{p}}{N}}.
\end{align*}
and with $\epsilon=\left(B\frac{\left\Vert f\right\Vert _{M^{\gamma}}^{\frac{\gamma}{2}}}{\sqrt{N}}\right)^{\frac{2}{2+\gamma}}$,
we obtain that $\sqrt{R(f)}-\sqrt{\tilde{R}_{N}(f)}$ is upper bounded by
\begin{align*}
 &4B^{\frac{2}{2+\gamma}}\left(\frac{\left\Vert f\right\Vert _{M^{\gamma}}^{\gamma}}{N}\right)^{\frac{1}{2+\gamma}}\left(1+\sqrt{\log \left(3\left(\frac{\left\Vert f\right\Vert _{M^{\gamma}}}{B}\right)^{\frac{4\gamma}{2+\gamma}} N^{\frac{2\gamma}{2+\gamma}}\right)}\right)+3B\sqrt{\frac{\log\frac{1}{p}}{N}}\\
 & \leq 4\left\Vert f\right\Vert _{M^{\gamma}}\left(\frac{B^2}{\left\Vert f\right\Vert^2_{M^{\gamma}}N}\right)^{\frac{1}{2+\gamma}}\left(1+\sqrt{2\log\left(3\frac{\left\Vert f\right\Vert _{M^{\gamma}}^{2}}{B^{2}}N\right)}\right)+3B\sqrt{\frac{\log\frac{1}{p}}{N}}.
\end{align*}
\end{proof}
Note that if we assume that the train error is zero (or small enough),
this implies a bound on the risk $R(f)$ of order $\left\Vert f\right\Vert _{M^{\gamma}}\left(\frac{B^2}{\left\Vert f\right\Vert^2_{M^{\gamma}}N}\right)^{\frac{2}{2+\gamma}}$
(up to log terms). The LHS of the sandwich bound (Theorem \ref{thm:LHS_sandwich})
allows us to the HTMC norm in terms of th parameter complexity $R(\theta)$,
and since this upper bound only depends on the support of the input
distribution, it can easily be applied to the mixed distribution $\frac{\pi+\tilde{\pi}_{N}}{2}$.

The proof is analogous to bounds based on covering number arguments
\citep{Birman_1967_covering_sobolev,jacot2024_covering_generalization},
where the circuits play the role of the center of the balls of the
covering. However the big advantage is that the set of circuits with
bounded size is independent of the empirical measure $\hat{\pi}_{N}$,
which allows us to apply Bennett's inequality (a type of Chernoff
bound) directly, whereas with covering numbers (or Rademacher complexity) a symmetrization argument is required.

Actually, for many models (described by a family $\mathcal{F}$  of functions) the $\epsilon$-covering number $\mathcal{N}(\mathcal{F},\epsilon)$ is upper bounded (up to log terms) by the max circuit size $\max_{f\in\mathcal{F}} C(f,\epsilon)$. And for many models, e.g. Sobolev functions, this bound is tight, which means that one would recover the same rate with a covering number argument or with a circuit size argument.

One could go even further and consider the $\epsilon$-Kolmogorov complexity $K(f,\epsilon)$ and define a similar Kolmogorov HTMC norm $\|f\|_{K^\gamma}$. A similar generalization bound could then be provenand it would yield rates that match covering arguments on all finite description length models.

\subsection{Vertices of the HTMC ball: Tetrakis functions}\label{subsec:Tetrakis_functions}
Taking a more convex geometric point of view, the previous section proves that the HTMC unit ball $B_{M^\gamma}$ is in some sense pretty small, because we can efficiently apply a uniform bound over it. This could be interpreted as the HTMC ball being `pointy' like the $\ell_1$ ball rather than `boxy' like the $\ell_\infty$ ball.

To make this a little bit more formal, we can try to identify the vertices or extrema points of the HTMC ball, i.e. the smallest subset $V\subset B_{M^\gamma}$ whose convex hull equals the full ball $B_{M^\gamma} = Conv(V)$. Pointy balls have much less vertices than boxy ones, e.g. the $\ell_1$ ball has $2d$ vertices, while the $\ell_\infty$ ball has $2^d$ vertices.

In the context of the HTMC ball we should think of these vertices as \emph{atomic circuits}: circuits that cannot be represented as the average of other simpler circuits. In this section we will identify a set of functions, the Tetrakis functions, that are an approximation for these vertices, in the sense that it is a smaller subset (a countable set, in comparison to the HTMC functions, which are defined as infinite sequence of circuits, and therefore potentially uncountable) whose convex hull is close to the full ball.

And this convex geometric detour will pay off, as these Tetrakis functions will end up playing a central role in our construction bound for the RHS of Theorem \ref{thm:main_sandwich}.

We start from the so-called Tetrakis triangulation of the hyper-grid, which matches the Tetrakis
tiling in 2D. We start from a triangulation of the hyper-cube based on
permutations, where a permutation $\pi\in S_{d_{in}}$ on $d_{in}$
elements corresponds to the simplex
\[
\{x\in[0,1]^{d_{in}}:x_{\pi^{-1}(1)}\leq\cdots\leq x_{\pi^{-1}(d_{in})}\}
\]
which has $d_{in}+1$ vertices $\pi(0,\dots,0),\pi(0,\dots,0,1),\dots,\pi(1,\dots,1)$.
Conversely, any point $x\in[0,1]^{d_{in}}$ is contained in the simplices
corresponding to permutations $\pi$ that sort $x$, i.e. such that
$\pi(x)$ is non-decreasing.

This triangulation is then extended to the whole grid by mirroring
along the faces of the hypercube, leading to a triangulation of the
grid that is invariant under translation by even integer (not odd
ones). We write $V(x)$ for the sequence of $d_{in}+1$ tuples $(v_{i},p_{i})$
of vertices $v$ of the simplex that contains $x$ along with the
weights $p\in[0,1]$ such that $x=\sum_{i=1}^{d_{in}+1}p_{i}v_{i}$.
Note that this decomposition may not be unique, but the non-uniqueness
is only in the vertices $v_{i}$ whose weight $p_{i}$ is $0$.

One of the main parts of the construction will be defining  a neural network
that maps a point $x$ to the weighted binary representations $\left(p_{i}Bin(v_{i})\right)_{i=1,\dots,d_{in}+1}$$\left(p_{i}Bin(v_{i})\right)_{i=1,\dots,d_{in}+1}$.
Note that this map is now not only unique/well-defined but also continuous.

\begin{figure}
    \centering
    \includegraphics[width=0.9\linewidth]{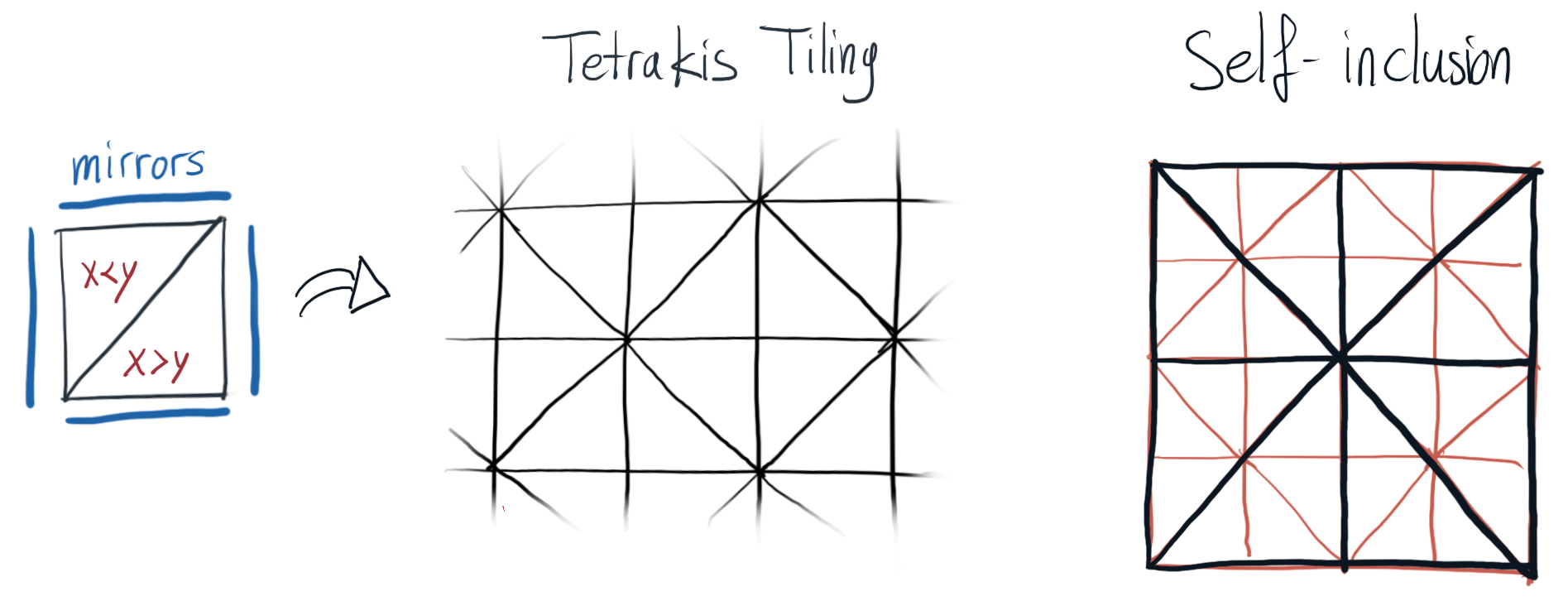}
    \caption{The Tetrakis triangulation in 2D.}
    \label{fig:placeholder}
\end{figure}

Given a circuit $C$ taking $d_{in}M$ binary inputs, i.e. the binary
representations of $d_{in}$ integers in the range $\{0,\dots,2^{M}-1\}$,
and returns $d_{out}M_{out}$ binary outputs, we write $TK_{M}[C]:[0,1]^{d_{in}}\to[0,1]^{d_{out}}$
for the extension of the values of $C$ (which are defined on the grid
$\{n2^{-\frac{M}{2}}:n\in\{0,\dots,2^{\frac{M}{2}}\}\}$) to the hyper-cube
using the Tetrakis triangulation: 
\[
TK_{M}[C](x)=\sum_{(v,p)\in V_{M}(x)}p\sum_{m=1}^{M_{out}}2^{-m}C_{\cdot,m}(Bin(v)).
\]
The set $\mathcal{TK} = \{TK_M[C]:\forall \text{circuits } C, M\in\mathbb{N}\}$ of Tetrakis function is countable since the circuits are themselves countable. Furthermore, these functions are HTMC computable:

\begin{prop}
\label{prop:HTMC_norm_of_TK_functions}There is a universal constant
$c$ such that for all circuits $C$, we have
\[
C_p(TK_{M}[C],\epsilon;L_\infty )\leq\begin{cases}
0 & \epsilon\geq1\\
cd_{in}\left(\left|C\right|_p+M-\log_{2}\epsilon\right) & \epsilon<1.
\end{cases}
\]
and therefore
\[
\left\Vert TK_{M}[C]\right\Vert _{M_p^{\gamma}(L_\infty)}\leq\left[cd_{in}\left(\left|C\right|_p+M\right)\right]^{\frac{1}{\gamma}}.
\]
\end{prop}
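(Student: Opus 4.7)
The plan is a direct construction: for each target accuracy $\epsilon \in (0,1)$ I build a circuit $A_\epsilon$ that approximates $TK_M[C]$ uniformly within $\epsilon$, bound its $|\cdot|_p$ norm, and then maximise over $\epsilon$ to derive the HTMC bound. Since $TK_M[C]$ is piecewise linear on the Tetrakis triangulation with mesh spacing $2^{-M}$ and vertex values in $[0,1]$, a short barycentric calculation gives Lipschitz constant at most $(d_{in}+1)\,2^M$ in the $\ell^\infty \to \ell^\infty$ operator norm. Rounding each coordinate of $x$ to $K := M + \lceil \log_2(3 d_{in}/\epsilon)\rceil$ bits therefore changes the output by at most $\epsilon/3$ and costs nothing in circuit size.

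The remainder of $A_\epsilon$ has three parts. First (\emph{simplex location}): split each rounded coordinate into its top $M$ bits (giving the enclosing hypercube corner $v^0$, up to the mirror-symmetry of the Tetrakis tiling) and a fractional remainder $y_i \in [0, 2^{-M}]$. Feed the $y_i$'s into a sorting network on $K$-bit words (e.g.\ bitonic sort) to recover a sorting permutation $\sigma$ together with sorted values; the $d_{in}+1$ simplex vertices $v_i = v^0 + \sum_{j \le i} e_{\sigma(d_{in}-j+1)}$ and barycentric weights $p_i$ (consecutive differences of the sorted $y_i$'s plus boundary terms) follow with a few more bit-operations. Second (\emph{apply the inner circuit}): place $d_{in}+1$ independent copies of $C$ in parallel on the bit strings $Bin(v_i)$; since parallel composition multiplies $|\cdot|_p$ by the number of copies, this contributes exactly $(d_{in}+1)|C|_p$. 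Third (\emph{weighted aggregation}): truncate each output of $C$ to $M_{\text{out}} \le K$ bits (discarded bits contribute less than $\epsilon/3$ to the $L_\infty$ error), multiplex each output bit with the shift $p_i\,2^{-m}$, and sum with a balanced adder tree. Both the sorting and adder subcircuits have nearly-linear total size and logarithmic depth, so for any $p \in [2/3, 1]$ their $|\cdot|_p$ norm is bounded by $\tilde{O}(d_{in}(M + \log \epsilon^{-1}))$; concatenation is sub-additive in $|\cdot|_p^p$ (up to a universal constant when $p < 1$), hence $|A_\epsilon|_p \le c\, d_{in}(|C|_p + M - \log_2 \epsilon)$ with polylogarithmic factors absorbed into $c$.

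For the HTMC bound, define $g(\epsilon) = \epsilon^\gamma \cdot c\, d_{in}(|C|_p + M - \log_2 \epsilon)$ on $(0, 1]$. A direct derivative computation shows $g'(\epsilon)$ has the sign of $\gamma(|C|_p + M - \log_2 \epsilon) - 1/\ln 2$, which is non-negative whenever $|C|_p + M$ exceeds a universal constant; the remaining degenerate regime is trivially absorbed into $c$. Therefore $\sup_{\epsilon \in (0,1]} g(\epsilon) = g(1) = c\, d_{in}(|C|_p + M)$, and taking $\gamma$-th roots yields the stated bound on $\|TK_M[C]\|_{H_p^\gamma(L_\infty)}$.

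\emph{Main obstacle.} The delicate step is controlling the $|\cdot|_p$ norm (rather than the raw gate count) of the auxiliary subcircuits at $p = 2/3$: one needs a layered layout in which no single layer is too wide, so that the $\ell^{2/3}$ sum does not blow up by more than a polylogarithmic factor beyond $d_{in}(M + \log \epsilon^{-1})$. Sorting networks and balanced adder trees handle this cleanly because they have logarithmic depth and nearly uniform per-layer widths, but verifying this layer-by-layer, uniformly in $p \in [2/3, 1]$, is the only part of the argument that is not essentially routine.
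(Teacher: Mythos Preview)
Your proposal is correct and follows essentially the same route as the paper: locate the enclosing hypercube from the top $M$ bits, recover the simplex and barycentric weights via a sorting network on the fractional parts, run $d_{in}+1$ parallel copies of $C$ on the vertex encodings, and aggregate with a weighted adder tree; then observe that $\epsilon^\gamma\bigl(|C|_p+M-\log_2\epsilon\bigr)$ is maximised at $\epsilon=1$. You are in fact more explicit than the paper on two points it leaves implicit --- the Lipschitz bound justifying the choice of $K$, and the layer-width control needed for the $|\cdot|_{2/3}$ norm of the sorting and adder sub-circuits --- but the architecture of the argument is identical.
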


\begin{proof}
Without loss of generality, we may assume that the grid points have
integer value, because one could always rescale the inputs $x_{i}$
to $\frac{x_{i}-a_{i}}{w}2^{M}$.

We start by identifying the cube that contains $x$: we compute the
closest even and odd grid coordinates
\begin{align*}
x_{i}^{e} & =2\left\lfloor \frac{x_{i}}{2}\right\rfloor +1, & x_{i}^{o} & =2\left\lfloor \frac{x_{i}-1}{2}\right\rfloor 
\end{align*}
as well as the cube coordinates of $x_{i}$:
\[
s_{i}=\left|x_{i}-x_{i}^{e}\right|\in[0,1]
\]
which equals $1$ when $x_{i}$ equals $x_{i}^{o}$ and $0$ when
it equals $x_{i}^{e}$, and interpolates linearly between them. These
can be bomputes can be done in $O(d_{in}\left(M+\log_{2}\epsilon\right))$
steps.

We now look for the simplex that contains $x$, by sorting the $s_{i}$,
yielding a non-decreasing list $t_{1},\dots,t_{d_{in}}$ to which
we append $t_{0}=0$ and $t_{d_{in}+1}=1$. Given the permutation
$\pi$ that sorts the $s_{i}$, the $d_{in}+1$ vertices of the simplex
are then 
\begin{align*}
(y_{0},\dots,y_{d_{in}}) & =x^{\pi^{-1}(e,\dots,e)},x^{\pi^{-1}(o,e,\dots,e)},\dots,x^{\pi^{-1}(o,\dots,o,e)},x^{\pi^{-1}(o,\dots,o)},
\end{align*}
This can rewritten as 
\[
y_{k,i}=\begin{cases}
x_{i}^{e} & \pi(i)>k\\
x_{i}^{o} & \pi(i)\leq k
\end{cases}=\begin{cases}
x_{i}^{e} & s_{i}\geq t_{k+1}\\
x_{i}^{o} & s_{i}\leq t_{k}
\end{cases}
\]
because $\pi(i)\geq k+1$ implies $s_{i}=t_{\pi(i)}\geq t_{k+1}$
because the $t_{k}$s are non-decreasing, and similarly $\pi(i)\leq k$
implies $s_{i}\leq t_{k}$. The weight of the $k$-th vertex are then
simpy $p_{k}=t_{k+1}-t_{k}$. This can all be done with a circuit
of size $O(d_{in}(\log d_{in})^{2}\log_{2}\epsilon)$: the main cost
is a sorting network with $d_{in}(\log d_{in})^{2}$ comparison each
of cost $\log_{2}\epsilon$.

We then evaluate the circuit $C$ on the vertices and compute the
weighted sum
\[
TK_{M}[C](x)=\sum p_{k}C(y_{k}).
\]
which requires $O(d_{in}\left|C\right|_p+\log_{2}\epsilon)$ circuit size.
There is therefore a constant $c$ such that 
\[
C_p(TK_{M}[C],\epsilon;L_\infty)\leq\begin{cases}
0 & \epsilon\geq1\\
cd_{in}\left(\left|C\right|_p+M-\log_{2}\epsilon \right) & \epsilon<1.
\end{cases}
\]

This also implies that 
\[
\left\Vert TK_{M}[C]\right\Vert _{M_p^{\gamma}(L_\infty)}^{\gamma}\leq\max_{\epsilon\geq1}cd_{in}\left(\left|C\right|_p+M+\log_{2}\epsilon\right)\epsilon^{-\gamma}=cd_{in}\left(\left|C\right|+M\right)
\]
since the maximum is always attained at $\epsilon=1$.
\end{proof}

Now that we have shown that the Tetrakis functions are included in the HTMC ball, we want to show that their convex hull contains the HTMC ball, by showing that HTMC functions can be written as sums of Tetrakis functions. This turns out to be possible if the underlying norm is the $L_\infty$ norm and if we also assume some Hölder continuity:
\begin{prop}
\label{prop:TK_decomposition}Consider a function $f:[0,1]^{d_{in}}\to\mathbb{R}^{d_{out}}$
with bounded HTMC norm $\left\Vert f\right\Vert _{M_{p}^{\gamma}(L_{\infty})}$
for some $\gamma>0$ and bounded Hölder norm $\left\Vert f\right\Vert _{C^{\alpha}}$
for some $\alpha\in(0,1]$, then there are circuits $C_{k}$ for $k\geq k_{min}=-\left\lceil \log_{2}\left\Vert f\right\Vert _{\infty}\right\rceil $
such that $\left|C_{k}\right|_{p}\leq c_{1}\left\Vert f\right\Vert _{M_{p}^{\gamma}(L_{\infty})}^{\gamma}2^{\gamma k}$
and 
\[
\left\Vert f-\sum_{k=k_{min}}^{K}2^{-k+3}TK_{M(k)}[C_{k}]\right\Vert _{\infty}\leq2^{-K}
\]
for $M(k)=\left\lceil \frac{1}{\alpha}\log_{2}\left\Vert f\right\Vert _{C^{\alpha}}+\frac{1}{2}\log_{2}d_{in}+\frac{1}{\alpha}k\right\rceil $.
\end{prop}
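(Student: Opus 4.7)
The plan is to construct the decomposition by a dyadic telescoping argument that combines the $L_\infty$-HTMC hypothesis---which supplies circuit approximations of $f$ at every geometric scale---with the Hölder hypothesis---which controls the Tetrakis interpolation error. For each $k\geq k_{min}$, invoke the definition of $\left\Vert f\right\Vert _{H_{p}^{\gamma}(L_{\infty})}$ to pick a circuit $A_k$ with $\left\Vert A_k-f\right\Vert _{\infty}\leq 2^{-k}$ and $|A_k|_p\leq\left\Vert f\right\Vert _{H_{p}^{\gamma}(L_{\infty})}^{\gamma}2^{\gamma k}$, viewed as operating on $M(k)$-bit quantized inputs per coordinate. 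The formula for $M(k)$ in the statement is exactly what makes the Tetrakis-interpolation error $\left\Vert f\right\Vert _{C^{\alpha}}(d_{in}^{1/2}2^{-M(k)})^{\alpha}$ match $2^{-k}$, so a double triangle inequality---against $f$ itself and against its own Tetrakis interpolant---yields $\left\Vert f-g_{k}\right\Vert _{\infty}\lesssim 2^{-k}$ for $g_{k}:=TK_{M(k)}[A_{k}]$.

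The circuit $C_k$ is then defined to compute, on input a level-$k$ grid vertex $v$, the fixed-point encoding of $(A_{k}(v)-\tilde A_{k-1}(v))/2^{-k+3}$, where $\tilde A_{k-1}(v)$ is produced by running the sort-and-interpolate gadget from the proof of Proposition~\ref{prop:HTMC_norm_of_TK_functions} on $A_{k-1}$ at input $v$---so $\tilde A_{k-1}(v)=g_{k-1}(v)$ exactly. Consequently $2^{-k+3}TK_{M(k)}[C_{k}]$ equals the level-$k$ Tetrakis interpolation of the grid residuals $g_{k}(v)-g_{k-1}(v)$, whose magnitudes are at most $\left\Vert f-g_{k}\right\Vert _{\infty}+\left\Vert f-g_{k-1}\right\Vert _{\infty}\lesssim 2^{-k}$---comfortably inside the $2^{-k+3}$ scale. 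Setting $g_{k_{min}-1}:=0$ (valid since $\left\Vert f\right\Vert _{\infty}\leq 2^{-k_{min}+1}$) and telescoping produces the claimed form, reducing the partial-sum bound to $\left\Vert f-g_{K}\right\Vert _{\infty}\lesssim 2^{-K}$ plus a cumulative reinterpolation error that I control by the same balance that fixes $M(k)$. The size bound $|C_{k}|_{p}\leq c_{1}\left\Vert f\right\Vert _{H_{p}^{\gamma}(L_{\infty})}^{\gamma}2^{\gamma k}$ follows because $C_{k}$ is assembled from one copy of $A_{k}$ (dominant term), plus $d_{in}+1$ parallel copies of $A_{k-1}$ and an $O(d_{in}(\log d_{in})^{2}M(k))$ sorting network, all of which absorb into $c_{1}$.

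The main obstacle is that the level-$k$ and level-$(k-1)$ Tetrakis triangulations are not nested---the mirroring convention prevents refinement---so $g_{k}-g_{k-1}$ is not literally a level-$k$ Tetrakis function, which forces the reinterpolation step above. Bounding the cumulative reinterpolation error by $O(2^{-K})$ for general Hölder exponent $\alpha\in(0,1]$ is the most delicate point: it requires matching $\mathrm{Lip}(g_{k-1})\cdot 2^{-M(k)}$ against the Hölder-balanced choice of $M(k)$ so that the per-level errors form a geometric series with the right decay. A secondary technicality is the sign of the residual: since binary-decoded Tetrakis outputs are nonnegative, one adopts the convention that the leading output bit encodes a sign, effectively extending $TK_{M(k)}[C_{k}]$ to $[-1,1]$, with the factor $2^{3}$ of headroom in $2^{-k+3}$ absorbing the resulting constants.
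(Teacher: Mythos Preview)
Your overall strategy---dyadic telescoping of Tetrakis extensions of the $L_\infty$-optimal circuits $A_k$, with $M(k)$ chosen so that the H\"older interpolation error matches $2^{-k}$---is exactly the paper's. The divergence is in what you call ``the main obstacle'': you assert that the mirroring convention prevents the level-$M(k)$ and level-$M(k-1)$ Tetrakis triangulations from being nested, and therefore introduce a reinterpolation step whose cumulative error you must control. The paper's proof rests on the opposite fact: the Tetrakis triangulation satisfies a \emph{self-inclusion} property, so every level-$(M-1)$ Tetrakis function is exactly a level-$M$ Tetrakis function. The paper exploits this by building a circuit $B_{k-1}$ with $|B_{k-1}|_p\leq c_p|A_{k-1}|_p$ and $TK_{M(k)}[B_{k-1}]=TK_{M(k-1)}[A_{k-1}]$ \emph{as functions}; then $C_k$ encodes $A_k-B_{k-1}$ and the telescope collapses exactly to $TK_{M(K)}[A_K]$, with no residual term. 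Your $\tilde A_{k-1}$ is essentially this $B_{k-1}$, and once nesting is granted your reinterpolation error is identically zero and your argument is the paper's.

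The gap is that, under your own non-nesting hypothesis, the cumulative bound you sketch does not close. Your per-level estimate $\mathrm{Lip}(g_{k-1})\cdot 2^{-M(k)}$ is correct; with $\mathrm{Lip}(g_{k-1})\lesssim 2^{M(k-1)-k}$ (adjacent grid values of $A_{k-1}$ differ by $O(2^{-k})$ via H\"older continuity plus the $2^{-(k-1)}$ circuit error) and $M(k)-M(k-1)\approx 1/\alpha$, each level contributes a reinterpolation error of order $2^{-k}$. But these errors enter the $K$-th partial sum additively for every $k\le K$, so they accumulate to $\sum_{k=k_{min}}^{K}O(2^{-k})=O(2^{-k_{min}})=O(\|f\|_\infty)$, which does not decay with $K$ at all. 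A geometric series with ratio below $1$ is dominated by its first term, not its last; you cannot extract $O(2^{-K})$ this way. The fix is structural rather than analytic: verify (a short exercise already in dimension two) that the Tetrakis triangulation at scale $2^{-M}$ refines the one at scale $2^{-(M-1)}$, and the issue evaporates.
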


\begin{proof}
Let $A_{1},A_{2},\dots$ be the circuits that approximate $f$ in
$L_{\infty}$-norm $\left\Vert f-A_{k}\right\Vert _{\infty}\leq2^{-k}$
over the $2^{-M(k)}$-grid, the Tetrakis extension $TK_{M(k)}[A_{k}]$
for 
\[
M(k)=\left\lceil \frac{1}{\alpha}\log_{2}\left\Vert f\right\Vert _{C^{\alpha}}+\frac{1}{2}\log_{2}d_{in}+\frac{1}{\alpha}k\right\rceil \]
then satisfies
\begin{align*}
\left\Vert f(x)-TK_{k'}[A_{k}](x)\right\Vert  & \leq\left\Vert f(x)-TK_{M(k)}[f](x)\right\Vert +\left\Vert TK_{M(k)}[f](x)-TK_{M(k)}[A_{k}](x)\right\Vert \\
 & \leq\sum_{v,p\in V(x)}p\left(\left\Vert f(x)-f(v)\right\Vert +\left\Vert f(v)-A_{k}(v)\right\Vert \right)\\
 & \leq\left\Vert f\right\Vert _{C^{\alpha}}d_{in}^{\frac{\alpha}{2}}2^{-\alpha M(k)}+2^{-k}\leq2^{-k+1}.
\end{align*}

Thanks to the self-inclusion property of the Tetrakis triangulation, there
is a circuit $B_{k}$ such that $TK_{M(k)+1}[B_{k}]=TK_{M(k)}[A_{k}]$
and with $\left|B_{k}\right|_{p}\leq c_{p}\left|A_{k}\right|_{p}$ for some constant $c_p$: given a gridpoint $x$ of the finer grid, if the $i$-th coordinate $x_i$ is odd, we map it to the closest even and odd coordinates $x_i^e,x_i^o$ of the coarser grid, and if it is even, it already lies on the coarser grid and we set $x_i^e=x_i^o=x_i/2$, we then evaluate $A_k$ on $x_i^e$ and $x_i^o$ and average the outputs.
%% [] maybe develop here

We can therefore approximate $f$ with a telescopic sum 
\[
\left\Vert f-\sum_{k=1}^{K}TK_{M(k)}\left[A_{k}-B_{k-1}\right]\right\Vert _{\infty}\leq2^{-K}.
\]
with the advantage that the summands are small
\begin{align*}
\left\Vert TK_{M(k)}\left[A_{k}-B_{k-1}\right]\right\Vert _{\infty} & \leq\left\Vert TK_{M(k)}\left[A_{k}\right]-TK_{M(k-1)}\left[A_{k-1}\right]\right\Vert _{\infty}\\
 & \leq\left\Vert TK_{M(k)}\left[A_{k}\right]-f\right\Vert _{\infty}+\left\Vert TK_{M(k-1)}\left[A_{k-1}\right]-f\right\Vert _{\infty}\\
 & \leq2^{-k+1}+2^{-k+2}=6\cdot2^{-k}.
\end{align*}
This implies that in the signed difference $A_{k}-B_{k-1}$, i.e.
all the bits before location $k-\left\lceil \log_{2}6\right\rceil =k-3$
are zero. There is a circuit $C_{k}$ with signed outputs such that $2^{-k+3}TK_{M(k)}[C_{k}]=TK_{M(k)}[A_{k}-B_{k-1}]$
and there is a constant $c_{1}$ such that $\left|C_{k}\right|_{p=\frac{2}{3}}\leq c_{1}\left\Vert f\right\Vert _{M^{\gamma}(\infty,p)}^{\gamma}2^{\gamma k}$
because $C_{k}$ is obtained by a finite number of sums of the algorithms
$A_{k}$ and $A_{k-1}$.
\end{proof}

These two propositions combine into a corollary that illustrate why these Tetrakis functions can be thought of as approximations of the vertices of the HTMC ball intersected with the Hölder ball. More precisely, for all $\gamma,\alpha$, we define a certain rescaling of our set of Tetrakis functions
\[
\mathcal{TK}_{\gamma,\alpha} = \left\{\frac{TK_{M}[C]}{\sqrt[\gamma]{cd_{in}\left|C\right|}\vee2^{\alpha M}}:\forall \text{circuits } C, M\in\mathbb{N}\right\}
\]
\begin{thm}
For all $\gamma>2$, $\delta>0$ and $\alpha\in (0,1]$, there are constants $c_{\gamma,\alpha,\delta},C_{\gamma,\alpha,\delta}$ such that
\[
c_{\gamma,\alpha,\delta} d_{in}^{-(\frac{1}{\gamma} \vee \frac{\alpha}{2})} \left( B_{M^{\gamma-\delta}(L_{\infty})}\cap B_{C^{\alpha}} \right)\subset CConv\;\mathcal{TK}_{\gamma,\alpha} \subset C_{\gamma,\alpha,\delta} \left( B_{M^{\gamma}(L_2(\pi))}\cap B_{C^{\alpha}} \right),
\]
where $CConv(\Omega)$ is the closed convex hull, or the closure of the convex hull.
\end{thm}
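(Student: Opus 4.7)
The plan is to prove the two inclusions separately, using the two preceding propositions: the outer inclusion from Proposition~\ref{prop:HTMC_norm_of_TK_functions} together with the HTMC convexity Theorem~\ref{thm:convexity_HTMC}, and the inner inclusion from the Tetrakis decomposition of Proposition~\ref{prop:TK_decomposition} re-expressed as a convex combination of elements of $\mathcal{TK}_{\gamma,\alpha}$.

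For the outer inclusion, the first step is to bound both norms on a single $\tilde{TK} = TK_M[C]/\beta \in \mathcal{TK}_{\gamma,\alpha}$ with $\beta = (cd_{in}|C|)^{1/\gamma} \vee 2^{\alpha M}$. Proposition~\ref{prop:HTMC_norm_of_TK_functions} gives $\|TK_M[C]\|_{H^\gamma(L_\infty)}^\gamma \leq cd_{in}(|C|+M)$: the $|C|$-part is absorbed by $\beta^\gamma \geq cd_{in}|C|$, and the $M$-part by $\beta^\gamma \geq 2^{\alpha\gamma M} \geq c_\gamma M$. Combined with $\|\cdot\|_{H^\gamma(L_2(\pi))} \leq \|\cdot\|_{H^\gamma(L_\infty)}$ this yields $\|\tilde{TK}\|_{H^\gamma(L_2(\pi))} = O(1)$. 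For the Hölder norm, the piecewise-affine structure of $TK_M[C]$ on the $2^{-M}$-grid with coordinate-wise slope at most $2^M$, interpolated with its $L_\infty$-bound $\leq 1$, gives $\|TK_M[C]\|_{C^\alpha} \lesssim d_{in}^{\alpha/2}\,2^{\alpha M}$, so $\|\tilde{TK}\|_{C^\alpha} \lesssim d_{in}^{\alpha/2}$. Because $\gamma > 2$, Theorem~\ref{thm:convexity_HTMC} transfers the HTMC bound to any convex combination at the cost of the universal constant $c_\gamma$, while the Hölder bound transfers by linearity; this closes the outer inclusion with $C_{\gamma,\alpha,\delta} = O(c_\gamma \vee d_{in}^{\alpha/2})$.

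For the inner inclusion, I would apply Proposition~\ref{prop:TK_decomposition} at rate $\gamma-\delta$ to decompose $f = \sum_{k \geq k_{\min}} 2^{-k+3}\,TK_{M(k)}[C_k]$ with $|C_k|_p \lesssim 2^{(\gamma-\delta)k}$ and $M(k) \lesssim k/\alpha + \log d_{in}$, then rewrite each summand as $\lambda_k \tilde{TK}_k$ for $\tilde{TK}_k = TK_{M(k)}[C_k]/\beta_k \in \mathcal{TK}_{\gamma,\alpha}$ with $\beta_k = (cd_{in}|C_k|)^{1/\gamma} \vee 2^{\alpha M(k)}$ and $\lambda_k = 2^{-k+3}\beta_k$. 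The inclusion then follows once the total weight $L := \sum_k \lambda_k$ is bounded by $O(d_{in}^{1/\gamma \vee \alpha/2})$, because then $f/L$ is a convex combination of elements of $\mathcal{TK}_{\gamma,\alpha}$, matching the prefactor $c_{\gamma,\alpha,\delta}\,d_{in}^{-(1/\gamma \vee \alpha/2)}$ on the LHS. The circuit part of $L$ is geometric thanks to the $\delta$-gap, $\sum_k 2^{-k+3}(cd_{in}|C_k|)^{1/\gamma} \lesssim d_{in}^{1/\gamma}\sum_k 2^{-k\delta/\gamma} = O(d_{in}^{1/\gamma})$, which matches the $d_{in}^{1/\gamma}$ factor in the target prefactor.

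The main obstacle will be the Hölder contribution $\sum_k 2^{-k+3}\cdot 2^{\alpha M(k)}$: because $M(k) \sim k/\alpha$, each level contributes $\Theta(d_{in}^{\alpha/2})$ and so the naive sum diverges. Resolving this requires a more careful construction that matches the rescaling in $\mathcal{TK}_{\gamma,\alpha}$ to the Hölder regularity of $f$; the direction I would pursue is to exploit the nested structure of Tetrakis triangulations (any Tetrakis function at resolution $M_1$ is also a Tetrakis function at any finer resolution $M_2 \geq M_1$) to bundle the Hölder-dominated tail into fewer Tetrakis functions whose combined circuits still have size controlled by the geometric decay $|C_k|_p \lesssim 2^{(\gamma-\delta)k}$. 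The $d_{in}^{-(1/\gamma \vee \alpha/2)}$ prefactor on the LHS reflects the worst case between the two ways a Tetrakis function can scale with dimension --- the HTMC dimensional factor $d_{in}^{1/\gamma}$ from the circuit part and the Hölder dimensional factor $d_{in}^{\alpha/2}$ from the grid part --- and the $\delta$-gap in the HTMC rate is used essentially both for the summability of the circuit contribution and for controlling the size of any bundled circuits in the proposed resolution.
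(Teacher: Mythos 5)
Your outer inclusion is essentially the paper's argument (bound the two norms of a single rescaled Tetrakis function via Proposition~\ref{prop:HTMC_norm_of_TK_functions} and the grid Lipschitz constant, then pass to the convex hull using Theorem~\ref{thm:convexity_HTMC} for the HTMC part), and your setup for the inner inclusion — decompose via Proposition~\ref{prop:TK_decomposition}, write each summand as $\lambda_k$ times an element of $\mathcal{TK}_{\gamma,\alpha}$, and bound $\sum_k \lambda_k$ — is also the paper's. Your treatment of the circuit contribution to $\sum_k\lambda_k$ is correct and matches the paper: the $\delta$-gap in the HTMC rate makes $2^{-k}(cd_{in}|C_k|)^{1/\gamma}\lesssim d_{in}^{1/\gamma}2^{-\delta k/\gamma}$ geometrically summable.

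The gap is exactly where you flagged it: you never close the Hölder contribution $\sum_k 2^{-k+3}2^{\alpha M(k)}$, and the "bundling via nested triangulations" direction you sketch is not carried out and is not what the paper does. The paper's resolution is much simpler: it spends the $\delta$-slack on the Hölder exponent as well as on the HTMC rate. Concretely, it applies Proposition~\ref{prop:TK_decomposition} to a function lying in the \emph{smaller} Hölder ball $B_{C^{\alpha/(1-\delta)}}$, so that $M(k)=\lceil\tfrac12\log_2 d_{in}+\tfrac{1-\delta}{\alpha}k\rceil$ has slope $\tfrac{1-\delta}{\alpha}$ rather than $\tfrac{1}{\alpha}$; then $2^{\alpha M(k)}\approx d_{in}^{\alpha/2}2^{(1-\delta)k}$ and $\lambda_k=2^{-k+3}\bigl(\,(cd_{in}|C_k|)^{1/\gamma}\vee 2^{\alpha M(k)}\bigr)\lesssim \bigl(d_{in}^{1/\gamma}\vee d_{in}^{\alpha/2}\bigr)2^{-\delta k}$ is geometrically summable, giving $\sum_k\lambda_k=O_{\gamma,\alpha,\delta}(d_{in}^{1/\gamma\vee\alpha/2})$ uniformly in $K$. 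Note that your reading of the theorem statement (Hölder ball exactly $B_{C^{\alpha}}$ on the left) is the literal one, and the divergence you identified is real under that reading — one constant per level, hence a factor of $K$ — so the statement as written only follows from the paper's proof with the left-hand Hölder ball shrunk to exponent $\alpha/(1-\delta)$ (mirroring the shrinking of $\gamma$ to $\gamma-\delta$). To repair your proof you should either adopt that re-parametrization or actually execute the bundling construction; as written, the inner inclusion is not established.
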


\begin{proof}
(RHS) Since the RHS is convex, we only need to prove that the scaled
Tetrakis function are included in the RHS. This follows from the fact
that 
\[
\left\Vert \frac{TK_{M}[C]}{\sqrt[\gamma]{cd_{in}\left|C\right|}\vee2^{\alpha M}}\right\Vert _{M^{\gamma}}\leq\frac{\sqrt[\gamma]{cd_{in}\left(\left|C\right|+M\right)}}{\sqrt[\gamma]{cd_{in}\left|C\right|\vee M}}\leq\sqrt{2}
\]
and 
\[
\left\Vert \frac{TK_{M}[C]}{\sqrt[\gamma]{cd_{in}\left|C\right|}\vee2^{\alpha M}}\right\Vert _{C^{\alpha}}\leq\frac{\left\Vert TK_{M}[C]\right\Vert _{C^{\alpha}}}{2^{\alpha M}}\leq\frac{2^{\alpha M}}{2^{\alpha M}}=1
\]
because at worst two neighboring grid points have a difference in
values of at most $1$ and a distance of $2^{-M}$.

(LHS) Proposition \ref{prop:TK_decomposition} tells us that for any function $f$ with
$\left\Vert f\right\Vert _{M^{\gamma(1-\delta)}},\left\Vert f\right\Vert _{C^{\frac{\alpha}{1-\delta}}}\leq1$,
then there are circuits $(C_{k})_{k\geq k_{min}}$ which define a series of approximations 
\[
\tilde{f}_K-\sum_{k=k_{min}}^{K}2^{-k+3}TK_{M(k)}[C_{k}]
\]
with $M(k)=\left\lceil \frac{1}{2}\log_{2}d_{in}+\frac{1-\delta}{\alpha}k\right\rceil $, $\left|C_{k}\right|_{p}\leq c_{1}2^{\gamma(1-\delta)k}$,
and 
\[
\left\Vert f-\tilde{f}_K\right\Vert _{\infty}\leq2^{-K}.
\]
Our goal is now to show that this sum of Tetrakis is a convex combination:
\begin{align*}
\tilde{f}_K & =\sum_{k=k_{min}}^{K}2^{-k+3}\left(\sqrt[\gamma]{cd_{in}\left|C\right|}\vee2^{\alpha M(k)}\right)\frac{TK_{M(k)}[C_{k}]}{\sqrt[\gamma]{cd_{in}\left|C\right|}\vee2^{\alpha M(k)}}\\
 & =\sum_{k=k_{min}}^{K}2^{-k+3}\left(\sqrt[\gamma]{cd_{in}c_{1}}2^{(1-\delta)k}\vee d_{in}^{\frac{\alpha}{2}}2^{(1-\delta)k}\right)\frac{TK_{M(k)}[C_{k}]}{\sqrt[\gamma]{cd_{in}\left|C\right|}\vee2^{\alpha M(k)}}\\
 & =\sum_{k=k_{min}}^{K}2^{3-\delta k}\left(\sqrt[\gamma]{cd_{in}c_{1}}\vee d_{in}^{\frac{\alpha}{2}}\right)\frac{TK_{M(k)}[C_{k}]}{\sqrt[\gamma]{cd_{in}\left|C\right|}\vee2^{\alpha M(k)}}
\end{align*}
Therefore, there is a constant $c_{\gamma,\alpha,\delta}$ such that  $c_{\gamma,\alpha,\delta} d_{in}^{-(\frac{1}{\gamma} \vee \frac{\alpha}{2})} \tilde{f}_K$ is contained in the convex hull for all $K$.
\end{proof}

This result can be interpreted as saying that the Tetrakis function are an approximation of the vertices of the HTMC ball intersected with the Hölder ball, up to the constants $c_{\gamma,\alpha,\delta},C_{\gamma,\alpha,\delta}$, the log terms which lead to the $\delta$s and the discrepancy between the $L_\infty$ vs $L_2(\pi)$ norms. Also we did not prove that the Tetrakis functions are the smallest such set, only that it is countable and therefore `much smaller' than the full ball. 

This result also allows us to concretize the idea of using convex optimization to minimize circuit size, as it implies that HTMC norm minimization
\[
\min_f \|f-f^*\|_\pi^2 + \lambda (\|f\|_{M^\gamma}+\|f\|_{C^\alpha})
\]
is equivalent (up to constants) to the infinite dimensional LASSO problem
\[
\min_\beta \left\|\sum_{M\in\mathbb{N},\text{circuit }C} \beta_{M,C} \frac{TK_M[C]}{\sqrt[\gamma]{cd_{in}\left|C\right|}\vee2^{\alpha M}} - f^* \right\|_\pi^2 + \lambda \|\beta\|_1,
\]
which could for example be minimized with some form of greedy coordinate descent, thanks to the fact that the Tetrakis functions are countable. Obviously, because there is a exponential number of Tetrakis functions of size larger than some threshold $\nu$, this would still require a super-polynomial runtime in the worst case. But the convexity could still yield some advantage, in particular if the true function is a sum of many simple circuits: a number of steps that is `only' exponential in the size of the largest sub-circuit could be sufficient to obtain a good estimate for the minimal circuit.

This is a very promising direction, but we leave a careful analysis of such an algorithm to follow up work, and instead focus on connecting the HTMC norm to ResNets. The major role that these Tetrakis functions will play in our analysis is a testament to the power of a convex geometric approach to computation in the HTMC regime.

\section{ResNets as a Computation Paradigm}\label{subsec:ResNets}
In this section, we focus on ResNets and their related (pseudo-)norm $\|f\|_{R^\omega}$, and prove our main sandwich bound, Theorem \ref{thm:main_sandwich}, using the theoretical tools developed in the previous sections.

In Section \ref{subsec:Basic-properties-ResNet}, we prove a few basic properties of the ResNet complexity measure and related norm. Section \ref{sec:pruning_bound} then proves the LHS of Theorem \ref{thm:main_sandwich} in the form of a pruning bound. Finally, the RHS is proven in Section \ref{subsec:construction_bound} with a construction bound based on the Tetrakis functions of Section \ref{subsec:Tetrakis_functions}.

\subsection{Basic Properties}\label{subsec:Basic-properties-ResNet}
For ResNet norm, we have
\begin{prop}\label{prop:properties_ResNet_norm}
For two functions $f,g$, and a norm $\left\Vert \cdot\right\Vert $
, we have
\begin{enumerate}
\item If $\left\Vert f\right\Vert \leq\left\Vert f\right\Vert _{\infty}$
for all $f$, then $\left\Vert f\right\Vert \leq\omega\left(\frac{2}{\omega-1}\right)^{\frac{\omega-1}{\omega}}\left\Vert f\right\Vert _{R^{\omega}(\left\Vert \cdot\right\Vert )}$,
\item $\left\Vert f\circ S\right\Vert _{C^{\alpha=\omega^{-1}}}\leq\omega\left(\frac{2}{\omega-1}\right)^{\frac{\omega-1}{\omega}}\left\Vert f\right\Vert _{R^{\omega}(L_{\infty})}$,
\item $\left\Vert f+g\right\Vert _{R^{\omega}}\leq\left(\left\Vert f\right\Vert _{R^\omega}^{\frac{2\omega}{1+2\omega}}+\left\Vert g\right\Vert _{R^\omega}^{\frac{2\omega}{1+2\omega}}\right)^{\frac{1+2\omega}{2\omega}}$,
\item $\left\Vert f\circ g\right\Vert _{R^\omega}\leq2\left(\left\Vert f\right\Vert _{R^\omega}^{\frac{2\omega}{1+2\omega}}+Lip(f)^{\frac{2\omega}{1+2\omega}}\left\Vert g\right\Vert _{R^\omega}^{\frac{2\omega}{1+2\omega}}\right)^{\frac{1+2\omega}{2\omega}}$.
\end{enumerate}
Also note that $\omega\left(\frac{2}{\omega-1}\right)^{\frac{\omega-1}{\omega}}\leq3$
and it converges to $1$ as $\omega\searrow 1$.

\end{prop}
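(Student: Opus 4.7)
All four inequalities follow a common template: upper-bound a quantity associated with a single ResNet $f_\theta$ in terms of $R(\theta)$, invoke the definition of $\|\cdot\|_{R^\omega}$ to obtain approximators $\theta_\epsilon$ with $\|f_{\theta_\epsilon}-f\|\leq\epsilon$ and $R(\theta_\epsilon)\leq\|f\|_{R^\omega}^\omega\epsilon^{1-\omega}$, and minimize over $\epsilon$ (or a split $\epsilon=\epsilon_1+\epsilon_2$ in the two-function statements). For items (1) and (2), the needed ResNet-level bounds are $\|f_\theta\|_\infty\leq R(\theta)$ and $\mathrm{Lip}(f_\theta\circ S)\leq R(\theta)$. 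The first follows from $f_\theta(x)=W_{out}\alpha_L(x)$ and $|\alpha_{L,j}(x)|\leq C_{L,jj}$, which give $\|f_\theta\|_\infty\leq\|W_{out}C_L\|_1\leq R(\theta)$ (since $\|W_{out}C_L\|_1^{2/3}$ is one of the summands in $R(\theta)^{2/3}$). The second follows from the constraint $\mathrm{Lip}((\alpha_0\to f_\theta)\circ D_0^{-1})\leq 1$, which yields $\mathrm{Lip}(\alpha_0\to f_\theta)\leq\|D_0\|$, combined with $\mathrm{Lip}(x\to W_{in}Sx)\leq\|W_{in}S\|_{\mathrm{op}}$, giving $\mathrm{Lip}(f_\theta\circ S)\leq\|D_0 W_{in}S\|_1\leq R(\theta)$. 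Using $\|\cdot\|\leq\|\cdot\|_\infty$ and the triangle inequality I then obtain $\|f\|\leq 2\epsilon+R(\theta_\epsilon)$ for (1) and $\|f(Sx+Sh)-f(Sx)\|_\infty\leq 2\epsilon+R(\theta_\epsilon)\|h\|$ for (2); substituting $R(\theta_\epsilon)\leq\|f\|_{R^\omega}^\omega\epsilon^{1-\omega}$ and zeroing the derivative in $\epsilon$ gives the optimal choice $\epsilon=((\omega-1)/2)^{1/\omega}\|f\|_{R^\omega}\|h\|^{1/\omega}$ (with $\|h\|=1$ in (1)), and plugging back produces precisely the stated prefactor $\omega(2/(\omega-1))^{(\omega-1)/\omega}$ and the Hölder exponent $\alpha=1/\omega$.

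For items (3) and (4) I combine two ResNets into a larger one. In (3) I place $f_\theta$ and $g_\phi$ in parallel with block-diagonal $W_\ell,V_\ell$, so that the per-layer terms $\|D_\ell|W_\ell||V_\ell|C_{\ell-1}\|_1$ simply add across the two branches; sub-additivity of $x\mapsto x^{2/3}$ then yields $R(\theta\cup\phi)^{2/3}\leq R(\theta)^{2/3}+R(\phi)^{2/3}$. Taking $\epsilon=\epsilon_1+\epsilon_2$ with approximators at accuracies $\epsilon_i$, inserting the $R^\omega$ bounds, and optimizing the split by Lagrange multipliers produces a minimum of the form $(A^{1/(q+1)}+B^{1/(q+1)})^{q+1}$ with $A=\|f\|_{R^\omega}^{2\omega/3}$, $B=\|g\|_{R^\omega}^{2\omega/3}$ and $q=2(\omega-1)/3$, which, raised to the $3/2$, gives the announced exponent $2\omega/(1+2\omega)$. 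For (4) I instead stack $g_\phi$ before $f_\theta$: the Lipschitz constraint on the $D_\ell^g$'s must now accommodate the outer block $f_\theta$, which I handle by rescaling $D_\ell^g\mapsto\mathrm{Lip}(f)\cdot D_\ell^g$. This forces the $g$-layer contribution to $R^{2/3}$ to pick up a factor $\mathrm{Lip}(f)^{2/3}$ and makes the composition error $\leq\epsilon_1+\mathrm{Lip}(f)\epsilon_2$; running the same Lagrange-multiplier optimization as in (3) then produces the claimed bound with $\mathrm{Lip}(f)\|g\|_{R^\omega}$ in place of $\|g\|_{R^\omega}$, the leading constant $2$ absorbing the bookkeeping at the junction between the two sub-networks.

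\textbf{Main obstacle.} The delicate step is the combined-network construction in (3) and especially (4): one must verify that every boundary term of $R$ (the $W_{in}$ and $W_{out}$ blocks and the input scale $S$) is correctly transported through the gluing, that the new $C_\ell$ diagonals absorb the change of regime, and that rescaling $D_\ell^g$ by $\mathrm{Lip}(f)$ really preserves the $1$-Lipschitz constraint through the outer block. Once these constructions are in place, the remaining ingredients --- sub-additivity of $x^{2/3}$ and the one-variable and Lagrange-multiplier optimizations --- are routine.
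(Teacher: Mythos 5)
Your overall strategy is the same as the paper's: bound $\|f_\theta\|_\infty$ and $\mathrm{Lip}(f_\theta\circ S)$ by $R(\theta)$ via the $\|W_{out}C_L\|_1$ and $\|D_0W_{in}S\|_1$ terms, optimize over $\epsilon$ to get the constant $\omega(2/(\omega-1))^{(\omega-1)/\omega}$ for (1)--(2), and glue networks in parallel (resp.\ in series, with a $\mathrm{Lip}(f)$-rescaling of the inner $D_\ell$'s and a factor $2$ from the junction) for (3)--(4), followed by the optimal split of $\epsilon$. Items (1), (2) and (4) are fine as proposed.

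There is, however, a concrete gap in item (3). You assert that with the block-diagonal choice $D_\ell=\mathrm{diag}(E_\ell,F_\ell)$ the per-layer costs simply add and subadditivity of $t\mapsto t^{2/3}$ finishes the job. But this $D_\ell$ is not admissible: the map from the concatenated activations $(u,v)$ to the output is $h(u,v)=g_1(u)+g_2(v)$, and even when $g_1\circ E_\ell^{-1}$ and $g_2\circ F_\ell^{-1}$ are each $1$-Lipschitz, $h\circ D_\ell^{-1}$ has Lipschitz constant up to $\sqrt{\mathrm{Lip}^2+\mathrm{Lip}^2}=\sqrt 2$ (Cauchy--Schwarz is tight when the two gradients align), violating the constraint $\mathrm{Lip}((\alpha_\ell\to f_{\theta|\phi})\circ D_\ell^{-1})\leq 1$ in the definition of $R$. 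Rescaling both blocks uniformly to restore admissibility costs a factor $\sqrt2$ in the final bound, which the statement of (3) does not allow. The paper avoids this by taking the asymmetric choice $D_\ell=\mathrm{diag}(a^{-1/2}E_\ell,(1-a)^{-1/2}F_\ell)$, for which the combined Lipschitz constant is $\sqrt{a+(1-a)}=1$, and then optimizing $a$ per layer using $\min_a\left(a^{-1/2}x+(1-a)^{-1/2}y\right)=\left(x^{2/3}+y^{2/3}\right)^{3/2}$; this recovers exact additivity of $R^{2/3}$ with no loss. Your Lagrange-multiplier optimization over $\epsilon_1+\epsilon_2=\epsilon$ and the resulting exponent $\frac{2\omega}{1+2\omega}$ are correct once this is fixed.
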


\begin{proof}
(1) $\left\Vert f_{\theta}\right\Vert _{\infty}\leq\sum_{i,j=1}^{d}\left|W_{out,ij}\right|\left\Vert \alpha_{L,j}\right\Vert _{\infty}=\left\Vert W_{out}C_{L}\right\Vert _{1}\leq R(\theta).$

More generally, $\left\Vert f\right\Vert \leq\min_{\epsilon}\left\Vert f_{\theta_{\epsilon}}\right\Vert _{\infty}+\epsilon\leq\min_{\epsilon}\left\Vert f\right\Vert _{R^{\omega}}^{\omega}\epsilon^{1-\omega}+\epsilon=\omega(\omega-1)^{\frac{1-1}{\omega}}\left\Vert f\right\Vert _{R^{\omega}}.$ 

(2) We first observe that $Lip(f_{\theta}\circ S)\leq Lip((\alpha_{0}\to f_{\theta})D_{0}^{-1})\left\Vert D_{0}W_{in}S\right\Vert _{op}\leq\left\Vert D_{0}W_{in}S\right\Vert _{1}\leq R(\theta).$

Now for all $\epsilon>0$, we have parameter $\theta_{\epsilon}$
such that $\left\Vert f-f_{\theta_{\epsilon}}\right\Vert _{\infty}\le\epsilon$
and $R(\theta_{\epsilon})\leq\left\Vert f\right\Vert _{R^{\omega}}^{\omega}\epsilon^{1-\omega}$,
which implies that $Lip(f_{\theta_{\epsilon}}\circ S)\leq\left\Vert f\right\Vert _{R^{\omega}}^{\omega}\epsilon^{1-\omega}$.
Now for any two points $x,y$, we have 
\begin{align*}
\left\Vert f(Sx)-f(Sy)\right\Vert  & \leq\min_{\epsilon}\left\Vert f(Sx)-f_{\theta_{\epsilon}}(Sx)\right\Vert +\left\Vert f_{\theta_{\epsilon}}(Sx)-f_{\theta_{\epsilon}}(Sy)\right\Vert +\left\Vert f_{\theta_{\epsilon}}(Sy)-f(Sy)\right\Vert \\
 & \leq\min_{\epsilon}2\epsilon+\left\Vert f\right\Vert _{R^{\omega}}^{\omega}\epsilon^{1-\omega}\left\Vert x-y\right\Vert \\
 & =\omega\left(\frac{2}{\omega-1}\right)^{\frac{\omega-1}{\omega}}\left\Vert f\right\Vert _{R^{\omega}}\left\Vert x-y\right\Vert ^{\frac{1}{\omega}}.
\end{align*}
This implies $\left\Vert f\circ S\right\Vert _{C^{\alpha=\omega^{-1}}}\leq\omega\left(\frac{2}{\omega-1}\right)^{\frac{\omega-1}{\omega}}\left\Vert f\right\Vert _{R^{\omega}}$. 

(3) We put two networks with parameters $\theta$ and $\phi$ in parallel
$\theta|\phi$ so that $f_{\theta|\phi}=f_{\theta}+f_{\phi}$. Given
$E_{\ell}$ and $F_{\ell}$ the diagonal matrices that yield the optimal
representation costs for $\theta$ and $\phi$ respectively, we choose
$D_{\ell}=\left(\begin{array}{cc}
a^{-\frac{1}{2}}E_{\ell} & 0\\
0 & (1-a)^{-\frac{1}{2}}F_{\ell}
\end{array}\right)$ and obtain
\[
Lip((\alpha_{\ell}\to f_{\theta|\phi})\circ D_{\ell}^{-1})\leq\sqrt{a Lip((\alpha_{\ell}\to f_{\theta})E_{\ell}^{-1})^{2}+(1-a)Lip((\alpha_{\ell}\to f_{\phi})F_{\ell}^{-1})^{2}}\leq1.
\]
We add $\tilde{\cdot}$ to all matrices corresponding to the second
set of parameters $\phi$, and since $\min_{a}a^{-\frac{1}{2}}x+(1-a)^{-\frac{1}{2}}y=\left(x^{\frac{2}{3}}+y^{\frac{2}{3}}\right)^{\frac{3}{2}}$
we obtain
\begin{align*}
R(\theta|\phi)^{\frac{2}{3}} & \leq\sum_{\ell=1}^{L-1}\left(\min_{a_{\ell}}a_{\ell}^{-\frac{1}{2}}\left\Vert E_{\ell}\left|W_{\ell}\right|\text{\ensuremath{\left|V_{\ell}\right|}}C_{\ell}\right\Vert _{1}+(1-a_{\ell})^{-\frac{1}{2}}\left\Vert F_{\ell}\left|\tilde{W}_{\ell}\right|\left|\tilde{V}_{\ell}\right|\tilde{C}_{\ell}\right\Vert _{1}\right)^{\frac{2}{3}}\\
 & \leq\sum_{\ell=1}^{L-1}\left\Vert E_{\ell}\left|W_{\ell}\right|\text{\ensuremath{\left|V_{\ell}\right|}}C_{\ell}\right\Vert _{1}^{\frac{2}{3}}+\left\Vert F_{\ell}\left|\tilde{W}_{\ell}\right|\left|\tilde{V}_{\ell}\right|\tilde{C}_{\ell}\right\Vert ^{\frac{2}{3}}\\
 & =R(\theta)^{\frac{2}{3}}+R(\phi)^{\frac{2}{3}}.
\end{align*}
Note that for simplicity, we have dropped the terms corresponding to
$W_{in}$ and $W_{out}$, but they can be handled similarly.

To extend the bound to $\omega>1$, we choose $\epsilon_{f}=\left(\left\Vert f\right\Vert _{R^{\omega}}^{\frac{2\omega}{1+2\omega}}+\left\Vert g\right\Vert _{R^{\omega}}^{\frac{2\omega}{1+2\omega}}\right)^{-1}\left\Vert f\right\Vert _{R^{\omega}}^{\frac{2\omega}{1+2\omega}}\epsilon$
and $\epsilon_{g}=\left(\left\Vert f\right\Vert _{R^{\omega}}^{\frac{2\omega}{1+2\omega}}+\left\Vert g\right\Vert _{R^{\omega}}^{\frac{2\omega}{1+2\omega}}\right)^{-1}\left\Vert g\right\Vert _{R^{\omega}}^{\frac{2\omega}{1+2\omega}}\epsilon$
and choose parameters $\theta_{\epsilon_{f}}$ and $\phi_{\epsilon_{g}}$
such that $\left\Vert f+g-f_{\theta_{\epsilon_{f}}|\phi_{\epsilon_{g}}}\right\Vert \leq\epsilon_{f}+\epsilon_{g}=\epsilon$
and 
\begin{align*}
R(\theta_{\epsilon_{f}}|\phi_{\epsilon_{g}})^{\frac{2}{3}} & \leq R(\theta_{\epsilon_{f}})^{\frac{2}{3}}+R(\phi_{\epsilon_{g}})^{\frac{2}{3}}\leq\left\Vert f\right\Vert _{R^{\omega}}^{\frac{2\omega}{3}}\epsilon_{f}^{\frac{2}{3}(1-\omega)}+\left\Vert g\right\Vert _{R^{\omega}}^{\frac{2\omega}{3}}\epsilon_{g}^{\frac{2}{3}(1-\omega)}\\
 & =(\left\Vert f\right\Vert _{R^{\omega}}^{\frac{2\omega}{1+2\omega}}+\left\Vert g\right\Vert _{R^{\omega}}^{\frac{2\omega}{1+2\omega}})^{\frac{2}{3}\omega+\frac{1}{3}}\epsilon^{\frac{2}{3}(1-\omega)}
\end{align*}
which implies $\left\Vert f+g\right\Vert _{R^{\omega}}\leq\left(\left\Vert f\right\Vert _{R^{\omega}}^{\frac{2\omega}{1+2\omega}}+\left\Vert g\right\Vert _{R^{\omega}}^{\frac{2\omega}{1+2\omega}}\right)^{\frac{1+2\omega}{2\omega}}$.

(4) We obtain $\left\Vert f\circ g\right\Vert _{R}\leq2\left(\left\Vert f\right\Vert _{R}^{\frac{2}{3}}+Lip(f)^{\frac{2}{3}}\left\Vert g\right\Vert _{R}^{\frac{2}{3}}\right)^{\frac{2}{3}}$
by simply composing the two networks, with the specificity that the
middle $W_{in}$ and $W_{out}$ need to be doubled with opposite signs
to leverage the property of the ReLU that $x=\sigma(x)-\sigma(-x)$,
thus potentially leading to a doubling of the $R$-norm. This constant
factor could be avoided with some small modifications to the architecture.

As for (3), we now optimize over the choice of $\epsilon_{f}$ and $\epsilon_{g}$
and obtain
\[
\left\Vert f\circ g\right\Vert _{R}\leq2\left(\left\Vert f\right\Vert _{R}^{\frac{2\omega}{1+2\omega}}+Lip(f)^{\frac{2\omega}{1+2\omega}}\left\Vert g\right\Vert _{R}^{\frac{2\omega}{1+2\omega}}\right)^{\frac{1+2\omega}{2\omega}}.
\]
\end{proof}
We see the similarity with the HTMC norm, with the notable additional
property that the $R$-norm controls the Lipschitzness of the function,
which is not at all the case for the HTMC norm, and this will result
in the additional Hölder norm term in in the LHS of the main sandwich bound.

\subsection{Pruning Bound} \label{sec:pruning_bound}

We can now prove our pruning bound
\begin{thm}
\label{thm:LHS_sandwich}For any ResNet $f_{\theta}$, one has for
all $\gamma>2$, $\left\Vert f_{\theta}\right\Vert _{M^{\gamma}}\leq c_{\gamma}R(\theta)$.

Furthermore, for all $\omega>1$ and $\delta>0$ we have \[\left\Vert f\right\Vert _{M^{\gamma=2\omega+\delta}}\leq c_{\omega,\delta}\left\Vert f\right\Vert _{R^{\omega}(L_{2}(\pi))}.\]
\end{thm}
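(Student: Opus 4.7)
The plan is to establish the bound in two stages. First, for a single ResNet $f_\theta$, I would construct for every $\epsilon > 0$ a binary circuit of size $\lesssim R(\theta)^2 \epsilon^{-2} \mathrm{polylog}(\epsilon^{-1})$ that $L_2(\pi)$-approximates $f_\theta$ within $\epsilon$; this will yield the first statement. The ``furthermore'' part then follows by feeding a near-optimal ResNet approximation of $f$ (supplied by the definition of $\|\cdot\|_{R^\omega}$) into the single-network bound and applying the triangle inequality.

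\textbf{Stage 1: pruning, then discretisation.} For fixed $\theta$ and target accuracy $\epsilon$, I would randomly sparsify the parameters via independent rescaled Poissons $\tilde\theta_i = \theta_i P_i/\lambda_i$ with $P_i \sim \mathrm{Poisson}(\lambda_i)$, choosing the rates $\lambda_i$ proportional to the weighted $\ell_1$ contribution of parameter $i$ entering the definition of $R(\theta)$; in particular in the $\ell$-th residual block the rates are proportional to the entries of $D_\ell |W_\ell|\,|V_\ell|\,C_{\ell-1}$. The constraint $Lip\bigl((\alpha_\ell \to f_\theta)\circ D_\ell^{-1}\bigr) \leq 1$ built into $R(\theta)$ guarantees that an arbitrary perturbation $s$ of the $\ell$-th layer activations affects the output by at most $\|D_\ell s\|$. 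Combining this with the Poisson-variance bound on the pre-activation fluctuation at each layer, and telescoping across layers, Cauchy--Schwarz gives $\mathbb{E}\|f_\theta - f_{\tilde\theta}\|_\pi^2 \lesssim R(\theta)^2/\sum_i \lambda_i$. Choosing $\sum_i \lambda_i \sim R(\theta)^2/\epsilon^2$ yields expected error $\lesssim \epsilon^2$ and a second-moment Markov argument then produces a realisation with $O(R(\theta)^2 \epsilon^{-2})$ non-zero parameters that $\epsilon$-approximates $f_\theta$ in $L_2(\pi)$. Finally, I would discretise every surviving weight and every arithmetic operation to $O(\log \epsilon^{-1})$ bits so the additional discretisation error is $\lesssim \epsilon$; this yields a binary circuit of size $O\bigl(R(\theta)^2 \epsilon^{-2} \mathrm{polylog}(\epsilon^{-1})\bigr)$ approximating $f_\theta$. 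Taking $\max_\epsilon \epsilon^\gamma$ of this bound absorbs the polylog into a constant $c_\gamma = O(1/(\gamma-2))$ for every $\gamma > 2$, giving $\|f_\theta\|_{H^\gamma} \leq c_\gamma R(\theta)$.

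\textbf{Stage 2: from ResNets to general $f$.} For arbitrary $f$ and $\epsilon > 0$, the definition of $\|f\|_{R^\omega}$ supplies parameters $\theta_\epsilon$ with $\|f_{\theta_\epsilon} - f\|_\pi \leq \epsilon$ and $R(\theta_\epsilon) \leq \|f\|_{R^\omega}^\omega \epsilon^{1-\omega}$. Applying stage 1 to $f_{\theta_\epsilon}$ at accuracy $\epsilon$ produces a circuit of size $\lesssim R(\theta_\epsilon)^2 \epsilon^{-2} \mathrm{polylog}(\epsilon^{-1}) = \|f\|_{R^\omega}^{2\omega} \epsilon^{-2\omega} \mathrm{polylog}(\epsilon^{-1})$ that $\epsilon$-approximates $f_{\theta_\epsilon}$, and hence $2\epsilon$-approximates $f$. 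Taking $\max_\epsilon \epsilon^{2\omega+\delta}$ of the circuit-size estimate absorbs the polylog into $\epsilon^{-\delta}$ at the cost of a $\delta$-dependent constant, and homogeneity of the two norms then gives $\|f\|_{H^{2\omega+\delta}} \leq c_{\omega,\delta}\|f\|_{R^\omega}$.

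\textbf{Main obstacle.} The delicate step is the pruning variance analysis. The naive linearisation controls only the first-order Taylor term and hence only the weaker quantity $R_{lin}(\theta)$, whereas here we need a uniform, depth-independent control of all orders. The diagonal preconditioners $D_\ell$ and the activation scales $C_\ell$ entering the definition of $R(\theta)$ are engineered precisely for this: the $1$-Lipschitz constraint on $(\alpha_\ell \to f_\theta)\circ D_\ell^{-1}$ bounds the sensitivity of the output to arbitrary (not merely infinitesimal) layer-$\ell$ perturbations, while the $C_\ell$ factors bound the Poisson-noise amplitude injected at that layer. Turning this layer-by-layer propagation argument into a clean $L_2(\pi)$ bound — rather than an expectation that mixes Taylor orders — is the main work of the proof.
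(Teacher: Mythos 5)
Your Stage 2 is essentially the paper's argument for the ``furthermore'' part: feed the near-optimal ResNet $\theta_{\epsilon}$ from the definition of $\|\cdot\|_{R^{\omega}}$ into the single-network bound, combine the two errors, and trade the residual $\epsilon$-dependence for the $\delta$ in the exponent. Your Stage 1, however, departs from the paper and contains a genuine gap. The paper does \emph{not} prove the first claim by directly pruning the whole network: it writes $f_{\theta}$ as a composition of layer maps $f_{\ell}(x)=D_{\ell}\left(x+W_{\ell}\sigma(V_{\ell}(x,1))\right)D_{\ell-1}^{-1}$, applies the composition bound for the HTMC norm (Proposition \ref{prop:composition_HTMC}, with clamping to handle the mismatch between $A_{\ell-1:1}\#\pi$ and $f_{\ell-1:1}\#\pi$), and then applies the MLMC-based convexity result (Theorem \ref{thm:convexity_HTMC}) neuron by neuron, reducing everything to the HTMC norm of scalar multiplications and single ReLUs. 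All of the ``variance'' and ``propagation'' control is thereby delegated to results already proven earlier in the paper.

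The gap in your Stage 1 is exactly the step you yourself flag as ``the main work of the proof'' and then do not carry out. A single-level Poisson pruning with rates proportional to the weighted $\ell_{1}$ contributions controls only the first-order Taylor term, i.e. the quantity $R_{lin}(\theta)$; the paper introduces the more complicated $R(\theta)$ precisely because the linearized argument is only a heuristic. Concretely: (i) in the telescoping over layers, the map from the perturbed layer-$\ell$ activations to the output is itself a \emph{perturbed} network, so the constraint $Lip((\alpha_{\ell}\to f_{\theta})\circ D_{\ell}^{-1})\leq1$, which is imposed on the unperturbed network, does not bound the relevant propagation constant; (ii) the amplitude of the noise injected at layer $\ell$ involves $|V_{\ell}|$ acting on the perturbed activations, whose sup-norms are not a priori bounded by $C_{\ell-1}$. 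These are precisely the higher-order effects your Cauchy--Schwarz step sweeps under the rug. The paper's remark immediately after this theorem concedes that a pruning proof is possible but states it must be ``combined with some MLMC techniques,'' i.e. a multi-level scheme rather than the single-level one you propose. Without either that multi-level construction or the paper's compositionality-plus-convexity route, the inequality $\|f_{\theta}\|_{H^{\gamma}}\leq c_{\gamma}R(\theta)$ is not established, and Stage 2 inherits the gap.
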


\begin{proof}
(1) We write $f_{\theta}=f_{L:1}$ for $f_{\ell}(x)=D_\ell \left(x+W_{\ell}\sigma(V_{\ell}^{T}(x,1))\right)D_{\ell-1}^{-1}$, where we write $(x,1)$ for the vector $x$ with a $1$ concatenated at the end, and the $D_\ell$s are diagonal matrices such that $Lip((\alpha_\ell \to f_\theta) \circ D_\ell^{-1})=Lip(f_{L:\ell+1})\leq 1$. 

Since the outputs of $f_\ell$ will be contained inside the hyper-rectangle with sides $D_\ell C_\ell$, we may use the compositionality of the HTMC norm (Proposition \ref{prop:composition_HTMC}):
\[
\left\Vert f_{\theta}\right\Vert^{\frac{\gamma}{\gamma+1}}_{M^{\gamma}(\pi)}\leq \sum_{\ell=1}^{L}\left\Vert f_\ell \right\Vert^{\frac{\gamma}{\gamma+1}}_{M^{\gamma}(\pi_{\ell-1})},
\]
where $\pi_{\ell}$ is any distribution supported on the hyper-rectangle with sides $D_\ell C_\ell$.

We then use the convexity of the HTMC norm
(Theorem \ref{thm:convexity_HTMC}) to bound the HTMC norm of $f_\ell$
\[
\left\Vert f_{\ell}\right\Vert _{M^{\gamma}(\pi_{\ell-1})}\leq c_{\gamma}\sum_{i=1}^{w}\left\Vert D_\ell W_{\ell,\cdot i}\sigma(V_{\ell,i\cdot}^{T}(D_{\ell-1}^{-1}x,1))\right\Vert _{M^{\gamma}(\pi_{\ell-1})}.
\]
For a single neuron function $D_\ell W_{\ell,\cdot i}\sigma(V_{\ell,i\cdot}^{T}(D_{\ell-1}^{-1}x,1))$,
the intermediate representation is bounded $\left|V_{\ell,i\cdot}^{T}(D_{\ell-1}^{-1}x,1)\right|\leq\sum_{j=1}^{d+1}\left|V_{\ell,ij}\right|C_{\ell-1,jj}$,
we can therefore split it into functions $g_i(x)=V_{\ell,i\cdot}^{T}(D_{\ell-1}^{-1}x,1)$
and $h_i(z)=D_\ell W_{\ell,i}\sigma(z)$. For all $\gamma>0$, there is a constant
$b_{\gamma}$ such that the HTMC norm of scalar multiplication is bounded
$\left\Vert x\mapsto ax\right\Vert _{M^{\gamma}(\pi)}\leq b_{\gamma}\left|a\right|\left|c\right|$
for any distribution supported in $[-c,c]$. Similarly for the ReLU followed by a multiplication, we have$\left\Vert x\mapsto a\sigma(x)\right\Vert _{M^{\gamma}(\pi)}\leq b_{\gamma}\left|a\right|\left|c\right|$.
We can now bound the HTMC norms of $g$ and $h$ using Theorem \ref{thm:convexity_HTMC}:
\begin{align*}
\left\Vert g_i\right\Vert _{M^{\gamma}(\pi_{\ell-1})} & =\left\Vert x\mapsto\sum_{j=1}^d V_{\ell,ij}D_{\ell-1,jj}^{-1}x_{j}+V_{\ell,d+1}\right\Vert _{M^{\gamma}(\pi_{\ell-1})} \\ 
&\leq c_{\gamma}b_{\gamma}\sum_{j}\left|V_{\ell,ij}\right|C_{\ell-1,jj}\\
\left\Vert h_i\right\Vert _{M^{\gamma}(\pi_{\ell-1})} & \leq\left\Vert z\mapsto\sum_{k=1}^{d}D_{\ell,kk}W_{\ell,ki}\sigma(z)\right\Vert _{M^{\gamma}(g\#\pi_{\ell-1})}\\ 
&\leq c_{\gamma}b_{\gamma}\left(\sum_{k}D_{\ell,kk}\left|W_{\ell,ki}\right|\right)\left(\sum_{j}\left|V_{\ell,ij}\right| C_{\ell-1,jj}\right)
\end{align*}
Since we also have $Lip(h)\leq\left\Vert D_\ell W_{\ell,\cdot i}\right\Vert _{1}$,
we obtain
\[
\left\Vert x\mapsto W_{\ell,\cdot i}\sigma(V_{\ell,i\cdot}^{T}(x,1))\right\Vert _{M^{\gamma}(\pi_{\ell-1})}\leq c^{\frac{1}{\gamma}}2^{\frac{\gamma+1}{\gamma}}c_{\gamma}b_{\gamma} D_\ell \left\Vert W_{\ell,\cdot i}\right\Vert _{1}\left\Vert V_{\ell,i\cdot}C_{\ell-1}\right\Vert _{1}
\]
 and therefore 
\[
\left\Vert f_{\ell}\right\Vert _{M^{\gamma}(\pi_{\ell-1})}\leq c^{\frac{1}{\gamma}}2^{\frac{\gamma+1}{\gamma}}c_{\gamma}^{2}b_{\gamma}\left\Vert \left|W_{\ell}\right|\left|V_{\ell}\right|C_{\ell-1}\right\Vert _{1}.
\]

Applying Proposition \ref{prop:composition_HTMC} to the layer-wise
composition to obtain
\[
\left\Vert f_{\theta}\right\Vert _{M^{\gamma}(\pi)}\leq c^{\frac{2}{\gamma}}2^{\frac{\gamma+1}{\gamma}}c_{\gamma}^{2}b_{\gamma}\left(\sum_{\ell}\left\Vert \left|W_{\ell}\right|\left|V_{\ell}\right|C_{\ell-1}\right\Vert _{1}^{\frac{\gamma}{\gamma+1}}\right)^{\frac{\gamma+1}{\gamma}}
\]

(2) By the definition of the $R^{\omega}(L_{2}(\pi))$ norm, we have
parameters $\theta_{\epsilon}$ for all $\epsilon$ such that $\left\Vert f-f_{\theta_{\epsilon}}\right\Vert _{\pi}\leq\epsilon$
and $R(\theta_{\epsilon})\leq\left\Vert f\right\Vert _{R^{\omega}(L_{2}(\pi))}^{\omega}\epsilon^{1-\omega}$.
On the other hand, the first part of the proof implies that for all
$\gamma>2$ there is a circuit $A_{\epsilon}$ with error $\left\Vert A_{\epsilon}-f_{\theta_{\epsilon}}\right\Vert _{\pi}\leq\epsilon$
and size $\left|A_{\epsilon}\right|\leq C_{\gamma}^{\gamma}R(\theta_{\epsilon})^{\gamma}\epsilon^{-\gamma}$
for $C_{\gamma}=c^{\frac{2}{\gamma}}2^{\frac{\gamma+1}{\gamma}}c_{\gamma}^{2}b_{\gamma}$.
Putting the two together, we have
\[
C(f,2\epsilon)\leq C_{\gamma}^{\gamma}\left\Vert f\right\Vert _{R^{\omega}(L_{2}(\pi))}^{\omega\gamma}\epsilon^{-\gamma\omega}
\]
which implies that $\left\Vert f\right\Vert _{M^{\gamma\omega}}\leq2C_{\gamma}^{\frac{1}{\omega}}\left\Vert f\right\Vert _{R^{\omega}(L_{2}(\pi))}$
and if one chooses $\gamma=2+\omega^{-1}\delta$, we obtain
\[
\left\Vert f\right\Vert _{M^{\gamma=2\omega+\delta}}\leq c_{\omega,\delta}\left\Vert f\right\Vert _{R^{\omega}(L_{2}(\pi))}.
\]
\end{proof}
This proof was written with a heavy use of Theorem \ref{thm:convexity_HTMC}
and Proposition \ref{prop:composition_HTMC}, to showcase the versatility
of these results, but it does obfuscate what is happening concretely.
The same result could also be proven as a pruning bound, where the weights $W_{\ell,ki}$ and $V_{\ell,ij}$ are removed with probabilities proportional to $D_{\ell,kk}\left\Vert V_{\ell,i\cdot}C_{\ell-1}\right\Vert _{1}$
and $\left\Vert D_{\ell}W_{\ell,\cdot i}\right\Vert _{1}C_{\ell-1,jj}$
respectively, combined with some MLMC techniques.

\subsection{Construction Bound} \label{subsec:construction_bound}
We have shown in Section \ref{subsec:Tetrakis_functions} how any function $f$ that is both  HTMC computable and Hölder continuous can be represented as a sum of $O(\log_2\epsilon)$ Tetrakis functions. This implies that if ResNets can represent these Tetrakis function then they can represent general HTMC functions.

We will construct a ResNet that roughly follows the structure of the proof of Proposition \ref{prop:HTMC_norm_of_TK_functions}: first the
input $x$ is mapped to the weighted binary representations of its
surrounding vertices $\left(p_{i}Bin(v_{i})\right)_{i=1,\dots,d_{in}+1}$
and second the circuit is evaluated on each of the $d_{in}+1$ vertices
in parallel before being summed up. The first part can itself be decomposed
into two parts, the first of which computes the binary representations
of each coordinate using an iterative method, and the second part
implements a sorting algorithm to recover the simplex that contains
$x$.

For the second part, we rely on the following result that describes
how to represent a binary circuit as min/max circuit:
\begin{prop}
\label{prop:ResNet_repr_circuit}Given a circuit $C$,  there is
a network with parameters $\theta$ that represents the circuit in
the sense that for any input $x\in\{-a,a\}^{d_{in}}$, we have $f_{\theta}(x)=aC(\frac{x}{a})$
(where the binary values are represented as $\pm1$ instead of the
usual $0,1$). Furthermore, if $\left\Vert x\right\Vert _{\infty}\leq b$
over the input domain, we have
\begin{align*}
R(\theta) & \leq c\sqrt{d_{out}}b\left|C\right|_{p=\frac{2}{3}}.
\end{align*}
\end{prop}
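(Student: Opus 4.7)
The plan is to build a ResNet $f_\theta$ whose layers mirror the gate-level structure of $C$. I fix a layering $n_1,\dots,n_L$ of $C$ attaining $|C|_{p=2/3}^{2/3}=\sum_\ell n_\ell^{2/3}$, reserve one hidden coordinate per gate (so $d\ge d_{in}+|C|$) together with the constant bias coordinate $\alpha_{\cdot,d+1}\equiv 1$, and let $W_{in}$ embed $x$ into the first $d_{in}$ coordinates. The skip connections keep every previously computed binary value available at every later layer, so layer $\ell$ only needs to add the $n_\ell$ newly computed gate values to their reserved slots. The read-out $W_{out}$ then selects the $d_{out}$ coordinates holding the output gates with entries $\pm 1$.

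Next I implement each logical gate in the $\pm a$ encoding with $O(1)$ ReLU units: $\mathrm{NOT}(x)=-x$ needs no ReLU; $\mathrm{AND}(x,y)=2a\sigma\bigl(\tfrac{x+y}{2a}\bigr)-a$ uses one ReLU plus one bias read-out from $\alpha_{\cdot,d+1}$; $\mathrm{OR}(x,y)=a-2a\sigma\bigl(-\tfrac{x+y}{2a}\bigr)$ is symmetric. A direct check on the four Boolean inputs shows these reproduce the circuit values and confine intermediate activations to $[-a,a]$; if $b>a$ one inserts a single clipping layer first. Layer $\ell$ has width $w_\ell\le 2n_\ell$, each row of $V_\ell$ has three nonzeros of magnitude $O(1/a)$ (two inputs plus bias), and each column of $W_\ell$ carries a single $O(a)$ entry into the corresponding fresh coordinate.

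Under this construction $|W_\ell||V_\ell|C_{\ell-1}$ is sparse with $n_\ell$ nonzero rows of $\ell_1$-mass $O(a)$, so $\||W_\ell||V_\ell|C_{\ell-1}\|_1\lesssim n_\ell a$; likewise $\|W_{in}S\|_1\lesssim d_{in}b$ and $\|W_{out}C_L\|_1\le d_{out}a$. To satisfy $\mathrm{Lip}((\alpha_\ell\to f_\theta)\circ D_\ell^{-1})\le 1$ I would take $D_\ell=c\sqrt{d_{out}}\,I$ uniformly. Substituting and using $a\le b$,
\[R(\theta)^{2/3}\;\lesssim\;d_{out}^{1/3}a^{2/3}\sum_\ell n_\ell^{2/3}+(\text{lower-order terms from }W_{in},W_{out})\;\lesssim\;d_{out}^{1/3}b^{2/3}|C|_{p=2/3}^{2/3},\]
and raising to the $3/2$ power gives the advertised $R(\theta)\lesssim\sqrt{d_{out}}\,b\,|C|_{p=2/3}$.

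The main obstacle is rigorously justifying the Lipschitz bound $\mathrm{Lip}(\alpha_\ell\to f_\theta)\lesssim\sqrt{d_{out}}$ that underlies the uniform choice $D_\ell=c\sqrt{d_{out}}\,I$. The crude estimate $\mathrm{Lip}(\alpha_{\ell-1}\to\alpha_\ell)\le 1+\|W_\ell V_\ell\|_{\mathrm{op}}$ telescoped over the $L-\ell$ remaining layers grows exponentially in depth, so one has to exploit two structural facts together: (i) $W_{out}$ selects only $d_{out}$ of the $d+1$ coordinates of $\alpha_L$, so the effective Jacobian $\partial f_\theta/\partial\alpha_\ell$ has at most $d_{out}$ nonzero rows and its Frobenius norm upper-bounds its operator norm by a factor $\sqrt{d_{out}}$; and (ii) the skip connections carry each coordinate forward unchanged while the explicit gate formulas are $1$-Lipschitz in each argument, so each per-output-coordinate sensitivity is $O(1)$. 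If the uniform $D_\ell$ proves too coarse, the fallback is a coordinate-wise $D_\ell$ weighted by per-coordinate sensitivity, which still plugs cleanly into the weighted $\ell_1$ accounting of $R(\theta)$ gate by gate.
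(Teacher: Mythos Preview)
Your overall architecture---one hidden coordinate per gate, layers mirroring the circuit layering, $O(1)$ ReLUs per gate---is exactly the paper's. The genuine gap is the one you flag yourself: the Lipschitz bound $\mathrm{Lip}(\alpha_\ell\to f_\theta)\lesssim\sqrt{d_{out}}$. Your proposed justification does not go through with your choice of gate formulas.

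The issue is that your $\mathrm{AND}(x,y)=2a\sigma\bigl(\tfrac{x+y}{2a}\bigr)-a$ has gradient $(1,1)$ on the active region, not $(1,0)$ or $(0,1)$. So a depth-$L$ balanced binary tree of your AND gates, evaluated where all ReLUs are active, has gradient $(1,\dots,1)\in\mathbb{R}^{2^L}$ and Lipschitz constant $2^{L/2}$, growing with depth. ``$1$-Lipschitz in each argument'' composes badly: it bounds each Jacobian \emph{entry} by $1$, but with $d\ge|C|$ columns this only gives $\|J\|_{\mathrm{op}}\le\|J\|_F\le\sqrt{d_{out}\cdot d}$, not $\sqrt{d_{out}}$. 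Neither of your two structural facts rescues this, and the coordinate-wise $D_\ell$ fallback would inherit the same blow-up through the fan-in structure.

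The paper's fix is a one-line change: implement $\mathrm{AND}=\min$, $\mathrm{OR}=\max$, $\mathrm{NOT}=-\mathrm{id}$ (e.g.\ $\min(x,y)=x-\sigma(x-y)$). These agree with your formulas on $\{-a,a\}^2$ but are piecewise \emph{coordinate projections}: on every linear region each gate output equals $\pm$(one of its inputs). Hence the Jacobian of any composition $f_{L:\ell}$ has, in each row, exactly one nonzero entry equal to $\pm 1$, so $\|\nabla(f_{L:\ell})_k\|_2=1$ for every output $k$ and $\mathrm{Lip}(f_{L:\ell})\le\sqrt{d_{out}}$ uniformly in depth. With this substitution the rest of your accounting ($C_{\ell,ii}\le b$, $\|D_\ell|W_\ell||V_\ell|C_{\ell-1}\|_1\lesssim\sqrt{d_{out}}\,b\,n_\ell$, sum of $n_\ell^{2/3}$) is correct and matches the paper.
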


\begin{proof}
We simply replace AND and OR with min and max respectively and NOT
with negation. Each of these operations can be represented with a
finite number of neurons, and we put them together into a neural network
that is sparse, i.e. there is a $c>1$ such that there are at most
$c\left|C\right|$ nonzero entries and all entries have absolute value
at most $c$. Note that since we are considering a ResNet, the 'unused'
variables do not simply vanish, however, with a wide enough network
there is always enough additional room that we can wait for the last
layer to drop those unused variables.

Let us now compute the Lipschitz constants $Lip(f_{L:\ell})$, which
takes the intermediate representations $z$ and computes the outputs
as a mix of max, min, and negations. The function $f_{L:\ell}$ is
piecewise linear, with each 'piece' corresponding to inputs $z$ where
all max and min are attained at only one of these entries. Therefore
the gradient $\nabla(f_{L:\ell})_{k}$ of the $k$-th output has only
one non-zero entry that is either $1$ or $-1$. We therefore have
that $Lip(f_{L:\ell})\leq\sqrt{d_{out}}$.

Since the activations $\alpha_{j}^{(\ell)}(x)$ are bounded by $\left\Vert x\right\Vert _{\infty}\leq b$,
we obtain that 
\[
\left(\sum_{\ell=1}^{L-1}\left(Lip(f_{L:\ell})b\sum_{i}\left\Vert W_{\ell,\cdot i}\right\Vert _{1}\left\Vert V_{\ell,i\cdot}\right\Vert _{1}\right)^{\frac{2}{3}}\right)^{\frac{3}{2}}\leq c\sqrt{d_{out}\mathbb{E}_{x\sim\pi}\left\Vert x\right\Vert _{\infty}^{2}}\left|C\right|_{p=\frac{2}{3}}.
\]
\end{proof}

We can describe the ResNet representation of Tetrakis functions:
\begin{prop}
\label{prop:TK_representation}Given a circuit $C:\{0,1\}^{d_{in}\times M}\mapsto\{0,1\}^{d_{out}\times M_{out}}$,
then
\[
\left\Vert TK_M[C]\right\Vert _{R}=O\left(\sqrt{d_{out}}\left(d_{in}\sqrt{M}2^{M}+d_{in}(\log d_{in})^{2}+\sqrt{(d_{in}+1)}\left|C\right|_{p=\frac{2}{3}}\right)\right).
\]
\end{prop}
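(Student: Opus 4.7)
The plan is to realize $TK_M[C]$ as a ResNet obtained by composing three sub-networks following the circuit proof of Proposition \ref{prop:HTMC_norm_of_TK_functions}: a \emph{binarization} sub-network $b$, a \emph{sorting} sub-network $s$, and a \emph{circuit-evaluation} sub-network $e$. I would bound $\|b\|_R$, $\|s\|_R$, $\|e\|_R$ separately and combine them via two iterations of the composition inequality (point 4 of Proposition \ref{prop:properties_ResNet_norm}); after absorbing constants this gives
\[
\|e \circ s \circ b\|_R = O\bigl(\|e\|_R + \mathrm{Lip}(e)\|s\|_R + \mathrm{Lip}(e)\,\mathrm{Lip}(s)\,\|b\|_R\bigr).
\]

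For the circuit-evaluation sub-network, the inputs are the weighted binary representations $(p_k\,\mathrm{Bin}(y_k))_{k=0,\dots,d_{in}}$ produced by $s \circ b$. After an affine shift converting $\{0,p_k\}$-valued bits to $\{-p_k,p_k\}$, I would apply the min/max realization of $C$ from Proposition \ref{prop:ResNet_repr_circuit}: by $1$-homogeneity of min, max, and negation, this outputs $p_k\,C(\mathrm{Bin}(y_k))$ in the $\pm p_k$ representation, which a final linear layer decodes as $p_k \sum_m 2^{-m} C_{\cdot,m}(\mathrm{Bin}(y_k))$ and sums over $k$. A single evaluation has $R$-norm $O(\sqrt{d_{out}}\,p_k\,|C|_{2/3})$ and Lipschitz $O(\sqrt{d_{out}})$; stacking $d_{in}+1$ evaluations in parallel via point 3 of Proposition \ref{prop:properties_ResNet_norm}, then using $\sum_k p_k=1$ with concavity of $x \mapsto x^{2/3}$ (so $\sum_k p_k^{2/3} \leq (d_{in}+1)^{1/3}$), gives $\|e\|_R = O(\sqrt{d_{out}}\sqrt{d_{in}+1}\,|C|_{2/3})$. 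The sorting sub-network $s$ would be a ReLU realization of Batcher's bitonic sort on $s_1,\dots,s_{d_{in}}$, using $O(d_{in}(\log d_{in})^2)$ min/max comparators of bounded width, giving $\mathrm{Lip}(s) = O(1)$ and $\|s\|_R = O(d_{in}(\log d_{in})^2)$.

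The main obstacle is the binarization sub-network $b$, which must produce $p_k\,\mathrm{Bin}(y_k)$ for $k=0,\dots,d_{in}$ as a PL function of $x \in [0,1]^{d_{in}}$. Although $y_k(x)$ and $p_k(x)$ are individually discontinuous across simplex boundaries, the products $p_k\,\mathrm{Bin}(y_k)$ are continuous because $p_k$ vanishes precisely where the bits of $y_k$ jump. Each product is built from PL computations of $x_i^e$, $x_i^o$, $s_i$ and of the $M$ bits of $\mathrm{Bin}(x_i^e)$, $\mathrm{Bin}(x_i^o)$; the $k$-th bit of a coordinate is a period-$2^{1-k}$ square wave realizable by a PL layer of width $O(2^k)$, for a total of $O(d_{in}\,2^M)$ pieces across all bits and coordinates. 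The subtle $\sqrt{M}$ factor arises from the $p=2/3$ sum-of-layers structure of $R(\theta)$: when the computation is split into $\Theta(M)$ layers of balanced width (one per bit position, with the diagonal rescaling $D_\ell$ normalizing the $2^{m-1}$ Lipschitz of each bit against its $2^{-m}$ weight in the downstream decoding), Jensen's inequality gives $(\sum_{\ell=1}^{M} n_\ell^{2/3})^{3/2} = O(\sqrt{M}\,d_{in}\,2^M)$. Verifying this precise $\sqrt{M}$ balancing (rather than a naive $M$ or $M^{3/2}$) is the most delicate bookkeeping step; once $\|b\|_R = O(d_{in}\sqrt{M}\,2^M)$ is established, substitution into the composition bound with $\mathrm{Lip}(e) = O(\sqrt{d_{out}})$ and $\mathrm{Lip}(s) = O(1)$ yields
\[
\|TK_M[C]\|_R = O\Bigl(\sqrt{d_{out}}\bigl(d_{in}\sqrt{M}\,2^M + d_{in}(\log d_{in})^2 + \sqrt{d_{in}+1}\,|C|_{2/3}\bigr)\Bigr).
\]
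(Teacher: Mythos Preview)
Your three-part decomposition $e\circ s\circ b$ and your treatments of $e$ (the min/max realization of Proposition~\ref{prop:ResNet_repr_circuit}, applied in $d_{in}+1$ parallel copies) and $s$ (a bitonic sorting network) match the paper's $f_3\circ f_2\circ f_1$ closely. The substantive gap is in the binarization sub-network $b$, where your proposed mechanism for the $\sqrt{M}$ factor does not work as stated.

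The paper does not compute bits as periodic waveforms. It builds the \emph{weighted} representations $r_{i,m}^e=(1-s_{i,m})\mathrm{Bin}_m(x_i^e)$ and $r_{i,m}^o=s_{i,m}\mathrm{Bin}_m(x_i^o)$ \emph{recursively in the grid scale} $m=1,\dots,M$, using a constant-size helper $g$ (defined piecewise-linearly on the $\ell_1$ ball) that maps pairs $(r_{i,m,j}^e,r_{i,m,j}^o)$, $(r_{i,m,j-1}^e,r_{i,m,j-1}^o)$ to $(r_{i,m+1,j}^e,r_{i,m+1,j}^o)$. The $\sqrt{M}$ then arises from the Lipschitz constant of the level-$m$-to-level-$M$ map, which is $O(\sqrt{M}\,2^{M-m})$ (the $\sqrt{M}$ from the $M$ output coordinates, the $2^{M-m}$ from the doubling at each recursion step), so that $R(\theta_1)=O\bigl(d_{in}\sum_m \sqrt{M}\,2^{M-m}\bigr)=O(d_{in}\sqrt{M}\,2^M)$. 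Your Jensen argument --- $M$ balanced layers giving $(\sum_\ell n_\ell^{2/3})^{3/2}\le \sqrt{M}\sum_\ell n_\ell$ --- conflates raw node count with $R(\theta)$ and ignores the Lipschitz weighting $D_\ell$, which is far from $O(1)$ here; if bit $k$ sits in its own layer of width $O(2^k)$ the layers are not balanced, while if each weighted bit is realized as a single PL function of $x_i$ it already has $O(2^M)$ breakpoints (since $1-s_i$ has period $2$), giving $O(d_{in}M2^M)$ pieces with no natural $M$-layer structure. Your remark about balancing ``the $2^{m-1}$ Lipschitz of each bit against its $2^{-m}$ weight in the downstream decoding'' confuses input bits (fed to $C$, carrying no prescribed weight) with output bits (decoded by $2^{-m}$); these cancellations are unrelated. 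And a ``square wave'' is discontinuous, hence not realizable by a ReLU network at all --- only the weighted bits are continuous, and the recursive construction is precisely what exploits this. A smaller point: your $\sqrt{d_{in}+1}$ for $\|e\|_R$ via $\sum_k p_k^{2/3}\le(d_{in}+1)^{1/3}$ is invalid because the $p_k$ enter through $C_{\ell,ii}=\|\alpha_{\ell,i}\|_\infty$ and each $p_k(x)$ attains $1$ somewhere; the paper extracts this factor instead from the Lipschitz constant of the combined parallel circuit and final sum.
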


\begin{proof}
The construction of the ResNet will be in three parts $f=f_{3}\circ f_{2}\circ f_{1}$
going through dimensions $d_{in}$, $d_{in}(2k+1)$, $(d_{in}+1)d_{in}k$
and finally $d_{out}$. First $f_{1}$ takes an input $x$ and computes
for each coordinate $x_{i}$ the weighted binary representations of
the closest even and odd grid coordinate $(1-s_{i})Bin_{k}(x_{i}^{e}),s_{i}Bin_{k}(x_{i}^{o})$
together with the weight $s_{i}\in[0,1]$. Second,
$f_{2}$ takes these triples for each coordinate and computes the
$d_{in}+1$ weighted binary representations $\left(p_{i}Bin(v_{i})\right)_{i=1,\dots,d_{in}+1}$
of the vertices $v_{i}$ of the simplex that contains $x$. Finally,
$f_{3}$ applies the circuit in parallel to each of these representations
before summing them up.

\textbf{1.} The first function is applied in parallel to each coordinate.
And for each coordinate $x_{i}\in[a_{i},a_{i}+w]$ we use recursion
on the scale of the grid, we first map $x_{i}$ to the weight $s_{1,i}=\frac{x_{i}-a_{i}}{w}$
and the weighted even and odd representations $r_{i,1,1}^{e}=s_{1,i}-1$
and $r_{i,1,1}^{o}=s_{1,i}$ (there is a single even grid point at
$a_{i}$ and one odd one at $a_{i}+w$ corresponding to the integer
$0$ and $1$ respectively, with binary representations $-1$ and
$1$).

For the recursion, we assume that we are given $s_{i,m}\in[0,1]$,
$r_{i,m}^{e}=(1-s_{i,m})Bin_{m}(x_{i,m}^{e})$ and $r_{i,m}^{o}=(1-s_{i,m})Bin_{m}(x_{i,m}^{o})$
for $1\leq m<k$. First note that $s_{i,m+1}=2\min\{s_{i,m},1-s_{i,m}\}$
$x_{i,m+1}^{e}=\begin{cases}
2x_{i,m}^{e} & \text{ if }s_{i,m}\leq0.5\\
2x_{i,m}^{o} & \text{ if }s_{i,m}\geq0.5
\end{cases}$ (the jump at $s_{i,m}=0.5$ does not matter, because all things that
depend on $x_{i,m+1}^{e}$ are multiplied $(1-s_{i,m+1})$ which equals
$0$ at the jump) and $x_{i,m+1}^{o}=x_{i,m}^{e}+x_{i,m}^{o}$. Let
us first define a helper function $g:\{(a,b):\left|a\right|+\left|b\right|=1\}\to\mathbb{R}^{4}$
whose values on the $\ell_{1}$ unit ball interpolate linearly between
8 points located at the corner of the ball and the midpoints between
corners, taking values:
\begin{align*}
g(+1,0) & =(+1,-1;0,0) & g(+\frac{1}{2},+\frac{1}{2}) & =(0,0;+1,-1)\\
g(0,+1) & =(+1,-1;0,0) & g(-\frac{1}{2},+\frac{1}{2}) & =(0,0;-1,+1)\\
g(-1,0) & =(-1,-1;0,0) & g(-\frac{1}{2},-\frac{1}{2}) & =(0,0;-1,-1)\\
g(0,-1) & =(-1,-1;0,0) & g(+\frac{1}{2},-\frac{1}{2}) & =(0,0;-1,+1).
\end{align*}
You should think of $g$ as taking a mix of two binary numbers $(1-s)a$
and $sb$ for $a,b\in\{-1,1\}$, there are then two cases: 
\begin{itemize}
\item If $s=\frac{1}{2}$, $g$ returns two pairs, with the first pair being
a binary representation with zero weight, and the second pair a binary
representation of $a+b$ with weight $1$.
\item If $s=0,1$, $g$ returns two pairs, with the first being a binary
representation of $2a$ or $2b$ (depending on whether $s=0$ or $s=1$)
with weight $1$, and a second binary pair with weight 0.
\end{itemize}
The homogeneous extension of $g$ from the unit disk to the plane
can be represented by a finite number of neurons organized in at most
two layers.

We may now compute $r_{i,m+1}^{e},r_{i,m+1}^{o}$ according to the
formula
\begin{align*}
r_{i,m+1,j}^{e} & =g_{1}(r_{i,m,j}^{e},r_{i,m,j}^{o})\vee g_{2}(r_{i,m,j-1}^{e},r_{i,m,j-1}^{o})\\
r_{i,m+1,j}^{o} & =g_{3}(r_{i,m,j}^{e},r_{i,m,j}^{o})\vee g_{4}(r_{i,m,j-1}^{e},r_{i,m,j-1}^{o}),
\end{align*}
and 
\begin{align*}
r_{i,m+1,m+1}^{e} & =g_{2}(r_{i,m,m}^{e},r_{i,m,m}^{o})\\
r_{i,m+1,m+1}^{o} & =g_{4}(r_{i,m,m}^{e},r_{i,m,m}^{o}),
\end{align*}
Note that since $g_{2}(a,b)\leq g_{1}(a,b)$, one could remove it
from the formula. The maximum $\vee$ here plays the role of a sum
of two binary numbers, and thankfully we never have to sum two $1$s
so we are fine: indeed in the first case $g_{2}$ always outputs $-1$,
and in the second case $g_{3}$ is $1$ if both $Bin(x_{i,m}^{e})_{j}$
and $Bin(x_{i,m}^{o})_{j}$ are $1$ and $g_{4}$ is $1$ if $Bin(x_{i,m}^{e})_{j-1}\neq Bin(x_{i,m}^{o})_{j-1}$,
but since $x_{i,m}^{e}$ and $x_{i,m}^{o}$ are adjacent integer,
if they differ at one digit, they must differ at all subsequent digits,
i.e if $Bin(x_{i,m}^{e})_{j-1}\neq Bin(x_{i,m}^{o})_{j-1}$ then $Bin(x_{i,m}^{e})_{j}\neq Bin(x_{i,m}^{o})_{j}$
too.

We can therefore implement this recursion as a ResNet with parameters
$\theta_{1}$ to compute $(s_{i,k},r_{i,k,\cdot}^{e},r_{i,k,\cdot}^{o})$.
In general the map from the level $m$ grid $(s_{i,m},r_{i,m,\cdot}^{e},r_{i,m,\cdot}^{o})$
to the level $k$ grid $(s_{i,k},r_{i,k,\cdot}^{e},r_{i,k,\cdot}^{o})$
is $O(\sqrt{k}2^{k-m})$ -Lipschitz, because each $r_{i,k}^{e}$ depends
only one of the $r_{i,m,j}^{e}$s or $r_{i,m,j}^{o}$s (the one that
attains all the maxima), and this dependence is of order $2^{k-m}$.
Furthermore, since all activations are of order $1$, we have 
\[
R(\theta_{1})=O\left(d_{in}\sum_{m=1}^{k}\sqrt{k}2^{k-m}\right)=O\left(\sqrt{d_{in}k}2^{k}\right).
\]

\textbf{2.} For the second part, we are given the coordinate weights
$s_{i}$ and weighted representations $r_{i}^{e},r_{i}^{o}$ of the
closest even and odd grid coordinate (we drop the $k$ index $s_{i,k},r_{i,k}^{e},r_{i,k}^{o}$
in the previous part of the proof, since we are only working on the
finest grid), and our goal is to compute the weighted binary representations
of the vertices of the simplex containing $x$. That is for all $m=0,\dots,d_{in}$
and $i=1,\dots,d_{in}$ we need to compute the weights $p_{m}=(s_{\pi^{-1}(m+1)}-s_{\pi^{-1}(m)})$for
each vertices and the weighted binary representations:
\[
A_{m,i}=(s_{\pi^{-1}(m+1)}-s_{\pi^{-1}(m)})\begin{cases}
Bin(x_{i}^{e}) & m\geq\pi(i)\\
Bin(x_{i}^{o}) & m\leq\pi(i)-1
\end{cases}
\]
where $\pi$ is the permutation that sorts the $s_{i}$s, and we we
define $s_{\pi^{-1}(0)}=0$ and $s_{\pi^{-1}(d_{in}+1)}=1$ so that
all differences $(s_{\pi^{-1}(m+1)}-s_{\pi^{-1}(m)})$ are positive
and sum up to 1.

We first sort the $s_{1},\dots,s_{d_{in}}$ into another list $t_{1},\dots,t_{d}$
(i.e. $t_{m}=s_{\pi^{-1}(m)}$), which can be done with a 'sorting
network' where any two pair values $x,y$ are sorted using the formula
$(x\wedge y,x\vee y)$. This requires a $O((\log d_{in})^{2})$ depth
and $O(d_{in}(\log d_{in})^{2})$ comparisons. We then also define
$t_{0}=0$ and $t_{d_{in}+1}=1$.

Now note that because $\pi$ sorts the $s_{i}$, if $m\geq\pi(i)$
then $t_{m}\geq s_{i}$ and if $m\leq\pi(i)-1$ then $t_{m+1}\leq s_{i}$,
we can therefore rewrite
\begin{align*}
A_{m,i} & =(t_{m+1}-t_{m})\begin{cases}
Bin(x_{i}^{e}) & t_{m}\geq s_{i}\\
Bin(x_{i}^{o}) & t_{m+1}\leq s_{i}.
\end{cases}
\end{align*}
This can be rewritten
\[
A_{m,i}=\left(t_{m+1}-[s_{i}]_{t_{m}}^{t_{m+1}}\right)Bin(x_{i}^{e})+\left([s_{j}]_{t_{i}}^{t_{i+1}}-t_{i}\right)Bin(x_{j}^{o})
\]
 in terms of the clamp function $[x]_{a}^{b}=(x\vee a)\wedge b$,
using the fact that 
\begin{align*}
\left(t_{m+1}-[s_{i}]_{t_{m}}^{t_{m+1}}\right) & =\begin{cases}
t_{m+1}-t_{m} & t_{m}\geq s_{i}\\
0 & t_{m+1}\leq s_{i}
\end{cases}\\
\left([s_{i}]_{t_{m}}^{t_{m+1}}-t_{m}\right) & =\begin{cases}
0 & t_{m}\geq s_{i}\\
t_{m+1}-t_{m} & t_{m+1}\leq s_{i}
\end{cases}.
\end{align*}
Writing $d_{m,i}^{e}=\left(t_{m+1}-[s_{i}]_{t_{m}}^{t_{m+1}}\right)$
and $d_{m,i}^{e}=\left([s_{i}]_{t_{m}}^{t_{m+1}}-t_{m}\right)$, the
$A_{ij}$ can now be expressed in terms of the weighted binary representations
$r_{i}^{e}=(1-s_{i})Bin(x_{i}^{e})$ and $r_{i}^{o}=s_{i}Bin(x_{i}^{e})$
instead
\begin{align*}
A_{m,i} & =[r_{i}^{e}]_{-d_{m,i}^{e}}^{+d_{m,i}^{e}}+[r_{i}^{o}]_{-d_{m,i}^{o}}^{+d_{m,i}^{o}}
\end{align*}
where we used the fact that $d_{m,i}^{e}\leq(1-s_{i})$ and $d_{m,i}^{o}\leq s_{i}$.

We can build a ResNet layers with parameters $\theta_{2}$ that represent
this second part of the function, with $R(\theta_{2})=O(kd_{in}(\log d_{in})^{2})$.

\textbf{3.} Finally we have to evaluate the circuit $C:\{0,1\}^{d_{in}\times k}\mapsto\{0,1\}^{d_{out}\times m}$
in parallel to $A_{0},\dots,A_{d_{in}}$ and sum up the outputs. We
use Proposition \ref{prop:ResNet_repr_circuit} to convert $C$ into
a neural network that is repeated in parallel $d_{in}+1$ times, yielding
$B_{0},\dots,B_{d_{in}}\in\mathbb{R}^{d_{out}\times m}$. Since $A_{i}=w_{i}Bin(v_{i})$
for weights $w_{0},\dots,w_{d_{in}}\geq0$ that sum up to one, and
for vectors $v_{0},\dots,v_{d_{in}}$, we know that $B_{i}=w_{i}C(Bin(v_{i}))$.
The final outputs of the network are then obtained as 
\[
b\sum_{j=1}^{m}2^{-j}\left(1+\sum_{i=0}^{d_{in}}B_{i,\cdot,j}\right)=\sum_{j=1}^{m}2^{-j+1}\sum_{i=0}^{d_{in}}w_{i}C\left(\frac{Bin(v_{i})+1}{2}\right)\in[0,b]^{d_{out}}.
\]

The parameters $\theta_{3}$ of this last part will satisfy
\[
R(\theta_{3})\leq C\sqrt{(d_{in}+1)d_{out}}\left|C\right|_{p=\frac{2}{3}}.
\]
Overall, the parameters $\theta$ will satisfy 
\begin{align*}
R(\theta) & \leq R(\theta_{1})Lip(f_{\theta_{3}}\circ f_{\theta_{2}})+R(\theta_{2})Lip(f_{\theta_{3}})+R(\theta_{3})\\
 & =O\left(d_{in}\sqrt{k}2^{k}\sqrt{d_{out}}+kd_{in}(\log d_{in})^{2}\sqrt{d_{out}}+\sqrt{(d_{in}+1)d_{out}}\left|C\right|_{p=\frac{2}{3}}\right)\\
 & =O\left(\sqrt{d_{out}}\left(d_{in}\sqrt{k}2^{k}+kd_{in}(\log d_{in})^{2}+\sqrt{(d_{in}+1)}\left|C\right|_{p=\frac{2}{3}}\right)\right)
\end{align*}
\end{proof}

The RHS of Theorem \ref{thm:main_sandwich} then follows directly from the representation of HTMC functions as sums of Tetrakis function of Proposition \ref{prop:TK_decomposition} from Section \ref{subsec:Tetrakis_functions} along with some of the basic properties of the ResNet norm from Section \ref{subsec:Basic-properties-ResNet}:

\begin{thm} \label{thm:RHS_sandwich}
For any $\omega>1$, and any $\delta>0$, we define $\gamma=\omega-\delta$
and $\alpha=\frac{1}{\omega-\delta}$, then there is a constant $c_{\omega,\delta}$
such that 
\[
\left\Vert f\right\Vert _{R^{\omega}}^{\omega}\lesssim\sqrt{\frac{1}{\alpha}d_{out}d_{in}}\left(d_{in}\left\Vert f\right\Vert _{C^{\alpha}}^{\frac{1}{\alpha}}+\left\Vert f\right\Vert _{M_{p=\frac{2}{3}}^{\gamma}(L_{\infty})}^{\gamma}\right),
\]
where $D$ is the diagonal matrix with diagonal entries equal to the
sidelengths of the hyper-rectangle that contains the input distribution.
\end{thm}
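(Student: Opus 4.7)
The plan is to assemble the RHS bound by directly feeding the Tetrakis decomposition of Proposition~\ref{prop:TK_decomposition} into the ResNet representation of Tetrakis functions from Proposition~\ref{prop:TK_representation}, and then collapsing the parallel composition using the subadditivity property of $R(\cdot)^{2/3}$ established inside the proof of Proposition~\ref{prop:properties_ResNet_norm}(3). Since $\alpha = 1/\gamma$ with $\gamma = \omega-\delta$, the Tetrakis level chosen by the decomposition is $M(k) = \lceil \tfrac{1}{\alpha}\log_2\|f\|_{C^\alpha} + \tfrac{1}{2}\log_2 d_{in} + \tfrac{1}{\alpha} k \rceil$, and the circuits satisfy $|C_k|_{p=2/3}\lesssim \|f\|_{H^\gamma}^\gamma 2^{\gamma k}$. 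These two scales, $2^{M(k)}$ and $|C_k|_{2/3}$, will drive the two terms on the RHS.

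Concretely, first I would fix a target accuracy $\epsilon = 2^{-K}$, apply Proposition~\ref{prop:TK_decomposition} to obtain $\tilde f_K = \sum_{k=k_{min}}^{K} 2^{-k+3}\, TK_{M(k)}[C_k]$ with $\|f-\tilde f_K\|_\infty \le \epsilon$. Second, I would invoke Proposition~\ref{prop:TK_representation} for each $k$ to realize $TK_{M(k)}[C_k]$ as a ResNet of $R$-norm at most
\[
O\!\left(\sqrt{d_{out}}\bigl(d_{in}\sqrt{M(k)}\,2^{M(k)} + d_{in}(\log d_{in})^2 + \sqrt{d_{in}+1}\,|C_k|_{2/3}\bigr)\right),
\]
and multiply the output scalings by $2^{-k+3}$ (which, since $R(\cdot)$ is a weighted $\ell^1$ on the parameters and only the last linear layer is rescaled, multiplies the $R$-norm by a constant times $2^{-k}$). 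Plugging in $2^{M(k)} \asymp \|f\|_{C^\alpha}^{1/\alpha}\sqrt{d_{in}}\, 2^{k/\alpha} = \|f\|_{C^\alpha}^{1/\alpha}\sqrt{d_{in}}\,2^{\gamma k}$ and $|C_k|_{2/3}\lesssim\|f\|_{H^\gamma}^\gamma 2^{\gamma k}$ yields, for each summand,
\[
r_k \;\lesssim\; \sqrt{d_{out} d_{in}/\alpha}\Bigl(d_{in}\|f\|_{C^\alpha}^{1/\alpha} + \|f\|_{H^\gamma}^\gamma\Bigr) \cdot \sqrt{k}\, 2^{k(\gamma-1)},
\]
up to the innocuous $d_{in}(\log d_{in})^2$ term which is dominated by the main contributions.

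Third, I would stack these $K-k_{min}+1$ ResNets in parallel. From the computation inside the proof of Proposition~\ref{prop:properties_ResNet_norm}(3), parallel composition is $\tfrac{2}{3}$-subadditive, i.e.\ $R(\theta_{k_{min}} | \cdots | \theta_K)^{2/3} \le \sum_k R(\theta_k)^{2/3} \le \sum_k r_k^{2/3}$. Because $\gamma>1$, the sum $\sum_{k\le K} r_k^{2/3}$ is dominated by its last term $r_K^{2/3}$ via a geometric series (absorbing the constant factor into $c_{\omega,\delta}$), so $R(\theta) \lesssim r_K$. Finally, to extract the $R^\omega$ norm I would take the supremum in $K$ of $R(\theta)\cdot\epsilon^{\omega-1} = r_K\cdot 2^{-K(\omega-1)}$. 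Since $r_K\propto \sqrt{K}\,2^{K(\gamma-1)}$ and $\gamma-\omega = -\delta<0$, the factor $\sqrt{K}\,2^{-K\delta}$ is bounded by a constant $c_\delta$ depending only on $\delta$, giving
\[
\|f\|_{R^\omega}^\omega \;\lesssim\; \sqrt{\tfrac{1}{\alpha}d_{out}d_{in}}\Bigl(d_{in}\|f\|_{C^\alpha}^{1/\alpha} + \|f\|_{H^\gamma}^\gamma\Bigr),
\]
as claimed.

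The main obstacle I anticipate is bookkeeping: matching the exponents in Proposition~\ref{prop:TK_representation} against the geometric growth $2^{\gamma k}$ coming from both $M(k)$ and $|C_k|_{2/3}$, and verifying that the slack $\delta = \omega - \gamma$ is exactly what is needed to turn the logarithmic $\sqrt{K}$ and the $d_{in}(\log d_{in})^2$ terms into constants absorbed by $c_{\omega,\delta}$. A small side-subtlety is that the $2^{-k+3}$ prefactor must indeed be implemented without inflating $R(\theta)$ by more than a constant, which holds because scaling the outputs amounts to scaling $W_{out}$ (and the corresponding $D_L$), so the weighted $\ell^1$ cost scales linearly in that prefactor.
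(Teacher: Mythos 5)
Your proposal follows essentially the same route as the paper: apply Proposition~\ref{prop:TK_decomposition} to get the Tetrakis decomposition $\tilde f_K = \sum_k 2^{-k+3}\,TK_{M(k)}[C_k]$, realize each summand via Proposition~\ref{prop:TK_representation}, stack in parallel, and use the $\delta$-slack (since $\gamma=\omega-\delta<\omega$) to absorb the logarithmic factors when taking the supremum over $\epsilon=2^{-K}$.

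One genuine difference: at the parallel-stacking step, you use the $\tfrac{2}{3}$-subadditivity $R(\theta_{k_{\min}}|\cdots|\theta_K)^{2/3}\le\sum_k r_k^{2/3}$ and then exploit the geometric growth $r_k^{2/3}\propto k^{1/3}2^{\frac{2}{3}(\gamma-1)k}$ to conclude $R(\theta)\lesssim r_K$. The paper instead applies the weaker consequence $\|\sum_k g_k\|_R\le\sqrt{K-k_{\min}}\sum_k\|g_k\|_R$ and lets the extra $\sqrt{K-k_{\min}}$ be eaten by $\epsilon^\delta$ at the end along with $\sqrt{M(k)}$. Your variant is in fact a bit sharper, though both land within the constant $c_{\omega,\delta}$. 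Note that your geometric-domination step requires $\gamma=\omega-\delta>1$; this is implicitly assumed (otherwise $\alpha=1/\gamma$ would not be a valid H\"older exponent), so it is not a gap, but it is worth stating.

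The place where your write-up is slightly loose, and where the paper's own proof does one more thing you did not anticipate, is the final supremum. The factor you call $\sqrt{K}\,2^{-K\delta}$ is really $\sqrt{M(K)}\cdot\epsilon^\delta$, and $M(K)$ involves $\log_2\|f\|_{C^\alpha}$ while $K$ ranges from $k_{\min}\approx-\log_2\|f\|_\infty$ (which may be negative) upward. After reparametrizing, a residual factor $\|f\|_\infty^\delta$ survives; it cannot be absorbed into $c_{\omega,\delta}$. The paper closes the loop by invoking Proposition~\ref{prop:properties_ResNet_norm}(1) to get $\|f\|_\infty\lesssim\|f\|_{R^\omega}$, moving $\|f\|_{R^\omega}^\delta$ to the left-hand side. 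You flagged this area as ``bookkeeping,'' which is fair, but the fix is not purely arithmetic: it requires a self-referential step using the lower bound on $\|f\|_{R^\omega}$ that you did not mention. With that addition your argument matches the paper's and is correct.
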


\begin{proof}
Proposition \ref{prop:TK_decomposition} guarantees that $\tilde{f}_{K}=\sum_{k=k_{min}}^{K}2^{-k+3}TK_{M(k)}[C_{k}]$
approximates $f$ within an $2^{-K}$ error. Proposition \ref{prop:TK_representation}
then gives us
\begin{align*}
\left\Vert \tilde{f}_{K}\right\Vert _{R} & \leq\sqrt{K-k_{min}}\sum_{k=k_{min}}^{K}\left\Vert 2^{-k+3}TK_{M(k)}[C_{k}]\right\Vert _{R}\\
 & \lesssim\sqrt{d_{out}\log_{2}\frac{\left\Vert f\right\Vert _{\infty}}{\epsilon}}\sum_{k=k_{min}}^{K}2^{-k+3}\left(d_{in}\sqrt{M(k)}2^{M(k)}+d_{in}(\log d_{in})^{2}+\sqrt{d_{in}+1}\left|C\right|_{p=\frac{2}{3}}\right)\\
 & \lesssim\sqrt{d_{out}d_{in}\log_{2}\frac{\left\Vert f\right\Vert _{\infty}}{\epsilon}}\sum_{k=k_{min}}^{K}d_{in}\left\Vert f\right\Vert _{C^{\alpha}}^{\frac{1}{\alpha}}2^{(\frac{1}{\alpha}-1)k}+\left\Vert f\right\Vert _{M_{p}^{\gamma}(L_{\infty})}^{\gamma}2^{(\gamma-1)k}\\
 & \lesssim\sqrt{\frac{1}{\alpha}d_{out}d_{in}}\left(d_{in}\left\Vert f\right\Vert _{C^{\alpha}}^{\frac{1}{\alpha}}+\left\Vert f\right\Vert _{M_{p}^{\gamma}(L_{\infty})}^{\gamma}\right)\epsilon^{-(\omega-\delta-1)}\log_{2}\frac{\left\Vert f\right\Vert _{\infty}}{\epsilon}.
\end{align*}
This then implies that 
\begin{align*}
\left\Vert f\right\Vert _{R^{\omega}}^{\omega} &\leq\sqrt{\frac{1}{\alpha}d_{out}d_{in}}\left(d_{in}\left\Vert f\right\Vert _{C^{\alpha}}^{\frac{1}{\alpha}}+\left\Vert f\right\Vert _{M_{p}^{\gamma}(L_{\infty})}^{\gamma}\right)\left\Vert f\right\Vert _{\infty}^{\delta}\max_{\epsilon\leq\left\Vert f\right\Vert _{\infty}}\left(\frac{\left\Vert f\right\Vert _{\infty}}{\epsilon}\right)^{-\delta}\log_{2}\frac{\left\Vert f\right\Vert _{\infty}}{\epsilon} \\ &\lesssim\sqrt{\frac{1}{\alpha}d_{out}d_{in}}\left(d_{in}\left\Vert f\right\Vert _{C^{\alpha}}^{\frac{1}{\alpha}}+\left\Vert f\right\Vert _{M_{p}^{\gamma}(L_{\infty})}^{\gamma}\right)\left\Vert f\right\Vert _{\infty}^{\delta}.
\end{align*}
And since $\left\Vert f\right\Vert _{\infty}^{\delta}\leq\left\Vert f\right\Vert _{R^{\omega}}^{\delta}$,
we obtain
\[
\left\Vert f\right\Vert _{R^{\omega}}\lesssim\left(\frac{1}{\alpha}d_{out}d_{in}\right)^{\frac{1}{\omega-\delta}}\left(d_{in}^{\frac{1}{\omega-\delta}}\left\Vert f\right\Vert _{C^{\alpha}}+\left\Vert f\right\Vert _{M_{p}^{\gamma}(L_{\infty})}\right).
\]
\end{proof}

\section{Conclusion}

This paper establishes a connection between two widely distinct models of
real-valued computations: the discrete paradigm of binary computation, and the
continuous paradigm of DNN computation. This opens the door to a whole
range of similar results for different neural network architectures,
or different notions of parameter complexity.

Looking forward, this is just the first step towards proving how DNNs
implicitly minimize circuit size. To achieve this, we will need to
study the training dynamics of DNNs under gradient descent, which
are notoriously hard to capture. I believe that the convexity of the
HTMC regime in function space will be key towards this goal, because
most previous DNN convergence proofs follow from the emergence of
a 'hidden' limiting convexity \citep{Chizat2018,jacot2018neural}.
Some of my preliminary inquiries are promising: it appears that the
function space convexity translates into an asymptotic absence of
barriers in parameter space, which suggests that the main difficulty
will lie in proving that gradient descent manages to escape saddles.

In comparison to a brute force search, we can expect ResNet to have a similar advantage as applying Frank-Wolfe on the Tetrakis functions: if the optimal
circuit can be written as a sum of multiple circuits then one can
learn each individual circuit separately, leading to an exponential complexity in the size of the largest subcircuit, which can be much smaller. Early inquiries
suggest that each saddle escape roughly corresponds to the learning
of a new sub-circuit, or a step of the Frank-Wolfe algorithm.

Even more exciting, the subadditivity under composition could make
it possible to learn a composition of simple circuits by learning
each sub-circuits separately. Previous results indicate that this
type of compositional learning is possible, but it seems to require
the presence of correlations between the intermediate steps and the
outputs to guide the learning process \citep{malach_2024_autoregr_learn_algo,cagnetta_2024_hierarchy_correlations}.
This seems to match the way humans learn and teach: explaining intermediate
steps can make extremely complex ideas learnable.

if successful, this would describe DNNs as highly general
statistical models, capable of learning any circuit/algorithm as long
as it can be decomposed into the sum and composition of simple circuits,
thus describing a very large family of tasks all learnable with the
same model. This could explain why DNNs are the only models that have
been able to approach human intelligence in its generality.

\bibliographystyle{plain}
\bibliography{main}

@inproceedings{
jacot_2022_BN_rank,
title={Implicit Bias of Large Depth Networks: a Notion of Rank for Nonlinear Functions},
author={Arthur Jacot},
booktitle={The Eleventh International Conference on Learning Representations },
year={2023},
url={https://openreview.net/forum?id=6iDHce-0B-a}
}

@inproceedings{jacot_2023_bottleneck2,
 author = {Jacot, Arthur},
 booktitle = {Advances in Neural Information Processing Systems},
 editor = {A. Oh and T. Naumann and A. Globerson and K. Saenko and M. Hardt and S. Levine},
 pages = {23607--23629},
 publisher = {Curran Associates, Inc.},
 title = {Bottleneck Structure in Learned Features: Low-Dimension vs Regularity Tradeoff},
 url = {https://proceedings.neurips.cc/paper_files/paper/2023/file/4a6695df88f2de0d49f875189ea181ef-Paper-Conference.pdf},
 volume = {36},
 year = {2023}
}

@misc{jacot-2021-DLN-Saddle,
      title={Saddle-to-Saddle Dynamics in Deep Linear Networks: Small Initialization Training, Symmetry, and Sparsity}, 
      author={Arthur Jacot and Fran\c{c}ois Ged and Berfin \c{S}im\c{s}ek and Cl\'ement Hongler and Franck Gabriel},
      year={2022},
      eprint={2106.15933},
      archivePrefix={arXiv},
      primaryClass={stat.ML}
}

@incollection{jacot2018neural,
	author = {Jacot, Arthur and Gabriel, Franck and Hongler, Cl\'ement},
	booktitle = {Advances in Neural Information Processing Systems 31},
	pages = {8580--8589},
	publisher = {Curran Associates, Inc.},
	title = {{Neural Tangent Kernel: Convergence and Generalization in Neural Networks}},
	url = {http://papers.nips.cc/paper/8076-neural-tangent-kernel-convergence-and-generalization-in-neural-networks.pdf},
	year = {2018},
	Bdsk-Url-1 = {http://papers.nips.cc/paper/8076-neural-tangent-kernel-convergence-and-generalization-in-neural-networks.pdf}}

@inproceedings{jacot2024_covering_generalization,
title={How {DNN}s break the Curse of Dimensionality: Compositionality and Symmetry Learning},
author={Arthur Jacot and Seok Hoan Choi and Yuxiao Wen},
booktitle={The Thirteenth International Conference on Learning Representations},
year={2025},
url={https://openreview.net/forum?id=UvpuGrd6ey}
}

@incollection{Chizat2018,
	author = {Chizat, L\'{e}na\"{\i}c and Bach, Francis},
	booktitle = {Advances in Neural Information Processing Systems 31},
	pages = {3040--3050},
	publisher = {Curran Associates, Inc.},
	title = {{On the Global Convergence of Gradient Descent for Over-parameterized Models using Optimal Transport}},
	url = {http://papers.nips.cc/paper/7567-on-the-global-convergence-of-gradient-descent-for-over-parameterized-models-using-optimal-transport.pdf},
	year = {2018},
	Bdsk-Url-1 = {http://papers.nips.cc/paper/7567-on-the-global-convergence-of-gradient-descent-for-over-parameterized-models-using-optimal-transport.pdf}}

@article{woodworth2019kernel,
	author = {Woodworth, Blake and Gunasekar, Suriya and Lee, Jason and Soudry, Daniel and Srebro, Nathan},
	journal = {arXiv preprint arXiv:1906.05827},
	title = {Kernel and Deep Regimes in Overparametrized Models},
	year = {2019}}

@misc{achour2021, 
	Archiveprefix = {arXiv},
	Author = {Achour, El Mehdi and Malgouyres, Fran{\c c}ois and Gerchinovitz, S{\'e}bastien},
	Eprint = {2107.13289},
	Primaryclass = {math.ST},
	Title = {Global minimizers, strict and non-strict saddle points, and implicit regularization for deep linear neural networks},
	Year = {2021}}

@inproceedings{
dai_2021_repres_cost_DLN,
title={Representation Costs of Linear Neural Networks: Analysis and Design},
author={Zhen Dai and Mina Karzand and Nathan Srebro},
booktitle={Advances in Neural Information Processing Systems},
editor={A. Beygelzimer and Y. Dauphin and P. Liang and J. Wortman Vaughan},
year={2021},
url={https://openreview.net/forum?id=3oQyjABdbC8}
}

@inproceedings{
Ongie_2020_repres_bounded_norm_shallow_ReLU_net,
title={A Function Space View of Bounded Norm Infinite Width ReLU Nets: The Multivariate Case},
author={Greg Ongie and Rebecca Willett and Daniel Soudry and Nathan Srebro},
booktitle={International Conference on Learning Representations},
year={2020},
url={https://openreview.net/forum?id=H1lNPxHKDH}
}

@InProceedings{Savarese_2019_repres_bounded_norm_shallow_ReLU_net_1D,
  title = 	 {How do infinite width bounded norm networks look in function space?},
  author =       {Savarese, Pedro and Evron, Itay and Soudry, Daniel and Srebro, Nathan},
  booktitle = 	 {Proceedings of the Thirty-Second Conference on Learning Theory},
  pages = 	 {2667--2690},
  year = 	 {2019},
  editor = 	 {Beygelzimer, Alina and Hsu, Daniel},
  volume = 	 {99},
  series = 	 {Proceedings of Machine Learning Research},
  month = 	 {25--28 Jun},
  publisher =    {PMLR},
  pdf = 	 {http://proceedings.mlr.press/v99/savarese19a/savarese19a.pdf},
  url = 	 {https://proceedings.mlr.press/v99/savarese19a.html}}

@article{bach2017_F1_norm,
  title={Breaking the curse of dimensionality with convex neural networks},
  author={Bach, Francis},
  journal={The Journal of Machine Learning Research},
  volume={18},
  number={1},
  pages={629--681},
  year={2017},
  publisher={JMLR. org}
}

@article{Birman_1967_covering_sobolev,
  title={PIECEWISE-POLYNOMIAL APPROXIMATIONS OF FUNCTIONS OF THE CLASSES {$W_{p}^{\alpha}$}},
  author={M. S. Birman and M. Z. Solomjak},
  journal={Mathematics of The USSR-Sbornik},
  year={1967},
  volume={2},
  pages={295-317},
  url={https://api.semanticscholar.org/CorpusID:120210661}
}

@article{weinan_2019_barron,
  title={Barron spaces and the compositional function spaces for neural network models},
  author={E, Weinan and Ma, Chao and Wu, Lei},
  journal={arXiv preprint arXiv:1906.08039},
  year={2019}
}

@article{jiang_2019_fantastic_generalization,
  title={Fantastic generalization measures and where to find them},
  author={Jiang, Yiding and Neyshabur, Behnam and Mobahi, Hossein and Krishnan, Dilip and Bengio, Samy},
  journal={arXiv preprint arXiv:1912.02178},
  year={2019}
}

@article{Schmidt-Hieber_2020_compositional_sparsity,
author = {Johannes Schmidt-Hieber},
title = {{Nonparametric regression using deep neural networks with ReLU activation function}},
volume = {48},
journal = {The Annals of Statistics},
number = {4},
publisher = {Institute of Mathematical Statistics},
pages = {1875 -- 1897},
keywords = {Additive models, minimax estimation risk, multilayer neural networks, Nonparametric regression, ReLU activation function, Wavelets},
year = {2020},
doi = {10.1214/19-AOS1875},
URL = {https://doi.org/10.1214/19-AOS1875}
}

@article{barron2019_path_norm_complexity,
  title={Complexity, Statistical Risk, and Metric Entropy of Deep Nets Using Total Path Variation},
  author={Barron, Andrew R and Klusowski, Jason M},
  journal={stat},
  volume={1050},
  pages={6},
  year={2019}
}

@inproceedings{
hsu_2021_generalization_distillation,
title={Generalization bounds via distillation},
author={Daniel Hsu and Ziwei Ji and Matus Telgarsky and Lan Wang},
booktitle={International Conference on Learning Representations},
year={2021},
url={https://openreview.net/forum?id=EGdFhBzmAwB}
}

@article{bartlett_2017_composition_generalization,
  title={Spectrally-normalized margin bounds for neural networks},
  author={Bartlett, Peter L and Foster, Dylan J and Telgarsky, Matus J},
  journal={Advances in neural information processing systems},
  volume={30},
  year={2017}
}

@article{wei_2019_generalization_without_Lipschitz,
  title={Data-dependent sample complexity of deep neural networks via lipschitz augmentation},
  author={Wei, Colin and Ma, Tengyu},
  journal={Advances in Neural Information Processing Systems},
  volume={32},
  year={2019}
}

@inproceedings{neyshabur2015_path_norm,
  title={Norm-based capacity control in neural networks},
  author={Neyshabur, Behnam and Tomioka, Ryota and Srebro, Nathan},
  booktitle={Conference on learning theory},
  pages={1376--1401},
  year={2015},
  organization={PMLR}
}

@article{poggio2024compositional,
  title={Compositional sparsity of learnable functions},
  author={Poggio, Tomaso and Fraser, Maia},
  journal={Bulletin of the American Mathematical Society},
  volume={61},
  number={3},
  pages={438--456},
  year={2024}
}

@article{danhofer2025position,
  title={Position: A Theory of Deep Learning Must Include Compositional Sparsity},
  author={Danhofer, David A and D'Ascenzo, Davide and Dubach, Rafael and Poggio, Tomaso},
  journal={arXiv preprint arXiv:2507.02550},
  year={2025}
}

@article{poggio_2017_curse_of_dim_compositionality,
  title={Why and when can deep-but not shallow-networks avoid the curse of dimensionality: a review},
  author={Poggio, Tomaso and Mhaskar, Hrushikesh and Rosasco, Lorenzo and Miranda, Brando and Liao, Qianli},
  journal={International Journal of Automation and Computing},
  volume={14},
  number={5},
  pages={503--519},
  year={2017},
  publisher={Springer}
}

@inproceedings{malach_2024_autoregr_learn_algo,
  title={Auto-regressive next-token predictors are universal learners},
  author={Malach, Eran},
  booktitle={Proceedings of the 41st International Conference on Machine Learning},
  pages={34417--34431},
  year={2024}
}

@Book{Hutter_04_uaibook,
  author =       "Marcus Hutter",
  title =        "Universal Artificial Intelligence:
                  Sequential Decisions based on Algorithmic Probability",
  publisher =    "Springer",
  address =      "Berlin",
  year =         "2005",
  isbn =         "3-540-22139-5",
  isbn-online =  "978-3-540-26877-2",
  pages =        "300 pages",
  doi =          "10.1007/b138233",
  url =          "http://www.hutter1.net/ai/uaibook.htm"
}

@misc{donoho_02_Kolmogorov_sampler,
author = {Donoho, David},
year = {2002},
month = {01},
pages = {},
title = {The Kolmogorov Sampler}
}

@article{Barron_1993_UniversalAB,
  title={Universal approximation bounds for superpositions of a sigmoidal function},
  author={Andrew R. Barron},
  journal={IEEE Trans. Inf. Theory},
  year={1993},
  volume={39},
  pages={930-945},
  url={https://api.semanticscholar.org/CorpusID:15383918}
}

@article{cagnetta_2024_hierarchy_correlations,
  title = {How Deep Neural Networks Learn Compositional Data: The Random Hierarchy Model},
  author = {Cagnetta, Francesco and Petrini, Leonardo and Tomasini, Umberto M. and Favero, Alessandro and Wyart, Matthieu},
  journal = {Phys. Rev. X},
  volume = {14},
  issue = {3},
  pages = {031001},
  numpages = {24},
  year = {2024},
  month = {Jul},
  publisher = {American Physical Society},
  doi = {10.1103/PhysRevX.14.031001},
  url = {https://link.aps.org/doi/10.1103/PhysRevX.14.031001}
}

@article{elhage2022superposition,
   title={Toy Models of Superposition},
   author={Elhage, Nelson and Hume, Tristan and Olsson, Catherine and Schiefer, Nicholas and Henighan, Tom and Kravec, Shauna and Hatfield-Dodds, Zac and Lasenby, Robert and Drain, Dawn and Chen, Carol and Grosse, Roger and McCandlish, Sam and Kaplan, Jared and Amodei, Dario and Wattenberg, Martin and Olah, Christopher},
   year={2022},
   journal={Transformer Circuits Thread},
   url={https://transformer-circuits.pub/2022/toy_model/index.html}
}

@article{kaplan2020scaling,
  title={Scaling laws for neural language models},
  author={Kaplan, Jared and McCandlish, Sam and Henighan, Tom and Brown, Tom B and Chess, Benjamin and Child, Rewon and Gray, Scott and Radford, Alec and Wu, Jeffrey and Amodei, Dario},
  journal={arXiv preprint arXiv:2001.08361},
  year={2020}
}

@inproceedings{Ilango_2020_multidim_MCSP_NP_hard,
author = {Ilango, Rahul and Loff, Bruno and Oliveira, Igor C.},
title = {NP-hardness of circuit minimization for multi-output functions},
year = {2020},
isbn = {9783959771566},
publisher = {Schloss Dagstuhl--Leibniz-Zentrum fuer Informatik},
address = {Dagstuhl, DEU},
url = {https://doi.org/10.4230/LIPIcs.CCC.2020.22},
doi = {10.4230/LIPIcs.CCC.2020.22},
articleno = {22},
numpages = {36},
keywords = {MCSP, boolean circuit, circuit minimization, communication complexity},
location = {Virtual Event, Germany},
booktitle = {CCC '20}
}

@InProceedings{hitchcock_2015_NP_complete_MCSP,
  author =	{Hitchcock, John M. and Pavan, A.},
  title =	{{On the NP-Completeness of the Minimum Circuit Size Problem}},
  booktitle =	{35th IARCS Annual Conference on Foundations of Software Technology and Theoretical Computer Science (FSTTCS 2015)},
  pages =	{236--245},
  series =	{Leibniz International Proceedings in Informatics (LIPIcs)},
  ISBN =	{978-3-939897-97-2},
  ISSN =	{1868-8969},
  year =	{2015},
  volume =	{45},
  editor =	{Harsha, Prahladh and Ramalingam, G.},
  publisher =	{Schloss Dagstuhl -- Leibniz-Zentrum f{\"u}r Informatik},
  address =	{Dagstuhl, Germany},
  URL =		{https://drops.dagstuhl.de/entities/document/10.4230/LIPIcs.FSTTCS.2015.236},
  URN =		{urn:nbn:de:0030-drops-56613},
  doi =		{10.4230/LIPIcs.FSTTCS.2015.236},
  annote =	{Keywords: Minimum Circuit Size, NP-completeness, truth-table reductions, circuit complexity}
}

@article{giles2015_MLMC,
  title={Multilevel monte carlo methods},
  author={Giles, Michael B},
  journal={Acta numerica},
  volume={24},
  pages={259--328},
  year={2015},
  publisher={Cambridge University Press}
}

@misc{solomonoff_1960_preliminary,
  title={A preliminary report on a general theory of inductive inference},
  author={Solomonoff, Ray J},
  year={1960},
  publisher={Zator Company Cambridge, MA}
}

@article{Solomonoff_1964_part1,
  title={A Formal Theory of Inductive Inference. Part I},
  author={Ray J. Solomonoff},
  journal={Inf. Control.},
  year={1964},
  volume={7},
  pages={1-22},
  url={https://api.semanticscholar.org/CorpusID:271709362}
}

@article{Solomonoff_1964_part2,
  title={A Formal Theory of Inductive Inference. Part II},
  author={Ray J. Solomonoff},
  journal={Inf. Control.},
  year={1964},
  volume={7},
  pages={224-254},
  url={https://api.semanticscholar.org/CorpusID:1898103}
}

@article{kolmogorov_1965_three_approaches_kolg_complexity,
  title={Three approaches to the quantitative definition of information},
  author={Kolmogorov, Andrei N},
  journal={Problems of information transmission},
  volume={1},
  number={1},
  pages={1--7},
  year={1965}
}

@article{veness_2011_MC_AIXI,
  title={A monte-carlo aixi approximation},
  author={Veness, Joel and Ng, Kee Siong and Hutter, Marcus and Uther, William and Silver, David},
  journal={Journal of Artificial Intelligence Research},
  volume={40},
  pages={95--142},
  year={2011}
}

@InProceedings{chen_2023_diffusion_subspace_learning,
  title = 	 {Score Approximation, Estimation and Distribution Recovery of Diffusion Models on Low-Dimensional Data},
  author =       {Chen, Minshuo and Huang, Kaixuan and Zhao, Tuo and Wang, Mengdi},
  booktitle = 	 {Proceedings of the 40th International Conference on Machine Learning},
  pages = 	 {4672--4712},
  year = 	 {2023},
  volume = 	 {202},
  publisher =    {PMLR},
}

@InProceedings{oko2023diffusion,
  title = 	 {Diffusion Models are Minimax Optimal Distribution Estimators},
  author =       {Oko, Kazusato and Akiyama, Shunta and Suzuki, Taiji},
  booktitle = 	 {Proceedings of the 40th International Conference on Machine Learning},
  pages = 	 {26517--26582},
  year = 	 {2023},
  volume = 	 {202},
  series = 	 {Proceedings of Machine Learning Research},
  month = 	 {23--29 Jul},
  publisher =    {PMLR},
}

@article{chen_2024_neural_hilbert_ladder,
  title={Neural hilbert ladders: Multi-layer neural networks in function space},
  author={Chen, Zhengdao},
  journal={Journal of Machine Learning Research},
  volume={25},
  number={109},
  pages={1--65},
  year={2024}
}

@misc{Levin_webpage_hardness_of_search_MCSP,
  author={Levin, Leonid},
  title = {Hardness of Search Problems},
  howpublished = {\url{https://www.cs.bu.edu/fac/lnd/research/hard.htm}},
  note = {Accessed: 2025-09-26}
}

@article{Levin_1973_P_NP,
  author={Levin, Leonid},
  title={Universal search problems},
  journal={Problemy peredachi informatsii},
  volume={9.3},
  pages={115–116},
  year={1973}
}

@article{Trakhtenbrot_1984_history_perebor,
  author={Trakhtenbrot, Boris},
  journal={Annals of the History of Computing}, 
  title={A Survey of Russian Approaches to Perebor (Brute-Force Searches) Algorithms}, 
  year={1984},
  volume={6},
  number={4},
  pages={384-400},
  keywords={Software algorithms;Search methods;Polynomials;Information processing;Complexity theory;Artificial intelligence},
  doi={10.1109/MAHC.1984.10036}}

@inproceedings{Ilango_2023_MCSP_SAT,
author = {Ilango,Rahul},
year = {2023},
month = {11},
pages = {733-742},
title = {SAT Reduces to the Minimum Circuit Size Problem with a Random Oracle},
doi = {10.1109/FOCS57990.2023.00048}
}

@article{Bauer_2019_compositional_learning_bounded_width,
  title={On deep learning as a remedy for the curse of dimensionality in nonparametric regression},
  author={Benedikt Bauer and Michael Kohler},
  journal={The Annals of Statistics},
  year={2019},
  url={https://api.semanticscholar.org/CorpusID:51846323}
}

\end{document}